\def\thisismainpaper{0}   
\def\argmin{\mathop{\arg\min}}
\theoremstyle{definition}
\newtheorem{theorem}{Theorem}[section]
\newtheorem{lemma}[theorem]{Lemma}
\def\scale#1{{\small #1}}
\newcommand{\E}{{\rm I\kern-.3em E}}
\def\diag{\operatorname{diag}}
\def\dhat{\widehat{\Delta}}
\def\h1{H^{(1)}}
\title{Time-varying Mixing Matrix Design for Energy-efficient Decentralized Federated Learning}
\author{
\IEEEauthorblockN{Xusheng Zhang$^{*,1}$, Tuan Nguyen$^{*,2}$, and Ting He$^{2}$}
\IEEEauthorblockA{
$^1$ University of Oxford, Oxford, England, UK. Email: xusheng.zhang@cs.ox.ac.uk\\
$^2$ Pennsylvania State University, University Park, PA, USA. Email: \{tmn5319,tinghe\}@psu.edu 
}
\thanks{*Both authors contributed equally to the paper.}
\thanks{
Part of this work was presented at IEEE INFOCOM 2026~\cite{Zhang26INFOCOM}. 
}
}
\begin{document}
%
\maketitle
\begin{abstract}
We consider the design of mixing matrices to minimize the operation cost for decentralized federated learning (DFL) in wireless networks, with focus on minimizing the maximum per-node energy consumption. As a critical hyperparameter for DFL, the mixing matrix controls both the convergence rate and the needs of agent-to-agent communications, and has thus been studied extensively. However, existing designs mostly focused on minimizing the communication time, leaving open the minimization of per-node energy consumption that is critical for energy-constrained devices. This work addresses this gap through a theoretically-justified solution for mixing matrix design that aims at minimizing the maximum per-node energy consumption until convergence, while taking into account the broadcast nature of wireless communications. Based on a novel convergence theorem that allows arbitrarily time-varying mixing matrices, we propose a multi-phase  design framework that activates time-varying communication topologies under optimized budgets to trade off the per-iteration energy consumption and the convergence rate while balancing the energy consumption across nodes. Our evaluations based on real data have validated the efficacy of the proposed solution in combining the low energy consumption of sparse mixing matrices and the fast convergence of dense mixing matrices.   
\end{abstract}
\begin{IEEEkeywords}
Decentralized federated learning, mixing matrix design, energy consumption. 
\end{IEEEkeywords}

\IEEEpeerreviewmaketitle

\section{Introduction}

\emph{Decentralized federated learning (DFL)}~\cite{Lian17NIPS} is an emerging machine learning paradigm that allows distributed learning agents to collaboratively learn a shared model from the union of their local data without directly sharing the data. Instead of coordinating through a parameter server as in centralized {federated learning (FL)}~\cite{McMahan17AISTATS}, the agents participating in DFL directly exchange model updates with their neighbors through peer-to-peer communications, which are then aggregated locally~\cite{Kairouz21book}. DFL has attracted significant attention since its introduction, because compared to its centralized counterpart, DFL has better robustness by avoiding a single point of failure and better balances the communication loads across nodes without increasing the computational complexity~\cite{Lian17NIPS}.

Meanwhile, DFL still faces significant performance challenges due to the extensive data transfer between learning agents. 
Such challenges are particularly prominent for deep learning due to the large model size, where communication cost can dominate the total cost of the learning task~\cite{Luo20MLsys}, particularly in resource-constrained edge networks~\cite{chen2022federated}. 
This issue has attracted tremendous research interests in reducing the communication cost of DFL, such as methods for reducing the amount of data per communication through compression (e.g., \cite{Compression1}) and methods for reducing the number of communications through hyperparameter optimization (e.g., \cite{Chiu23JSAC}) or adaptive  communications (e.g., \cite{Singh20CDC}). 

In particular, the design of \emph{mixing matrix} used for local parameter aggregation plays a crucial role, as each non-zero off-diagonal entry in the mixing matrix will trigger an agent-to-agent communication. 
In this regard, most existing works focused on accelerating learning by minimizing the communication time, either measured in the maximum number of neighbors an agent communicates with~\cite{MATCHA22,Chiu23JSAC,hua2022efficient,le2023refined} or the number of time slots for scheduling all the communications~\cite{Herrera25OJCS}. However, in wireless networks such as HetNets \cite{chen2022federated}, device-to-device networks \cite{xing2021federated}, and IoT networks~\cite{pinyoanuntapong2022toward}, an often more important cost measure is energy consumption, which has received less attention. Only a few works have tried to optimize energy consumption during DFL~\cite{Chiu23JSAC,zhang2024energyefficient}, but they either did not consider the balance of energy consumption between nodes~\cite{Chiu23JSAC} or did not utilize the broadcast nature of wireless communications~\cite{zhang2024energyefficient}. 

In this work, we aim at filling this gap with the objective of \emph{minimizing the maximum energy consumption per node} for DFL to reach a given level of convergence,  
while taking into account the broadcast nature of wireless communications. Our solution is built upon a novel convergence theorem under time-varying mixing matrices, based on which we propose a multi-phase mixing matrix design framework together with concrete algorithms that designs randomized mixing matrices under optimized per-node budgets to optimally balance the convergence rate and the energy consumption at each node.

\subsection{Related Works}

\textbf{Decentralized federated learning.} First explored by \cite{Lian17NIPS} through Decentralized Parallel Stochastic Gradient Descent (D-PSGD), DFL removes the central server in \cite{McMahan17AISTATS} and enables model training over peer-to-peer networks. A central research question is how DFL performs compared to centralized FL, particularly in convergence rate, communication cost, and generalization. Subsequent studies  including \cite{D2ICML18,ICMLhonor,Xin21} advance DFL in both algorithms and theories, though the main focus remains on reducing the number of iterations needed for convergence. \looseness=-1

\textbf{Communication cost reduction.} 
As model sizes continue to grow, communication overhead has become a key bottleneck limiting the performance and reliability of DFL on wireless edge networks \cite{LiYangGSWCL21}. Existing work for reducing communication cost can be broadly categorized into three approaches. The first is to lower the cost per communication round using compression techniques \cite{Compression1,Compression2,Compression3,Zhang20INFOCOM}. The second is to reduce the total number of communication rounds 
\cite{sysml19,Ngu19INFOCOM,Wang19JSAC,Wang21JMLR}. A third line of work focuses on activating only selected subsets of links rather than all links simultaneously. In this direction, event-triggered mechanisms were introduced in \cite{Singh20CDC,Singh21JSAIT}, and optimized (possibly randomized) communication patterns were proposed in \cite{MATCHA22,Chiu23JSAC,Herrera25OJCS}.
  
In this work, we aim at designing the communication patterns as in \cite{MATCHA22,Chiu23JSAC,Herrera25OJCS}, which has the advantage of providing predictable performance compared to event-triggered mechanisms, but \emph{we consider a different objective}. 
While most existing works on communication design focused on minimizing the communication time, measured by the number of matchings \cite{MATCHA22,Chiu23JSAC}, the maximum degree \cite{hua2022efficient,le2023refined}, or the number of collision-free transmission slots \cite{Herrera25OJCS}, we focus on minimizing the maximum energy consumption per node, which is critical for wireless networks \cite{chen2022federated,xing2021federated,pinyoanuntapong2022toward}. Compared to the communication time, fewer works have tackled the optimization of energy consumption~\cite{Chiu23JSAC,zhang2024energyefficient,DeVos24IPDPSW}, and the existing solutions either ignored the balance of energy consumption across nodes~\cite{Chiu23JSAC} or ignored the broadcast nature of wireless communications~\cite{zhang2024energyefficient}. Some solution~\cite{DeVos24IPDPSW} even ignored the energy consumption by communications. 
\emph{This work aims at filling this gap by designing the communication patterns to minimize the maximum energy consumption per node, while considering broadcast communications.} 
While a few works considered broadcast-based DFL \cite{Chen23SPAWC, Herrera25OJCS}, they had different optimization objectives (e.g., maximizing \#successful links~\cite{Chen23SPAWC} or minimizing \#transmission slots~\cite{Herrera25OJCS}).

\textbf{Mixing matrix design in DFL.}
The mixing matrix, i.e., the matrix containing the local aggregation weights, is an important hyperparameter in DFL. The impact of the mixing matrix on the convergence rate of DFL is usually captured through its spectral gap~\cite{Lian17NIPS,neglia2020decentralized,jiang2023joint} or an equivalent parameter that denotes the discrepancy between the designed mixing matrix and the ideal mixing matrix under all-to-all communications~\cite{MATCHA22}. 
Although recent studies have pointed out additional  parameters that can affect convergence, such as the effective number of neighbors~\cite{vogels2022beyond} and the neighborhood heterogeneity~\cite{le2023refined}, these results do not invalidate the importance of the spectral gap.    
Based on the identified convergence parameters, several mixing matrix designs have been proposed to balance the convergence rate and the cost per iteration~\cite{MATCHA22,Chiu23JSAC,Herrera25OJCS,hua2022efficient,le2023refined}. 
In this regard, our mixing matrix design is also based on a parameter related to the spectral gap, but \emph{we build on a generalized convergence theorem allowing time-varying mixing matrices} (including random matrices with time-varying distributions), which enables a multi-phase design. 

\textbf{DFL over time-varying topology.}
Most gossip-based distributed optimization algorithms have been analyzed under fixed network topologies, with only a few works establishing convergence guarantees for time-varying topologies.
\cite{LiuChenZheng22} studies D-PSGD with multiple steps of local training and multiple steps of gossip, which can be viewed as periodically alternating between the full base topology and an empty topology.
\cite{Koloskova20ICML} provides the state-of-the-art convergence analysis for D-PSGD with very general (randomized) mixing matrices, subject only to a spectral condition in each period.
Algorithm subgradient-push \cite{NedicOlshevsky15} also handles time-varying topologies, but their convergence analyses rely on convexity of the loss functions and additional spectral assumptions on the mixing matrices.
Gradient tracking algorithms such as DIGing \cite{NedicOlshevskyShi17} and Acc-GT \cite{LiLin24} have likewise been studied over time-varying topologies  under convex loss functions.
The convergence analysis in \cite{HuangYuan22} is the most general to date, applicable to a broad class of algorithms, requiring neither convexity nor strong assumptions on the topology class; however, it still assumes a uniform lower bound on the spectral gap of the time-varying topologies, an assumption not made in this paper.
\emph{The convergence theorem presented here focuses on D-PSGD and applies to topologies with arbitrarily time-varying spectral gaps, covering both convex and non-convex objectives.}

\textbf{Adaptive communications.} Adapting communications during training, e.g., via adapting the communicated content (e.g., adaptive model compression~\cite{Honig22, Zhuansun24arXiv}, adaptive model pruning~\cite{WangXuPei25}, adaptive self-distillation~\cite{He22, LiWLDHD25}, adaptive parameter freezing~\cite{Chen21ICDCS}) or the act of communication (e.g., adapting communication period~\cite{Tchaye-Kondi24TMC}, adapting client selection~\cite{ZhangXRWC21}, adapting topology construction~\cite{Xu22}, and adapting both communication period and topology~\cite{LiaoXuXu24}), has been shown to improve the performance of FL.
Among these studies, only \cite{Xu22, LiaoXuXu24} address decentralized federated learning (DFL), the setting we study here. Their adaptive mechanisms, however, come with non-trivial computational overhead or system complexity: \cite{Xu22} integrates real-time deep reinforcement learning at every iteration, while \cite{LiaoXuXu24} relies on a central coordinator to compute complicated adaptive decisions. Moreover, the resulting adaptations depend on real-time decisions, and thus the behavior would be difficult to predict or analyze.
Our approach moves in a different direction. Based on a generalized convergence theorem, we can \emph{analytically characterize the impact of fine-grained adaptations of the act of communication}, without requiring real-time algorithmic decision-making (from a centralized orchestration), and our method remains compatible with content-level adaptations for further performance gains.

\subsection{Summary of Contributions}

We consider the mixing matrix design for broadcast-based DFL, with the objective of minimizing the maximum per-node energy consumption, with the following contributions: 

1) Motivated by our initial experiments that suggest the benefit of varying the communication intensity during training, we derive a new convergence theorem that allows \emph{arbitrarily time-varying} (random) mixing matrices, which generalizes existing convergence theorems that require mixing matrices to be fixed~\cite{Lian17NIPS}, i.i.d.~\cite{MATCHA22}, $B$-connected~\cite{NedicOlshevsky15}, or periodic~\cite{Koloskova20ICML, LiuChenZheng22}.

2) Based on the theorem, we propose a multi-phase design framework as well as a corresponding trilevel optimization algorithm, and derive explicit expressions of its objective function in the cases of 1-phase and 2-phase design. 

3) We develop a budgeted mixing matrix design algorithm to solve the lower-level optimization under broadcast communications, and analytically characterize the convergence rate under its design in the case of fully connected base topology.  

4) We evaluate the proposed solution against baselines and state-of-the-art benchmarks under realistic settings for learning in wireless networks. Our results validate the efficacy of the proposed solution in combining the low energy consumption of sparse mixing matrices and the fast convergence of dense mixing matrices to improve the energy efficiency of DFL. 

\textbf{Roadmap.} Section~\ref{sec:Background and Formulation} states the problem formulation, Section~\ref{sec:Convergence Analysis} presents our convergence theorem, based on which Sections~\ref{sec:Optimization Framework}--\ref{sec:Lower-level optimization for broadcast communication} develop the proposed solution, Section~\ref{sec:Performance Evaluation} presents the performance evaluation, and Section~\ref{sec:Conclusion} concludes the paper.  
\textbf{All proofs are deferred to the appendix}.

\section{Background and Problem Formulation}\label{sec:Background and Formulation}

\subsection{Decentralized Learning Problem}
Consider a network of $m$ nodes linked via a base topology $G = (V, E)$, where $V$ is the set of nodes ($|V| = m$) and $E$ indicates which node pairs can exchange information directly. 
 That is, each node $i$ can only directly communicate with nodes in its one-hop neighborhood (including itself), denoted by $V_i :=\{i\}\cup \{j\in V: (i,j)\in E\}$.
Each node $i \in V$ is associated with a local objective function $F_i(\bm{x})$, which is a function of the parameter $\bm{x} \in \mathds{R}^d$ that depends on the local dataset at node $i$. The objective of DFL is to collaboratively minimize the global function
\begin{align}\label{eq:F(x)}
F(\boldsymbol{x}) := \frac{1}{m}\sum_{i=1}^{m} F_{i}(\boldsymbol{x}),
\end{align}
which averages the local objectives across all nodes.
We assume that $F$ attains its minimum value $F_{\inf}$.

\subsection{Decentralized Learning Algorithm}\label{subsec:Decentralized Learning Algorithm}

We focus on a well-established decentralized learning algorithm known as Decentralized Parallel Stochastic Gradient Descent (D-PSGD)~\cite{Lian17NIPS}. Let $\bm{x}_i^{(t)}$ represent the model parameter at node $i$ at the start of iteration $t$, and let $g(\bm{x}_i^{(t)}; \xi_i^{(t)})$ denote the stochastic gradient computed at that node using a minibatch $\xi_i^{(t)}$ drawn from its local data and the current parameter $\bm{x}_i^{(t)}$.
At each iteration, D-PSGD performs the following update in parallel at every node $i$:
\looseness=-1
\scale{
\begin{align}\label{eq:DecenSGD}
    \boldsymbol{x}^{(t+1)}_i = \sum_{j=1}^{m}\bm{W}^{(t)}[i,j](\boldsymbol{x}^{(t)}_j - \eta g(\boldsymbol{x}^{(t)}_j; \xi^{(t)}_j)),
\end{align}
}where $\bm{W}^{(t)}=(\bm{W}^{(t)}[i,j])_{i,j=1}^m$ is the $m \times m$ mixing matrix used at iteration $t$, and $\eta > 0$ denotes the learning rate. 

Since node $i$ requires communication from node $j$ in iteration $t$ only when $\bm{W}^{(t)}[i,j] \neq 0$, the communication pattern can be controlled by appropriately designing the mixing matrix $\bm{W}^{(t)}$.
According to \cite{Lian17NIPS}, the mixing matrix should be \emph{topology-compliant} ($\bm{W}^{(t)}[i,j]\neq 0$ only if $(i,j)\in E$) and \emph{symmetric with each row/column summing up to one}\footnote{In \cite{Lian17NIPS}, the mixing matrix was assumed to be symmetric and \emph{doubly stochastic} with entries constrained to $[0,1]$, but we find that all the convergence proofs only required the mixing matrix to be symmetric with each row/column summing up to one.} in order to ensure feasibility and convergence for D-PSGD. 

In general, the mixing matrix $\bm{W}^{(t)}$ can be a random matrix drawn from a distribution $\bm{\mathcal{W}}^{(t)}$ as long as all its realizations satisfy our desired properties, i.e., being topology-compliant and symmetric with  row/column sums of $1$. In this case, by $\E[\phi(\bm{W}^{(t)})]$ we denote the expectation of a function $\phi$ of $\bm{W}^{(t)}$, 
where $\bm{W}^{(t)}$ is drawn from its distribution $\bm{\mathcal{W}}^{(t)}$.

Let $\overline{\bm{x}}^{(t)}:={1\over m}\sum_{i=1}^m \bm{x}^{(t)}_i$ denote the learned global model at iteration $t$ (which is a global average of the local models at this iteration). Depending on the shape of the objective function $F$, there are two widely adopted convergence criteria: for any \emph{required level of convergence} $\epsilon>0$, we say that D-PSGD has achieved $\epsilon$-convergence if \begin{itemize}
    \item $\frac{1}{T} \sum_{t=0}^{T-1} (\E[F(\overline{\bm{x}}^{(t)})] - F_{\inf} )\le \epsilon$ when the local objective functions $F_i$ are convex, or 
    \item $ \frac{1}{T} \sum_{t=0}^{T-1} \E[\|\nabla F(\boldsymbol{\overline{\bm{x}}}^{(t)})\|^2]\leq \epsilon$ 
    for general (possibly non-convex) $F_i$\footnote{In this work, we use $\|\bm{a}\|$ to denote $\ell$-2 norm if $\bm{a}$ is a vector, and spectral norm if $\bm{a}$ is a matrix.}.
\end{itemize}

The convergence of D-PSGD can be guaranteed under the following assumptions:
\begin{enumerate}[(1)]
    \item Each local objective function $F_i(\boldsymbol{\bm{x}})$ is $L$-Lipschitz smooth, i.e., 
    for all $\bm{x},\bm{x}'\in  \mathds{R}^d$, $\|\nabla F_i(\bm{x})-\nabla F_i(\bm{x}')\|\leq L\|\bm{x}-\bm{x}'\|,\: \forall i\in V$.
\item There exist constants $M_1, \hat{\sigma}$ such that 
    \scale{
    $\hspace{-0em} {1\over m}\sum_{i\in V}\E[\|g(\bm{x}_i;\xi_i)-\nabla F_i(\bm{x}_i) \|^2] \leq \hat{\sigma}^2 + {M_1\over m}\sum_{i\in V}\|\nabla F(\bm{x}_i)\|^2$, 
    $\forall \bm{x}_1,\ldots,\bm{x}_m \in \mathds{R}^d$.
    }
    \item There exist constants $M_2, \hat{\zeta}$ such that \scale{
    ${1\over m}\sum_{i\in V}\|\nabla F_i(\bm{x})\|^2\leq \hat{\zeta}^2 + M_2\|\nabla F(\bm{x}) \|^2, \forall \bm{x} \in \mathds{R}^d$.
    }
\end{enumerate}
While the convergence of D-PSGD has also been proved under other assumptions \cite{Lian17NIPS,MATCHA22}, the above assumptions, originally from \cite{Koloskova20ICML}, are more general; see \cite{Koloskova20ICML} for explanations.

\subsection{Cost Models}

We focus on energy consumption as the cost measure. Specifically, we use $c_i(\bm{W}^{(t)})$ to denote the energy consumption at node $i$ in an iteration when the mixing matrix is $\bm{W}^{(t)}$, which contains two parts: (i) computation energy $c_i^a$ for computing the local stochastic gradient and the local aggregation, and (ii) communication energy for sending the parameter vector at node $i$ to each of its activated neighbors. 
Under \emph{broadcast communications} where node $i$ can send its parameter vector to all its neighbors through a broadcast transmission, 
we model the per-iteration energy consumption at node $i$ as
\begin{align}\label{eq:cost definition - broadcast}
    c_i(\bm{W}^{(t)}) := c^a_i + c^b_i \mathbbm{1}(\exists (i,j)\in E:\: \bm{W}^{(t)}[i,j]\neq 0),
\end{align}
where $c^b_i$ is the energy consumption per transmission, and $\mathbbm{1}(\cdot)$ denotes the indicator function. We say that node $i$ is \emph{activated} in iteration $t$ if $\mathbbm{1}(\exists (i,j)\in E:\: \bm{W}^{(t)}[i,j]\neq 0)=1$. 
The total energy consumption at node $i$ over $T$ iterations is thus 
$\sum_{t=1}^T c_i(\bm{W}^{(t)})$. 

An alternative cost model under \emph{unicast communications} has been studied in \cite{Chiu23JSAC, zhang2024energyefficient},  where node $i$ needs to separately transmit its parameter vector over each activated link $(i,j)$ with energy consumption $c_{ij}^b$. In this setting, the energy consumption at node $i$ in iteration $t$ becomes 
\begin{equation}    
\label{eq:cost def - unicast}
c_i(\bm{W}^{(t)}) := c^a_i + \sum_{j: (i,j)\in E} c^b_{ij} \mathbbm{1}(\bm{W}^{(t)}[i,j]\neq 0). 
\end{equation}

\textbf{Remark: }The above models ignore random factors such as link dynamics and retransmissions, which are left to future work. 

\subsection{Design Objective}
\label{subsec:Design Objective}
There is an inherent tradeoff between  converging in fewer iterations and spending less energy per iteration, which can be controlled by designing possibly time-varying mixing matrices $(\bm{W}^{(t)})_{t=1}^T$. To maximize the lifetime of the learning system, we aim at designing the mixing matrices to \emph{minimize the maximum per-node energy consumption}, defined as
\begin{align}\label{eq:max per-node energy}
\max_{i\in V} \sum_{t=1}^T c_i(\bm{W}^{(t)}),
\end{align}
until the learning task reaches a required level of convergence. \looseness=0
\looseness=0


\subsection{Motivating Experiment}\label{subsec:Motivating Experiment}

\begin{figure}[t!]
    \centering    \centerline{\mbox{\includegraphics[height = 2.35in,width=1.0\linewidth]{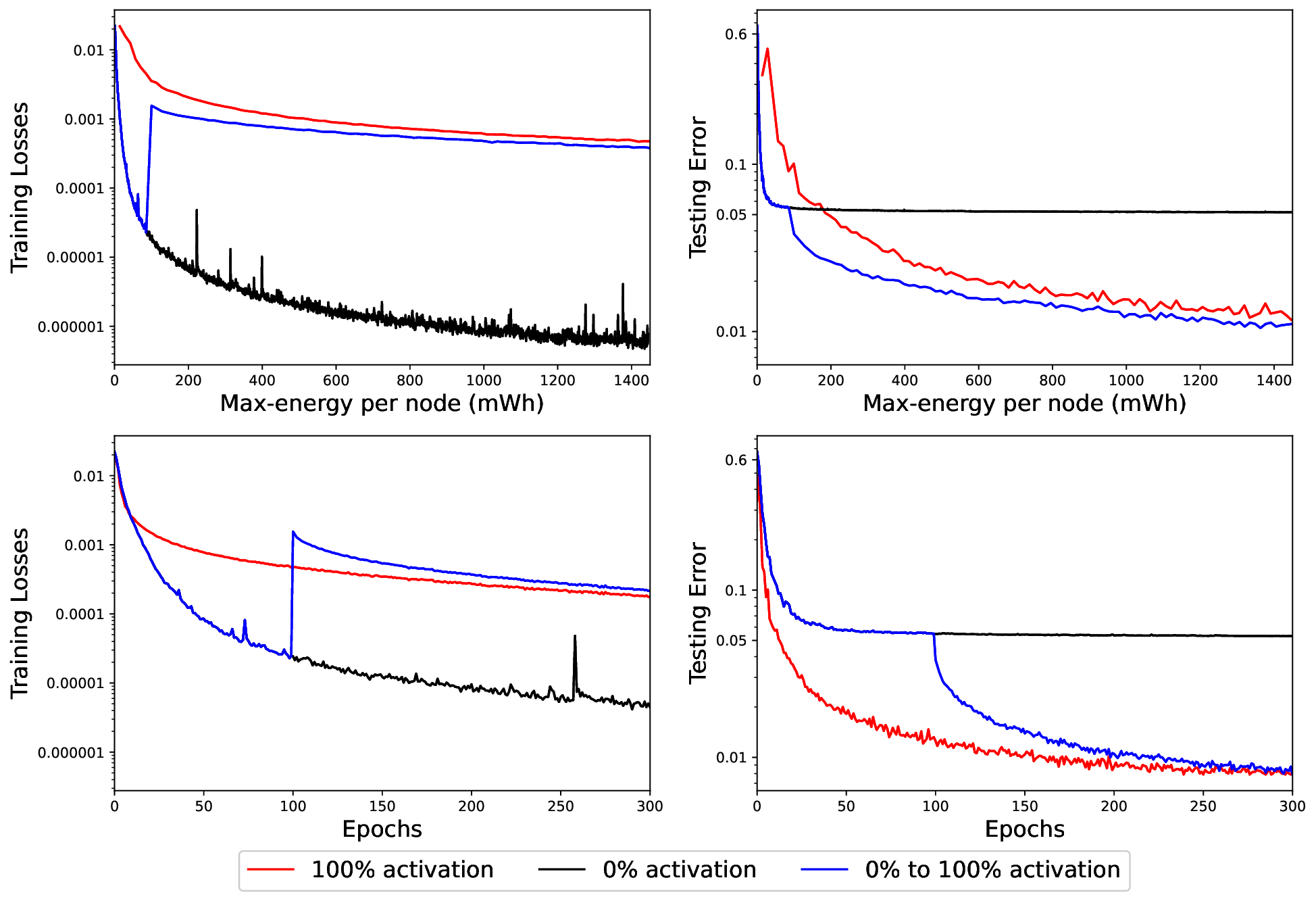}}}
    \vspace{-1em}
     \caption{ Motivating experiment based on MNIST over a 100-node clique (x-axis truncated to show the same range for all curves). 
    }
    \label{fig:results_mnist_clique}
    \vspace{-1em}
\end{figure}


As a motivating example, we compare the tradeoff between learning performance and energy consumption under three example designs. Our experiment is based on the MNIST dataset, randomly distributed across 100 nodes fully connected with each other, each training a 4-layer CNN model with 1,663,370 parameters as in \cite{McMahan17AISTATS}, with a batch size of 64 and a learning rate of $0.05$. We set the energy consumption parameters as in Section~\ref{sec:Performance Evaluation}. 
We evaluate the training loss and the testing error under three designs: 
\begin{enumerate}[(1)]
    \item $\bm{W}^{(t)}\equiv {1\over m}\bm{1}\bm{1}^\top$, i.e., all the nodes broadcast their local models in all the iterations (`$100\%$ activation'),
    \item $\bm{W}^{(t)}\equiv \bm{I}$, i.e., no communication between nodes (`$0\%$ activation'), and 
    \item a 2-phase design that starts with $\bm{I}$ and switches to ${1\over m}\bm{1}\bm{1}^\top$ after 100 epochs (`$0\%$ to $100\%$ activation').  
\end{enumerate}
The trajectories showing training progress as energy is spent, as shown in Fig.~\ref{fig:results_mnist_clique}, suggest that while using a dense mixing matrix yields a lower testing error under high energy budgets, using a sparse mixing matrix is more efficient under low energy budgets due to the savings in communication energy.
The low training loss of `$0\%$ activation' is due to overfitting, which also explains the initial increase in training loss for the 2-phase design after switching.
Suitably switching between the two can achieve a better tradeoff between the quality of learning and the energy consumption than each case alone. 


\section{Convergence Analysis}\label{sec:Convergence Analysis}

The foundation of our solution is a set of convergence theorems that characterize the number of iterations until convergence as explicit functions of the mixing matrices. To present these theorems, we will need a few notions: the \emph{ideal mixing matrix} $\bm{J}:={1\over m}\bm{1} \bm{1}^\top$ which maintains global consensus, the \emph{divergence} between a designed (possibly random) mixing matrix $\bm{W}$ and the ideal mixing matrix defined as $$\rho(\bm{W}):= \|\E[\bm{W}^\top\bm{W}]-\bm{J}\|,$$ and the following  parameters that capture the dependence of convergence rate on time-varying mixing matrices.


We define
\[
p^{(t)} := 1 - \rho\big(\bm{W}^{(t)}\big),
\]
and let $\bm{p} := \{p^{(t)}\}_{t=0}^{\infty}$.
For each $j \ge 0$, define
\[
\pi_j \;:=\; \sum_{i>j} \prod_{t=j+1}^{i-1}
\Big(1-\frac{p^{(t)}}{2}\Big).
\]

The quantity $\pi_j$ aggregates the contraction effects of time-varying
mixing matrices $\{\bm{W}^{(t)}\}_{t \ge j+1}$ starting from iteration $j+1$.
Each factor $1-p^{(t)}/2$ upper bounds the amount of divergence at time $t$,
while the product reflects mixing over consecutive iterations
accumulated over time. Consequently, $\pi_j$ provides an upper bound on the cumulative divergence from
the averaging operator $J$ that persists after the $j$-th iteration.

For a time horizon $T>0$, we further define the time-averaged quantities
\begin{align}
\Pi_1(T) &:=\frac{1}{T} \sum_{j=0}^{T-1} \pi_j, \quad
\Pi_2(T) :=\frac{1}{T} \sum_{j=0}^{T-1} \frac{\pi_j}{p^{(j)}}. \label{eq:Pi_1} 
\end{align}
The quantity $\Pi_1(T)$ represents the average cumulative divergence over the
first $T$ iterations, while $\Pi_2(T)$ is a weighted version that additionally
accounts for the instantaneous mixing strength $p^{(j)}$ at the starting time
$j$. Both can be viewed as ergodic measures of how effectively the sequence of
time-varying mixing matrices promotes consensus over time.

We collect a few useful facts about the parameters  $\pi_j$, $\Pi_1(T)$, and $\Pi_2(T)$.
\begin{lemma}
\label{fact: pi}
Assume there exists \(\delta \in (0,1)\) such that \(p^{(t)} \ge \delta\) for all \(t \ge 0\). Then the following statements hold:
    \begin{enumerate}
        \item For all $j\ge 0$, $\pi_j < \frac{2}{\delta}$.
        \item For any $T > 0 $, $\Pi_1(T) < \frac{2}{\delta}$ and $\Pi_2(T)< \frac{2}{\delta^2}$.
        \item In particular, if $p^{(t)}\equiv p$, then for any $T>0$, 
        $\Pi_1(T) = \pi_0= 2/p$ and $\Pi_2(T) = 2/p^2$.
    \end{enumerate}
\end{lemma}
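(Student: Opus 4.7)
The plan is elementary: all three statements reduce to bounding (or summing exactly) a geometric series with ratio at most $1-\delta/2$, which is strictly less than $1$ because $\delta>0$. The only mild care needed is in the index bookkeeping, since the outer sum in $\pi_j$ is infinite and its first term involves an empty product.

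First I would handle Part 1. Fix $j\ge 0$. Because $p^{(t)}\ge \delta$ for every $t$, each factor satisfies $1-p^{(t)}/2 \le 1-\delta/2 \in (1/2,1)$, so
\[
\prod_{t=j+1}^{i-1}\Bigl(1-\frac{p^{(t)}}{2}\Bigr) \;\le\; \Bigl(1-\frac{\delta}{2}\Bigr)^{\,i-1-j},
\]
with the empty product at $i=j+1$ interpreted as $(1-\delta/2)^{0}=1$. Reindexing the outer summation by $k=i-j-1\ge 0$ then yields
\[
\pi_j \;\le\; \sum_{k=0}^{\infty}\Bigl(1-\frac{\delta}{2}\Bigr)^{k} \;=\; \frac{1}{\delta/2} \;=\; \frac{2}{\delta}.
\]
As a byproduct this shows the series defining $\pi_j$ converges, so the quantity is well defined.

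For Part 2, I would plug the bound from Part 1 termwise into the definitions \eqref{eq:Pi_1}. Averaging $\pi_j \le 2/\delta$ over $j=0,\dots,T-1$ immediately gives $\Pi_1(T)\le 2/\delta$. For $\Pi_2(T)$, additionally use $p^{(j)}\ge \delta$ to bound each ratio $\pi_j/p^{(j)}\le (2/\delta)/\delta = 2/\delta^{2}$ before averaging.

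For Part 3, under $p^{(t)}\equiv p$ every factor in the product equals $1-p/2$ exactly, so the inequality from Part 1 becomes an equality: $\pi_j = \sum_{k=0}^{\infty}(1-p/2)^{k} = 2/p$, independent of $j$. The time averages are then immediate, giving $\Pi_1(T)=2/p$ and $\Pi_2(T)=(2/p)/p=2/p^{2}$. There is no substantive obstacle; the only point to be careful about is that the geometric-series bound is tight precisely in the constant-$p$ regime, which is why Part 3 yields equalities whereas Parts 1 and 2 yield only upper bounds (strictness of the stated inequalities $<2/\delta$ and $<2/\delta^{2}$ holds as soon as some factor is strictly bounded by $1-\delta/2$, which is the generic case).
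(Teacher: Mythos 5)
Your proof is correct and matches the paper's argument essentially verbatim: bound each factor by $1-\delta/2$, sum the resulting geometric series to obtain $2/\delta$, average termwise for Part 2, and note that the bound becomes an equality when $p^{(t)}\equiv p$. Your closing caveat about strictness is apt --- both your derivation and the paper's actually yield $\le$ rather than $<$ (and Part 3 with $p\equiv\delta$ shows the strict inequality cannot hold in general), but that is an imprecision in the lemma statement rather than a gap in your proof.
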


The following theorem describes how the convergence rate depends on the mixing matrices through the parameters $\Pi_1(T)$, $\Pi_2(T)$, $\pi_0$, and $p_{\min}:= \min_{j} p^{(j)}$. 

\begin{theorem}\label{thm: new convergence bound nonconvex}
D-PSGD under assumptions (1)--(3) in Section~\ref{subsec:Decentralized Learning Algorithm}  achieves $\epsilon$-convergence (i.e., $\frac{1}{T} \sum_{t=0}^{T-1} \E[ \|\nabla F(\overline{\bm{x}}^{(t)}) \|^2] \le \epsilon$) when the number of iterations $T$ satisfies 
    $T\geq T_1(\Pi_1(T), \Pi_2(T), \pi_0, p_{\min}, \epsilon,  \overline{\bm{x}}^{(0)})$ 
    for 
    \begin{align}
        &\hspace{-.0em}T_1(\Pi_1, \Pi_2, \pi_0, p_{\min}, \epsilon,  \overline{\bm{x}}^{(0)}) := O\hspace{-.25em}\left(\hspace{-.25em}  \frac{f_0 L\sqrt{(1+M_1) (1+M_2)}}{\epsilon p_{\min}}\right)
        \nonumber \\
    &\hspace{1em} +O\left(\frac{f_0 L^2 [(1+\pi_0)\Xi_0 + (1+M_1)]}{\epsilon} 
    \right) \nonumber\\
     &\hspace{1em} +f_0 L \cdot O\left(
    {\hat{\sigma}^2\over m\epsilon^2} +{\sqrt{(M_1 \hat{\zeta}^2+\hat{\sigma}^2)\Pi_1 + \hat{\zeta}^2\Pi_2 } \over  \epsilon^{3/2}} \label{eq: T1 bound}
    \right),
    \end{align}
    where $f_0:=\E[ F(\overline{\bm{x}}^{(0)})] - F_{\inf}$ ($F_{\inf}$ denotes the minimum value of $F$) and
    $\Xi_0:=\frac{1}{m}\sum_{i=1}^m \|x_i^{(0)} - \overline{\bm{x}}^{(0)} \|^2$.
\end{theorem}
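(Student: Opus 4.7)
My plan is to generalize the non-convex convergence analysis of D-PSGD (along the lines of \cite{Koloskova20ICML}) by replacing the uniform spectral-gap assumption with a pathwise treatment of the time-varying contractions $1-p^{(t)}/2$, and to arrange the unrolled consensus-error recursion so that it collapses exactly into the quantities $\pi_j$, $\Pi_1(T)$, and $\Pi_2(T)$ stated in the theorem. Stacking the iterates as $\bm{X}^{(t)}\in\mathds{R}^{m\times d}$ and the stochastic gradients as $\bm{G}^{(t)}$, \eqref{eq:DecenSGD} reads $\bm{X}^{(t+1)}=\bm{W}^{(t)}(\bm{X}^{(t)}-\eta\bm{G}^{(t)})$. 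Because every realization of $\bm{W}^{(t)}$ has unit row and column sums, $\overline{\bm{x}}^{(t)}$ evolves by plain SGD with averaged noise, and $L$-smoothness combined with assumptions (2)--(3) yields the standard one-step descent inequality
$$\E[F(\overline{\bm{x}}^{(t+1)})]\le \E[F(\overline{\bm{x}}^{(t)})]-\frac{\eta}{2}\E\bigl[\|\nabla F(\overline{\bm{x}}^{(t)})\|^2\bigr]+O\!\left(\frac{\eta^2 L\hat\sigma^2}{m}\right)+O(\eta L^2)\,\E[\Xi_t],$$
where $\Xi_t:=\frac{1}{m}\sum_i\E[\|\bm{x}_i^{(t)}-\overline{\bm{x}}^{(t)}\|^2]$. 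Telescoping over $t=0,\ldots,T-1$ reduces the task to bounding the time average of $\Xi_t$.

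The heart of the proof is a time-varying recursion for $\Xi_t$. Starting from $\bm{X}^{(t+1)}-\bm{J}\bm{X}^{(t+1)}=(\bm{W}^{(t)}-\bm{J})(\bm{X}^{(t)}-\bm{J}\bm{X}^{(t)})-\eta(\bm{W}^{(t)}-\bm{J})\bm{G}^{(t)}$ (using $(\bm{W}^{(t)}-\bm{J})\bm{J}=0$) and the identity $\E[\|(\bm{W}^{(t)}-\bm{J})\bm{Y}\|_F^2]\le\rho(\bm{W}^{(t)})\,\|(\bm{I}-\bm{J})\bm{Y}\|_F^2$, I apply Young's inequality with tuning parameter $\alpha=p^{(t)}/(2(1-p^{(t)}))$ so that $(1+\alpha)(1-p^{(t)})=1-p^{(t)}/2$, obtaining
$$\E[\Xi_{t+1}]\le\left(1-\frac{p^{(t)}}{2}\right)\E[\Xi_t]+\frac{C\eta^2}{p^{(t)}}\,\Gamma_t,$$
where $\Gamma_t$ is a linear combination of $\hat\sigma^2$, $(1+M_1)\hat\zeta^2$, and $(M_1+M_2)\|\nabla F(\overline{\bm{x}}^{(t)})\|^2$ produced by bounding $\frac{1}{m}\|\bm{G}^{(t)}-\bm{J}\bm{G}^{(t)}\|_F^2$ via assumptions (2)--(3). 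Unrolling yields $\E[\Xi_t]\le \Xi_0\prod_{s<t}(1-p^{(s)}/2)+\sum_{j<t}\frac{C\eta^2\Gamma_j}{p^{(j)}}\prod_{s=j+1}^{t-1}(1-p^{(s)}/2)$, and after summing over $t$ and swapping the summation order the inner products collapse exactly into $\pi_j$: the initial-condition piece yields the $(1+\pi_0)\Xi_0$ contribution visible in \eqref{eq: T1 bound}, while the perturbation sum gives $\frac{1}{T}\sum_t \E[\Xi_t]\lesssim \eta^2\bigl[(M_1\hat\zeta^2+\hat\sigma^2)\Pi_1(T)+\hat\zeta^2\Pi_2(T)\bigr]+\eta^2(M_1+M_2)\Pi_1(T)\cdot\frac{1}{T}\sum_t\E[\|\nabla F(\overline{\bm{x}}^{(t)})\|^2]$. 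The split between $\Pi_1$ and $\Pi_2$ hinges on whether a given term in $\Gamma_t$ carries an intrinsic $p^{(t)}$ factor that cancels the $1/p^{(t)}$ from Young's inequality (giving $\Pi_1$) or not (giving $\Pi_2$).

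To conclude, I substitute the consensus bound into the summed descent inequality, then invoke Lemma~\ref{fact: pi}'s upper bound $\Pi_1(T)\le 2/p_{\min}$ to absorb the gradient-squared consensus contribution into the $-\frac{\eta}{2}\E[\|\nabla F\|^2]$ term by imposing $\eta=O(p_{\min}/(L\sqrt{(1+M_1)(1+M_2)}))$; this restriction produces the first term of \eqref{eq: T1 bound}. The remaining three contributions, $f_0/(\eta T)$ from the descent bias, $\eta L\hat\sigma^2/m$ from averaged stochastic noise, and $\eta^2 L^2[(M_1\hat\zeta^2+\hat\sigma^2)\Pi_1+\hat\zeta^2\Pi_2]$ from residual consensus error, are then balanced by the standard three-way step-size rule; requiring each to be at most $\epsilon$ and inverting for $T$ yields the other terms of \eqref{eq: T1 bound}. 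The main obstacle is the unrolling step: existing analyses rely on a uniform lower bound for the spectral gap to evaluate the geometric sums in closed form, whereas here each contraction factor $1-p^{(t)}/2$ is different, so the order-of-summation swap must be arranged precisely so that it reproduces the abstract $\pi_j$. A second subtle point is the clean separation into $\Pi_1$ versus $\Pi_2$, which requires tracing which noise and heterogeneity terms inherit the extra $1/p^{(t)}$ weight from Young's inequality.
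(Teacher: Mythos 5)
Your proposal is correct and follows essentially the same route as the paper: a one-step descent inequality plus a time-varying consensus recursion with contraction factor $1-p^{(t)}/2$, unrolled and summed with the order of summation swapped so the geometric products collapse into $\pi_j$, $\Pi_1(T)$, $\Pi_2(T)$, followed by the same four-way step-size tuning (the paper imports the two recursions from \cite{Koloskova20ICML} with $\tau=1$ rather than rederiving them via Young's inequality, and absorbs the gradient-squared consensus term through the worst-case quantity $\max_j(\pi_j/p^{(j)})$ rather than $\Pi_1$, but your stated step-size restriction $\eta=O(p_{\min}/(L\sqrt{(1+M_1)(1+M_2)}))$ is exactly what that absorption requires, so the conclusion is unaffected).
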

The big-$O$ notations used in the Theorem above and throughout the manuscript hide absolute constants independent of all the parameters.

\textbf{Remark~1:}
From Theorem~\ref{thm: new convergence bound nonconvex}, the required number of iterations depends on $\bm{p}$ only through $\Pi_1(T)$, $\Pi_2(T)$, $\pi_0$, and $p_{\min}$. 
When $p^{(t)}\equiv p$, we have $\Pi_1(T)\equiv 2/p$ and $\Pi_2(T) \equiv 2/p^2$, in which case Theorem~\ref{thm: new convergence bound nonconvex} reduces to the state-of-the-art convergence theorem in \cite{Koloskova20ICML} in the case of $\tau=1$. 
Furthermore, with a modified definition of $p^{(t)}$ to characterize $\tau$ matrices in consecutive iterations, the proof of Theorem~\ref{thm: new convergence bound nonconvex} can be adapted to support the convergence conditions in \cite{Koloskova20ICML} for  general $\tau\ge 1$. 

\textbf{Remark~2:} 
We briefly discuss the contributions of each term in \eqref{eq: T1 bound}. 
The \textbf{third term} is expected to be the dominant one: while the term 
$O(\hat{\sigma}^2/(m\epsilon^2))$ is asymptotically tight for any stochastic methods \cite{NemirovskyYudin85, ICMLhonor}, the 
$O(\epsilon^{-3/2})$ part highlights the main novelty in our bound, as it only 
involves the ``average'' parameters $\Pi_1$ and $\Pi_2$ rather than worst-case parameters. 
The \textbf{second term} scales as $O(\epsilon^{-1})$ and depends on $\pi_0$. 
This dependence is fairly innocuous, especially since one may force $\Xi_0=0$ by fixing the same initialization point on all worker nodes; 
we include such dependnece only for completeness. 
Finally, the \textbf{first term} depends on the worst-case parameter 
$p_{\min}$ in the sequence $\bm{p}$. While this dependence may appear 
suboptimal, the term itself scales only as $O(\sqrt{M_1 M_2}/\epsilon)$ and is 
independent of $\hat{\zeta}$ and $\hat{\sigma}$, so its overall impact is 
not expected to be significant.

\textbf{Remark~3:}
Analysis of decentralized gradient-based optimization algorithms with time-varying mixing matrices has been 
studied in \cite{NedicOlshevsky15, NedicOlshevskyShi17}, but their convergence 
guarantees rely on convexity of $F_i$'s and other strong assumptions on both the mixing 
matrices and the objective functions $\{F_i\}_{i=1}^m$. Our techniques also extend to the convex 
setting:  we establish a convergence theorem analogous to Theorem~\ref{thm: new convergence bound nonconvex} showing that the 
algorithm achieves $\epsilon$-convergence, i.e., 
$\tfrac{1}{T} \sum_{t=0}^{T-1} (\E [F(\overline{\bm{x}}^{(t)})] - F_{\inf}) 
\le \epsilon$, under an additional assumption that the local objectives are convex. Moreover, in this case, our requirements (1)--(3) can be further relaxed; in particular, we may assume $M_1=M_2=0$. 
See Appendix~\ref{sec: convex convergence} for details.

The convergence condition in Theorem~\ref{thm: new convergence bound nonconvex} may be satisfied by multiple choices of $T$. We thus define $T_2(\bm{p}, \epsilon, \overline{\bm{x}}^{(0)})$ as the smallest $T$ that satisfies the convergence condition for a given sequence $\bm{p}$,  error bound $\epsilon$, and initial model $\overline{\bm{x}}^{(0)}$, i.e.,  
\begin{align}
&T_2(\bm{p}, \epsilon, \overline{\bm{x}}^{(0)}) := \nonumber\\
&\hspace{0em}\min \{T>0: 
        T\ge T_1(\Pi_1(T), \Pi_2(T), \pi_0, p_{\min}, \epsilon, \overline{\bm{x}}^{(0)})
    \}.
\end{align}
If the local objective functions are convex, we define $T_2$ analogously as 
\begin{align}
    &T_2(\bm{p}, \epsilon, \overline{\bm{x}}^{(0)}) := \nonumber\\
&\hspace{0em}\min \{T>0: 
        T\ge T_4(\Pi_1(T), \Pi_2(T), \pi_0, p_{\min}, \epsilon, \overline{\bm{x}}^{(0)})
\end{align}where $T_4$ denotes the convergence bound given in Theorem~\ref{thm: new convergence bound}.

\section{Optimization Framework}\label{sec:Optimization Framework}

At a high level, our design objective as stated in Section~\ref{subsec:Design Objective} can be formulated as the following optimization problem:
\begin{subequations}\label{eq:adaptive mixing matrix design}
\begin{align}
& \min_{ T, \{\bm{W^{(t)}}\}_{t=1}^T }\:  \max_{i\in V} \sum_{t=1}^T c_i(\bm{W}^{(t)})  \label{deterministic design:obj}\\
\mbox{s.t. }
& T \ge  T_2(\bm{p}, \epsilon, \overline{\bm{x}}^{(0)}), 
\label{eq:nonconvex constraint} \\
& p^{(t)} = 1-\rho(\bm{W}^{(t)}),~~~\forall t=1,\ldots,T.
\end{align}
\end{subequations}
The design variables include both the number of iterations $T$ and the sequence of mixing matrices$\{\bm{W}^{(t)}\}_{t=1}^T$.  
We treat \(\epsilon\) and the initial model \(\bm{x}^{(0)}\) as explicit inputs to the formulation.


\subsection{Randomized Multi-phase Design Framework}
We note that under broadcast communications, deterministic mixing matrix design is not sufficient. 
This is because if any iteration $t$ uses a deterministic mixing matrix $\bm{W}^{(t)}$ for which the set of communicating nodes $U$ is not equal to $V$, then $\bm{W}^{(t)}$ is a reducible mixing matrix and thus $\rho(\bm{W}^{(t)})= \|\bm{W}^{(t)}-J \|^2=1$, which implies $\Pi_2(T)=\infty$ and thus the right-hand side of \eqref{eq:nonconvex constraint} is infinite. 
Therefore, any deterministic mixing matrix design must activate communications at all the nodes in all the iterations.  
 Under this constraint, the mixing matrix design problem reduces to the simple problem of designing a single mixing matrix $\bm{W}$ with the minimum $\rho(\bm{W})$ based on the entire base topology $G$, whose solution can be rather suboptimal in energy efficiency as shown in Section~\ref{subsec:Motivating Experiment} (`$100\%$ activation'). 
By employing a randomized mixing matrix design, we can avoid this constraint and significantly enlarge the design space. 
Therefore, we generalize the problem from a deterministic design problem \eqref{eq:adaptive mixing matrix design} to a randomized design problem, 
by replacing its objective \eqref{deterministic design:obj} with
$$
 \min_{ T, \{\bm{W^{(t)}}\}_{t=1}^T }\: 
    \E\left[
    \max_{i\in V} \sum_{t=1}^T 
    c_i(\bm{W}^{(t)}) 
    \right].
$$

To simplify the computation, we divide the training process into $K$ \emph{phases}, where $K$ is a design variable. Each phase $s\in \{1,\dots K\}$ contains $\tau_s$ consecutive iterations of using the random mixing matrix $\bm{W}_s$. 
For clarity, we 
use $\bm{W}_s^{(t)}$ to refer to the mixing matrix used in the $t$-th iteration of phase $s$, which is an instance of $\bm{W}_s$ drawn independently in each iteration.
This changes our design problem into the following optimization: 
\begin{subequations}\label{eq:adaptive mixing matrix design, phased}
\begin{align}
&    \min_{K, \{\bm{W_{s}}, \tau_s\}_{s=1}^K }\:  
    \E\left[
    \max_{i\in V} \sum_{s=1}^K 
    \sum_{t=1}^{\tau_s}
    c_i(\bm{W}_{s}^{(t)}) 
    \right] \label{eq:obj over K phases} \\
\mbox{s.t. }
& \tau_1 + \dots + \tau_K = T_2(\bm{p}, \epsilon, \overline{\bm{x}}^{(0)}) \label{eq: nonconvex constraint over K phases} \\
& p^{(t)} =  1-\rho(\bm{W}_s),\: \forall~ t=\big(\sum_{l=1}^{{s-1}}\tau_{l},\ldots,\sum_{l=1}^{{s}} \tau_{l} \big], \label{eq: phase requirement}\\
& \hspace{13em} \forall s=1,\ldots,K.\nonumber
\end{align}
\end{subequations}
The decision variables are: the number of phases $K$, the $K$ random mixing matrices $\{\bm{W}_s\}_{s=1}^K$, and the number of iterations $\{\tau_s\}_{s=1}^K$ for using each. 
For a given solution to \eqref{eq:adaptive mixing matrix design, phased}, 
the number of iterations required for convergence is denoted as \looseness=-1
\begin{align}
T_3((p_1, \tau_1), (p_2, \tau_2), \dots, p_K) :=  T_2(\bm{p}, \epsilon, {\bm{x}}^{(0)}),
\end{align}
where $p_s:=1-\rho(\bm{W}_s)$ is the convergence parameter for each phase $s=1,\dots, K$. 
Note that, given $(p_1, \tau_1), (p_2, \tau_2)  \dots, p_K$ and $K$,  $\tau_K$ is no longer an independent variable due to \eqref{eq: nonconvex constraint over K phases}. 

\subsection{A Related Sub-problem}

Since $T_3((p_1, \tau_1), \dots, p_K)$ is a {decreasing} function of each $p_s=1-\rho(\bm{W}_s)$, we want to design a random mixing matrix for each phase $s$ to minimize $\rho(\bm{W}_s)$ without triggering too many communications. To this end, we introduce a sub-problem of budgeted mixing matrix design.  
Let $D>0$ denote the \emph{budget} for the per-iteration expected energy consumption at each node in a given phase. We formulate the per-phase mixing matrix design problem  (omitting the phase index) as 
\begin{subequations}\label{eq:budgeted random W design}
\begin{align}
    & \min_{\Pr[\cdot]} \left\| \sum_{\bm{W}\in \mathcal{M}} \Pr[\bm{W}] \bm{W}^\top \bm{W} - J \right\| \label{eq: broadcast K=1 obj} \\
    & \sum_{\bm{W} \in \mathcal{M}} \Pr[\bm{W}] = 1, \text{ and } \Pr[\bm{W}] \ge 0 ~~ \forall~ \bm{W} \in \mathcal{M},  \\
    & \sum_{\bm{W} \in \mathcal{M}} \Pr[\bm{W}] c_i(\bm{W})  \le D, \forall~ i\in V \label{eq: exp edge constraint}\\
    &  \mathbbm{1}[\Pr[\bm{W}]\cdot \bm{W}[i,j] \neq 0] \le \mathbbm{1}[E[i,j]\neq 0], \label{eq:ramdomized graph constraint} \\
    &\hspace{13em} \forall~ \bm{W}\in \mathcal{M}, \forall~ i\neq j \in V, \nonumber
\end{align}
\end{subequations}
where we use $\mathcal{M}$ to denote the set of symmetric matrices whose row/column sums are $1$, and $\Pr[\bm{W}]$ to denote the probability of choosing $\bm{W}\in \mathcal{M}$ in each iteration of this phase.\looseness=-1 

We will dive into a concrete solution to \eqref{eq:budgeted random W design} in Section~\ref{sec:Lower-level optimization for broadcast communication}-\ref{sec:Lower-level optimization under unicast communication}. To present the overall solution, we assume the existence of a hypothetical solution $\mathcal{A}$ to \eqref{eq:budgeted random W design}, treated as a black box for the time being, with the following property: for any budget $D$, $\mathcal{A}$ provides a random mixing matrix $\bm{W}$ that is feasible for \eqref{eq:budgeted random W design} with the guarantee that there exists a function $\rho_{\mathcal{A}}^+:\mathds{R}^+\rightarrow\mathds{R}^+ \cup \{\infty\}$ such that $\rho(\bm{W})\le \rho^+_{\mathcal{A}}(D)$.

Under a given budget $D$, one can bound the expected maximum energy consumption per node for a phase of $T$ iterations by the following lemma. 
\begin{lemma}\label{lem:bound under budget D for K=1}
Let $D>0$ and $T>0$. Suppose a random mixing matrix $\bm{W}$ satisfies  $\E[c_i(\bm{W})] \le D$ ($\forall i\in V$).
If $\bm{W}^{(1)}, \dots, \bm{W}^{(T)}$ are i.i.d. copies of $\bm{W}$, 
then 
\begin{equation}\label{eq:exp maximum bound}
    \E\left[  \max_i \sum_{t=1}^T c_i(\bm{W}^{(t)}) \right] \le D\cdot \left(T
    + m\sqrt{\frac{T \pi}{8}} \right) 
    =: q(T,D). 
\end{equation}
\end{lemma}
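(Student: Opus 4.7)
Plan: Let $S_i := \sum_{t=1}^T c_i(\bm{W}^{(t)})$. Since the $\bm{W}^{(t)}$ are i.i.d., for each fixed $i$ the summands $c_i(\bm{W}^{(t)})$ are i.i.d.\ with mean $\E[c_i(\bm{W})] \le D$, so $\E[S_i] \le TD$. My first step is a mean-plus-deviation decomposition: from the identity $\max_i S_i = TD + \max_i(S_i - TD)$ and the elementary bound $\max_i(S_i - TD) \le \sum_{i=1}^m (S_i - TD)^+$ (the max of real numbers is dominated by the sum of their positive parts), I take expectations to obtain
$$\E\!\left[\max_i S_i\right] \le TD + \sum_{i=1}^m \E[(S_i - TD)^+].$$

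Next, I would recenter each deviation around its own mean: because $(\cdot - c)^+$ is nonincreasing in $c$ and $\E[S_i] \le TD$, we have $\E[(S_i - TD)^+] \le \E[(S_i - \E S_i)^+]$. To produce the specific constant $\sqrt{\pi/8}$, I apply Hoeffding's inequality to the i.i.d.\ sum $S_i$. The broadcast cost model \eqref{eq:cost definition - broadcast} confines $c_i(\bm{W}^{(t)}) \in [c_i^a,\, c_i^a + c_i^b]$, with range $c_i^b$, so
$$\Pr[S_i - \E S_i > u] \le \exp\!\left(-\frac{2u^2}{T(c_i^b)^2}\right), \quad u \ge 0.$$
Using the tail integral $\E[Z^+] = \int_0^\infty \Pr[Z > u]\,du$ and evaluating the resulting half-Gaussian integral gives
$$\E[(S_i - \E S_i)^+] \le \int_0^\infty \exp\!\left(-\frac{2u^2}{T(c_i^b)^2}\right)du = c_i^b\sqrt{T\pi/8}.$$

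Finally, I would sum over $i \in V$. Invoking $c_i^b \le D$ (the natural regime in which the per-iteration budget covers at least one broadcast), I get $\sum_i \E[(S_i - TD)^+] \le mD\sqrt{T\pi/8}$, so
$$\E\!\left[\max_i S_i\right] \le TD + mD\sqrt{T\pi/8} = D\!\left(T + m\sqrt{T\pi/8}\right) = q(T,D),$$
as claimed. The step I expect to be most delicate is getting the precise constant $\sqrt{\pi/8}$: a variance-based bound using Jensen's inequality would yield a slightly different constant, and it is specifically the Hoeffding tail combined with the Gaussian integral that produces $\sqrt{\pi/8}$. The secondary subtlety is the collapse $c_i^b \to D$ in the final summation, which relies on the natural scaling of $D$ to the per-broadcast cost; the rest of the argument (union-by-sum in the max, centering, and i.i.d.\ concentration) is routine.
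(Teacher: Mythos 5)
Your argument is essentially the paper's: both proofs run Hoeffding's inequality on each $S_i=\sum_{t=1}^T c_i(\bm{W}^{(t)})$, aggregate over $i\in V$ (the paper via a union bound on $\Pr[\max_i S_i\ge TD+y]$ followed by integration over $y$; you via $\max_i(S_i-TD)\le\sum_i(S_i-TD)^+$ followed by the tail integral --- by Tonelli these are the identical computation), and evaluate the half-Gaussian integral to produce the $\sqrt{T\pi/8}$ factor. Your centering step $\E[(S_i-TD)^+]\le\E[(S_i-\E S_i)^+]$ is likewise implicit in the paper's tail bound $\Pr[S_i\ge TD+y]\le\exp(-2y^2/(TD^2))$.

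The one place you diverge is also the one genuinely delicate point, and your patch for it does not hold as stated. You correctly observe that Hoeffding's range parameter under the broadcast cost model \eqref{eq:cost definition - broadcast} is $c_i^b$, which yields $\E[(S_i-\E S_i)^+]\le c_i^b\sqrt{T\pi/8}$, and you then invoke $c_i^b\le D$ to finish. But $c_i^b\le D$ is not implied by the lemma's hypothesis $\E[c_i(\bm{W})]\le D$, nor by the standing assumption \eqref{eq:condition on D - clique}, which only guarantees $D\ge\max_i c_i^a$ and explicitly allows $D<c_i^a+c_i^b$; in the paper's own experimental regime ($c_i^b=1.333$\,mWh with budgets $D$ that can be far smaller) the inequality fails. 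To be fair, the paper's proof asserts the tail $\exp(-2y^2/(TD^2))$ directly, which presupposes exactly the same thing (per-iteration cost confined to an interval of length at most $D$), so this is a gap you share with, rather than introduce beyond, the paper; still, your final summation $\sum_i\E[(S_i-TD)^+]\le mD\sqrt{T\pi/8}$ is not established from the stated hypotheses. A clean repair is to add the range assumption $c_i(\bm{W})\in[0,D]$ almost surely to the lemma, or to replace $D$ by $\max_i c_i^b$ in the second term of $q(T,D)$.
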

The proof of Lemma~\ref{lem:bound under budget D for K=1} is deferred to Section~\ref{sec: deferred proof}.

\subsection{Case Studies for $K=1$ and $K=2$}

\subsubsection{Computation of $T_3$}
Next we illustrate computation for $T_3$ through case studies of $K=1$ and $K=2$.

In the case of $K=1$, as explained in the first remark below Theorem~\ref{thm: new convergence bound nonconvex}, we have $\Pi_1(T)\equiv 2/p_1$ and $\Pi_2(T)\equiv 2/p_1^2$.
With this simplification,
\begin{align}
T_3(p_1)&= T_1(2/p_1, 2/p_1^2, 2/p_1, p_1, \epsilon,  \overline{\bm{x}}^{(0)}) \nonumber\\
&= O\left(  \frac{f_0 L\sqrt{(1+M_1) (1+M_2)}}{\epsilon p_1}\right) \nonumber\\
&    +O\left(\frac{f_0 L^2 [\Xi_0 + p_1(1+M_1)]}{\epsilon p_1} 
    \right) \nonumber\\
&     +f_0 L \cdot O\left(
    {\hat{\sigma}^2\over m\epsilon^2} +{\sqrt{(M_1 \hat{\zeta}^2+\hat{\sigma}^2) p_1 + \hat{\zeta}^2  } \over  \epsilon^{3/2}p_1}
    \right).
\end{align}
In the case of $K=2$, $\Pi_1(T)$ and $\Pi_2(T)$ are given by the following lemma.
\begin{lemma}\label{lem:T((p1,tau1),p2)}
Let $T\ge \tau_1$ be a fixed integer.
If $p^{(1)}=\cdots=p^{(\tau_1)}=p_1$ and $p^{(\tau_1+1)}=\cdots=p^{(T)}=p_2$, then $\Pi_1(T)$ and $\Pi_2(T)$ as defined in \eqref{eq:Pi_1} satisfy the followings:
\begin{align*}
    \Pi_1(T) &= \frac{2(T-\tau_1)}{T p_2} + \frac{2\tau_1}{T p_1} -  (\frac{2}{T p_1} - \frac{2}{Tp_2}) \sum_{j=1}^{\tau_1} (1-\frac{p_1}{2})^{j}, \\ 
    \Pi_2(T)  &= \frac{2(T-\tau_1)}{T p_2^2} + \frac{2\tau_1}{T p_1^2} -  (\frac{2}{T p_1^2} - \frac{2}{Tp_2 p_1}) \sum_{j=1}^{\tau_1} (1-\frac{p_1}{2})^{j}.
\end{align*}
\end{lemma}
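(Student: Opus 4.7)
My plan is to compute the inner quantities $\pi_j$ in closed form for each $j \in \{0,1,\ldots,T-1\}$, and then sum them directly. The two-phase structure of $\bm{p}$ cleanly partitions the index set into the range $j \ge \tau_1$ (where every factor $1-p^{(t)}/2$ in the product defining $\pi_j$ equals $1-p_2/2$) and the range $j < \tau_1$ (where the initial $\tau_1 - j$ factors equal $1-p_1/2$ and all subsequent factors equal $1-p_2/2$). For $j \ge \tau_1$, the sum $\pi_j$ collapses to the geometric series $\sum_{k=0}^{\infty}(1-p_2/2)^k = 2/p_2$. For $j < \tau_1$, I would split the sum at $i = \tau_1 + 1$ and evaluate two geometric sums:
\begin{equation*}
\pi_j = \sum_{k=0}^{\tau_1 - j}(1-p_1/2)^k \;+\; (1-p_1/2)^{\tau_1 - j}\sum_{k=1}^{\infty}(1-p_2/2)^k.
\end{equation*}
Using the identity $(2/p)(1-p/2) = 2/p - 1$ to simplify, both pieces combine into the compact form $\pi_j = 2/p_1 + (1-p_1/2)^{\tau_1 - j}(2/p_2 - 2/p_1)$.

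Next I would aggregate $\Pi_1(T) = \frac{1}{T}\sum_{j=0}^{T-1}\pi_j$ by splitting into $j < \tau_1$ and $j \ge \tau_1$. The second block contributes $(T-\tau_1)\cdot 2/(Tp_2)$ directly, and the first block yields $2\tau_1/(Tp_1) + (2/p_2 - 2/p_1)\cdot\frac{1}{T}\sum_{j=0}^{\tau_1 - 1}(1-p_1/2)^{\tau_1 - j}$. Re-indexing the inner sum by $k = \tau_1 - j$ turns it into $\sum_{k=1}^{\tau_1}(1-p_1/2)^k$, and regrouping the coefficient of this sum produces exactly the stated expression for $\Pi_1(T)$. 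The computation of $\Pi_2(T)$ proceeds in parallel: since $p^{(j)} = p_1$ for $j$ in the first phase and $p^{(j)} = p_2$ in the second, I divide the closed form of $\pi_j$ by the appropriate constant in each block, and the identical re-indexing produces the claimed formula for $\Pi_2(T)$.

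There is no conceptual obstacle here; the argument is a bookkeeping exercise on geometric series. The one place to be careful is the algebraic simplification that collapses the two terms of $\pi_j$ (for $j < \tau_1$) into a single expression with a clean coefficient $2/p_2 - 2/p_1$; this hinges on the identity noted above and is easy to mis-execute. A second, minor, point is an indexing convention: the definition of $\pi_j$ involves an infinite sum, so the calculation tacitly assumes $p^{(t)} = p_2$ continues to hold for $t > T$ (equivalently, the formula only uses that the contraction factors beyond iteration $\tau_1$ are all $1-p_2/2$). With these two points handled, the derivation reduces to routine geometric-series manipulation.
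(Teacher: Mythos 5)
Your proposal is correct and follows essentially the same route as the paper's proof: compute $\pi_j$ in closed form by splitting the defining sum at the phase boundary into two geometric series (yielding $\pi_j = 2/p_2$ for $j\ge\tau_1$ and $\pi_j = \tfrac{2}{p_1} - (1-\tfrac{p_1}{2})^{\tau_1-j}(\tfrac{2}{p_1}-\tfrac{2}{p_2})$ for $j<\tau_1$), then average over $j$ with the re-indexing $k=\tau_1-j$. Your remark that the infinite sum in $\pi_j$ tacitly requires $p^{(t)}=p_2$ to persist beyond $T$ is a valid observation about an implicit convention the paper also relies on.
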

The proof of Lemma~\ref{lem:T((p1,tau1),p2)} is elementary and will be provided in Appendix~\ref{sec: deferred proof}.
From Lemma~\ref{lem:T((p1,tau1),p2)} we have $\Pi_1(T)\rightarrow 2/p_2$ and $\Pi_2(T) \rightarrow 2/p_2^2$ as $T$ grows. 
Therefore, by a linear search over $T=\tau_1, \tau_1 +1, \dots$, the first $T$ satisfying $T\ge T_1(\Pi_1(T), \Pi_2(T), \pi_0, p_{\min}, \epsilon, \overline{\bm{x}}^{(0)})$ is the value of $T_3((p_1, \tau_1), p_2)$. \looseness=-1

\subsubsection{Computation of Design Objective}

Lemma~\ref{lem:bound under budget D for K=1} 
together with the function $\rho^+_{\mathcal{A}}(D)$ enables us to upper-bound the objective function \eqref{eq:obj over K phases}.
As an example, consider $K=1$.  
By Lemma~\ref{lem:bound under budget D for K=1}, the random mixing matrix $\bm{W}$ designed by $\mathcal{A}$ for budget $D$ will achieve convergence with an expected maximum energy consumption per node no more than  
\begin{equation}
\label{eq: QK=1}
Q_{K=1}(D) := 
q\big(T_3(1-\rho_{\mathcal{A}}^+(D)), D\big).
\end{equation}
Thus, by relaxing the objective function \eqref{eq:obj over K phases} into its upper bound \eqref{eq: QK=1},  we can obtain a 1-phase (randomized) mixing matrix design by minimizing $Q_{K=1}(D)$ over $D \in \mathds{R}^+$ and then feeding the resulting $D$ into the given subroutine $\mathcal{A}$ to obtain a random mixing matrix $\bm{W}$. 

\begin{algorithm}[tb]
\small
\SetKwInOut{Input}{input}\SetKwInOut{Output}{output}
\Input{Maximum \#phases $\overline{K}$, subroutine $\mathcal{A}$ for the problem \eqref{eq:budgeted random W design} and an associated function $\rho^+_{\mathcal{A}}$.}
\Output{Mixing matrices $\bigcup_{s=1}^{K^*}\{\bm{W}_s^{(t)}\}_{t=1}^{\tau^*_s}$.}
\For{$K \gets 1$ \KwTo $\overline{K}$}{
Minimize $Q_K=\sum_{s=1}^{K} q(\tau_s, D_s)$ for $\tau_K = T_3((p_1,\tau_1), \dots, (p_{K-1}, \tau_{K-1}), p_K) - \sum_{s=1}^{K-1} \tau_s$ and $p_s=1-\rho^+_{\mathcal{A}}(D_s)$\; \label{MMMD:2}
}
$K^* \gets \argmin_K Q_K$, with the corresponding solution $D^*_1, \dots, D^*_{K^*}, \tau^*_1, \dots, \tau^*_{K^*-1}$\;
\For{$s \gets 1$ \KwTo $K^*$}{
    \For{$t \gets 1$ \KwTo $\tau^*_s$}{
           $\bm{W}_s^{(t)} \gets \mathcal{A}(D^*_s)$\;
    }
}
return $\bigcup_{s=1}^{K^*}\{\bm{W}_s^{(t)}\}_{t=1}^{\tau^*_s}$\;
\caption{Multi-phase Mixing Matrix Design}
\label{alg:framework}
\vspace{-.05em}
\end{algorithm}

Now consider the case of $K=2$. 
This enlarges our design space to include distinct positive budgets $D_1>0$ and $D_2>0$ for each phase, as well as the number of iterations $\tau_1$ for phase $1$. 
We first rewrite the objective function \eqref{eq:obj over K phases} specific to $K=2$:
\begin{align}
&\min_{ \{\bm{W_{s}}, \tau_s\}_{s=1}^K }\:  
    \E\left[
    \max_{i\in V} \sum_{s=1}^K 
    \sum_{t=1}^{\tau_s}
    c_i(\bm{W}_{s}^{(t)}) 
    \right] =
    \nonumber\\
& \min_{ \bm{W}_1, \bm{W}_2, \tau_1}\hspace{-.5em} \E\left[ 
    \max_{i\in V} \big[ \sum_{t=1}^{\tau_1}c_i(\bm{W}_1^{(t)}) + \hspace{-.5em} \sum_{t=\tau_1+1}^{T_3( (p_1, \tau_1), p_2)}\hspace{-.5em} c_i(\bm{W}_2^{(t-\tau_1)}) \big]
\right] \label{eq:K=2 simplified objective randomized}
\end{align}
Given a solution $\mathcal{A}$ to the sub-problem \eqref{eq:budgeted random W design}, we apply $\mathcal{A}$ to $D_1$ and $D_2$, respectively. 
Then, based on the performance bound $\rho^+_{\mathcal{A}}$, the objective \eqref{eq:K=2 simplified objective randomized} of 2-phase design can be upper-bounded by
\begin{equation}
    \label{eq:Q K=2}
   Q_{K=2}(D_1, D_2, \tau_1) := 
   q(\tau_1, D_1) + q(T_3((p_1, \tau_1), p_2) - \tau_1, D_2), 
\end{equation}
where $p_1 = 1- \rho^+_{\mathcal{A}}(D_1)$ and $p_2 = 1- \rho_{\mathcal{A}}^+(D_2)$. 
Therefore, optimizing $Q_{K=2}(D_1, D_2, \tau_1)$ over the $3$-tuple $(D_1, D_2, \tau_1)$ will yield an optimized 2-phase design. 

\subsection{Overall Solution}

We propose to solve the multi-phase design problem~\eqref{eq:adaptive mixing matrix design, phased} through a trilevel optimization as shown in Algorithm~\ref{alg:framework}:
\begin{itemize}
    \item \noindent \textbf{Upper-level optimization:}~Decide the number of phases $K$ to minimize the overall objective.

    \item \noindent \textbf{Intermediate-level optimization:}~Given a number of phases $K$, minimize the relaxed objective $Q_K:=\sum_{s=1}^{K} q(\tau_s, D_s)$ to determine the budget $D_s$ and the duration $\tau_s$ of each phase $s=1,\dots, K$.

    \item \noindent \textbf{Lower-level optimization:}~Given the budget $D_s$ of each phase, design a random instance $\bm{W}_s^{(t)}$ of the mixing matrix for each iteration in this phase by applying the given solution to the sub-problem \eqref{eq:budgeted random W design}.
\end{itemize}
The focus of Algorithm~\ref{alg:framework} is on the intermediate-level optimization, which optimizes a function of $2K-1$ variables (i.e., $D_1, \dots, D_K, \tau_1, \dots, \tau_{K-1}$). Given the performance bound $\rho_{\mathcal{A}}^+$ of the lower-level optimization, this intermediate-level optimization only has an $O(K)$-dimensional solution space, which is much more tractable than directly optimizing $K$ $m\times m$ random matrices.  

\textbf{Remark:} In the remainder of this work, we will solve the upper/intermediate-level optimizations numerically for small values of $K$ to focus on solving the lower-level optimization under broadcast communications. 
Our trilevel optimization framework is generally applicable under any communication model. 
We leave the development of more efficient solutions to the upper/intermediate-level optimizations to future work.

\section{Lower-level optimization under broadcast communication}\label{sec:Lower-level optimization for broadcast communication}

In this section, we focus on developing efficient solutions to the budgeted problem defined in \eqref{eq:budgeted random W design} under the broadcast communication cost model.

\subsection{Algorithm Design}

\begin{algorithm}[t]
\caption{Budgeted Mixing Matrix Design for Broadcast Communication}
\label{alg:randomized-W for broadcast}
\small
\KwIn{A base topology $G=(V,E)$ with per-computation  cost $c_i^a$ and per-transmission cost $c_i^b$ ($\forall i\in V$) and per-node  budget $D$. }
\KwOut{A mixing matrix \(\bm{W}\) with an expected per-node cost no more than $D$.}

\BlankLine
\textbf{Step 1: Sample a subset of nodes to activate} \\
\Indp
Let $U \gets \emptyset$ denote the set of activated nodes\;
\ForEach{\(i \in V\)}{
    Add node \(i\) to \(U\) independently with probability
    $\min((D-c_i^a)/c_i^b,1)$\;
}
\Indm

\BlankLine
\textbf{Step 2: Design entries for nodes not in \(U\)} \\
\Indp
\ForEach{\(i \in V \setminus U\)}{
   \(\bm{W}[i,i] \gets 1\)\;
   \(\bm{W}[i,j]\gets 0,\: \bm{W}[j,i] \gets 0\) for all \( j \neq i\)\;
}
\Indm

\BlankLine
\textbf{Step 3: Design entries for nodes in \(U\)} \\
\Indp
\ForEach{\(i \in U\)}{
    \ForEach{$j\in U \setminus V_i$}
    {
        $\bm{W}[i,j] \gets 0$\; 
    }

    \ForEach{$j\in (V_i \cap U)\setminus \{i\}$}
    {\(\bm{W}[i,j] \gets 1 / \max(|V_i\cap U|, |V_j\cap U|)\)\;}
       
    \(\bm{W}[i,i] \gets 1-\sum_{j\in V\setminus \{i\}}\bm{W}[i,j] \)\;
}
\end{algorithm}

To pose a well-defined and non-trivial problem, we  assume
\begin{align}\label{eq:condition on D - clique}
\max_i c_i^a \le D < \max_i (c_i^a + c_i^b). 
\end{align}
Indeed, if $D\ge \max_i (c_i^a + c_i^b)$, then we can deterministically activate all the nodes, in which case the optimal mixing matrix can be efficiently computed by solving a semi-definite programming problem~\cite{Chiu23JSAC}. 
Meanwhile, $D< \max_i c_i^a$ is not feasible under the cost model in \eqref{eq:cost definition - broadcast} as some node will exceed the budget even if it does not communicate. 

Under the cost model \eqref{eq:cost definition - broadcast}, the problem \eqref{eq:budgeted random W design} becomes 
\begin{subequations}\label{eq:budgeted random W design - clique}
\begin{align}
    & \min_{\Pr[\cdot]} \left\| \sum_{\bm{W}\in \mathcal{M}} \Pr[\bm{W}] \bm{W}^\top \bm{W} - J \right\| \label{eq: broadcast K=1 obj clique} \\
    & \sum_{\bm{W} \in \mathcal{M}} \Pr[\bm{W}] = 1, \text{ and } \Pr[\bm{W}] \ge 0 ~~ \forall \bm{W} \in \mathcal{M}, \label{eq:distribution constraint} \\
    &  c_i^a + 
    \Pr[\exists j\neq i: \bm{W}[i,j]\neq 0] \cdot c_i^b \le D, \quad \forall i\in V \label{eq: broadcast K=1 constraint clique hetero} \\
    &  \Pr\left[\mathbbm{1}[\bm{W}[i,j] \neq 0] \le \mathbbm{1}[E[i,j]\neq 0], \forall i\neq j \in V \right] = 1. \label{eq: edge constraint w.p. 1}
\end{align}
\end{subequations}

We now present Algorithm~\ref{alg:randomized-W for broadcast}, a randomized algorithm designed to solve \eqref{eq:budgeted random W design - clique} in 3 steps.
\begin{enumerate}
    \item In Step~1, we activate a subset of nodes \(U \subseteq V\) by independently selecting each node with a probability reflecting its residual budget after subtracting the computation cost. 
    \item In Step~2, nodes in $V \setminus U$ are assigned weights so that they essentially perform only local updates. 
    \item In Step~3, nodes in $U$ are assigned mixing weights between themselves according to the Metropolis–Hastings rule~\cite{Xiao2006DistributedAC}.
\end{enumerate}
Random client selection—as used in Step 1 of Algorithm \ref{alg:randomized-W for broadcast}—has been a standard technique in centralized FL to improve communication efficiency~\cite{McMahan17AISTATS, WangLinChen22}.
In the decentralized setting, SwarmSGD~\cite{Nadiradze21} can be viewed as a special case of our random client selection in which only two randomly chosen adjacent nodes  gossip. The method in \cite{ZhouLLOWY21} adopts a similar strategy of keeping one random neighbor for each node, but further refines it by optimizing the sampling distribution to favor high-speed edges.
Our random-selection, in contrast, places no restrictions on the number of participating nodes or the number of neighbors each node may have, yet it guarantees that every realization yields a mixing-matrix design tailored to our specific budgeted problem. In particular,  
in Lemma~\ref{lem|:feasibility} we show that the mixing matrix generated by Algorithm~\ref{alg:randomized-W for broadcast} is always topology-compliant and a feasible solution to problem \eqref{eq:budgeted random W design - clique}, e.g., a valid randomized mixing matrix that satisfies the budget $D$ at every node.

\begin{lemma}\label{lem|:feasibility}
    Let $G$ be an arbitrary base topology. 
    Assume \eqref{eq:condition on D - clique} holds for the given $(c_i^a, c_i^b)_{i\in V}$ and $D$.
    The mixing matrix $\bm{W}$ outputted by Algorithm~\ref{alg:randomized-W for broadcast} is a feasible solution to problem \eqref{eq:budgeted random W design - clique}. 
\end{lemma}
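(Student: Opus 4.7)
The plan is to verify each requirement of problem~\eqref{eq:budgeted random W design - clique} separately, exploiting the fact that the distribution constraint~\eqref{eq:distribution constraint} is automatic (Algorithm~\ref{alg:randomized-W for broadcast} outputs a well-defined random matrix), while \eqref{eq: edge constraint w.p. 1} together with the membership $\bm{W}\in\mathcal{M}$ must hold for \emph{every} realization of the random activation set $U$, and the budget constraint \eqref{eq: broadcast K=1 constraint clique hetero} must hold only in expectation. I would therefore condition on $U$ first to establish the almost-sure properties, and then handle the budget constraint by a direct probability calculation in Step~1.

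Fix any realization of $U$. First I would check symmetry: Step~2 zeros rows and columns indexed by $V\setminus U$ symmetrically, and the off-diagonal weights assigned in Step~3 use the symmetric expression $1/\max(|V_i\cap U|,|V_j\cap U|)$, so $\bm{W}^\top=\bm{W}$. Next, for the row/column sums: if $i\in V\setminus U$ then $\bm{W}[i,i]=1$ and the remaining entries in row $i$ vanish; if $i\in U$, Step~3 explicitly sets $\bm{W}[i,i]=1-\sum_{j\ne i}\bm{W}[i,j]$, so the row sum is $1$ by construction, and column sums follow by symmetry. This confirms $\bm{W}\in\mathcal{M}$. Finally, for topology-compliance, Step~2 leaves rows in $V\setminus U$ with no nonzero off-diagonal entries, and Step~3 only places a nonzero entry at $(i,j)$ when $j\in (V_i\cap U)\setminus\{i\}$, which forces $(i,j)\in E$ by the definition of $V_i$. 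Hence \eqref{eq: edge constraint w.p. 1} holds with probability one.

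It remains to verify \eqref{eq: broadcast K=1 constraint clique hetero}. Let $A_i:=\{\exists j\ne i:\bm{W}[i,j]\ne 0\}$ be the event that node $i$ must transmit. Because Step~2 forces row $i$ to be a standard basis vector whenever $i\notin U$, the inclusion $A_i\subseteq\{i\in U\}$ holds, and the independent Bernoulli sampling in Step~1 gives
\[
\Pr[A_i]\;\le\;\Pr[i\in U]\;=\;\min\big((D-c_i^a)/c_i^b,\,1\big).
\]
The hypothesis~\eqref{eq:condition on D - clique} ensures $D-c_i^a\ge 0$, so multiplying by $c_i^b$ and adding $c_i^a$ yields
\[
c_i^a+\Pr[A_i]\,c_i^b\;\le\;c_i^a+\min(D-c_i^a,\,c_i^b)\;\le\;D,
\]
which is exactly the constraint~\eqref{eq: broadcast K=1 constraint clique hetero}.

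I do not anticipate a serious obstacle: the algorithm is engineered so that feasibility holds by inspection. The only subtlety is keeping track of the two regimes in the $\min$ defining the activation probability, and using the standing hypothesis~\eqref{eq:condition on D - clique} to guarantee that this probability is a well-defined number in $[0,1]$ that simultaneously bounds the transmission probability from above.
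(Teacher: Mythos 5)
Your proof is correct and follows essentially the same route as the paper's: verify the almost-sure constraints (symmetry, unit row/column sums, topology-compliance) realization by realization, then bound the transmission probability by $\Pr[i\in U]=\min((D-c_i^a)/c_i^b,1)$ to obtain the expected-budget constraint. The only cosmetic difference is that you carry the $\min$ through the final inequality directly, whereas the paper first dispenses with the trivial case $c_i^a+c_i^b\le D$; both yield the same conclusion.
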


\subsection{Performance Analysis}\label{sec:solution M}

\subsubsection{Result for Fully-connected Base Topology}
Consider the case where the base topology is a clique, i.e., every node can receive the broadcast of every other node\footnote{We assume that proper transmission scheduling is in place to avoid collision. The specific schedule is irrelevant under the cost model \eqref{eq:cost definition - broadcast}.}. 
In this case, we can characterize the performance of Algorithm~\ref{alg:randomized-W for broadcast} analytically.

To present this result, we introduce some additional notations. We use $\sim_{m}$ to relate two quantities $A$ and $B$ if $\lim_{m\rightarrow\infty}\frac{A}{B} = 1$, that is, $A$ is \emph{asymptotically equivalent} to $B$.
Also, for each $i\in V$, we let
$$
\omega_i:= \min\left(\frac{D-c_i^a}{c_i^b}, 1\right),$$
and we use $\bm{\omega}$ to denote the vector $[\omega_1, \omega_2, \dots, \omega_m]$. 
Lastly, given an arbitrary vector $\bm{a}$, let $\diag(\bm{a})$ denote  
the diagonal matrix with $\bm{a}$ on the main diagonal, 
and $\bm{a}^2$ denote the vector $[a_1^2, \dots, a_m^2]$. 
Based on these notations, we characterize the performance of Algorithm~\ref{alg:randomized-W for broadcast} as follows. 

\begin{theorem}\label{thm: guarantee heterogeneous}
Assume $G$ is a clique, and  \eqref{eq:condition on D - clique} holds for the given cost vectors $(c_i^a, c_i^b)_{i\in V}$ and budget parameter $D$.
 The mixing matrix $\bm{W}$ designed by Algorithm~\ref{alg:randomized-W for broadcast} satisfies    
    \begin{equation}\label{eq:rho(W) - clique}
    \rho(\bm{W})    \sim_{m} \left\| m^\perp \bm{\omega}\bm{\omega}^\top + \diag(\bm{1}-\bm{\omega} ) - \bm{J} \right\|,
    \end{equation}
    where $m^\perp$ denote the scalar $\E[\frac{1}{|U|} \mid U\neq \emptyset]$ with the expectation taken over the random generation of $U$.
\end{theorem}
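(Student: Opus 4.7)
My first step is to exploit the clique structure: since $V_i\cap U = U$ for every $i\in U$, Algorithm~\ref{alg:randomized-W for broadcast} yields the random matrix $\bm{W}=\tfrac{1}{|U|}\bm{e}_U\bm{e}_U^\top+\diag(\bm{1}-\bm{e}_U)$ (with the first summand set to $\bm{0}$ when $U=\emptyset$), where $\bm{e}_U\in\{0,1\}^m$ is the indicator vector of $U$. I will then observe that the two summands have disjoint supports, so their cross-products vanish and direct multiplication gives $\bm{W}^2=\bm{W}$; that is, $\bm{W}$ is a random orthogonal projection, and hence $\bm{W}^\top\bm{W}=\bm{W}$. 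This collapses the quantity of interest to $\rho(\bm{W})=\|\E[\bm{W}]-\bm{J}\|$, i.e., a first-moment computation.

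\textbf{Entrywise reduction.} Using independence of the Step~1 Bernoulli draws, for $i\ne j$ I have $\E[\bm{W}[i,j]]=\omega_i\omega_j\,\E[1/|U|\mid i,j\in U]$, and on the diagonal $\E[\bm{W}[i,i]]=\omega_i\,\E[1/|U|\mid i\in U]+(1-\omega_i)$. Setting $M^\star:=m^\perp\bm{\omega}\bm{\omega}^\top+\diag(\bm{1}-\bm{\omega})$, the difference $\bm{D}:=\E[\bm{W}]-M^\star$ thus has off-diagonal entries $\omega_i\omega_j(\E[1/|U|\mid i,j\in U]-m^\perp)$ and diagonal entries $\omega_i\,\E[1/|U|\mid i\in U]-m^\perp\omega_i^2$. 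Hence the theorem reduces to two claims: (i) the uniform asymptotic $\E[1/|U|\mid S\subseteq U]=(1+o(1))m^\perp$ over all $S\subseteq V$ with $|S|\le 2$, and (ii) $\|\bm{D}\|=o(\|M^\star-\bm{J}\|)$.

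\textbf{Concentration and closeout.} For (i), I will let $\mu:=\sum_k\omega_k=\E[|U|]$, which grows with $m$ in any non-degenerate regime, and apply a Chernoff bound: $\Pr[|U|<\mu/2]\le e^{-\mu/8}$, so the contribution of $\{0<|U|<\mu/2\}$ to $\E[1/|U|\cdot\mathbbm{1}[U\ne\emptyset]]$ is $o(1/\mu)$; on the complement, $1/|U|\in[1/\mu,2/\mu]$ concentrates around $1/\mu$ by the weak law, giving $m^\perp=(1+o(1))/\mu$. Conditioning on $S\subseteq U$ only removes $|S|=O(1)$ Bernoullis from the sum, so the identical argument applied to the residual sum of $m-|S|$ terms gives $\E[1/|U|\mid S\subseteq U]=(1+o(1))/\mu=(1+o(1))m^\perp$, uniformly in $S$. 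For (ii), plugging these in shows the off-diagonal block of $\bm{D}$ equals $m^\perp$ times a Hadamard product $\bm{\omega}\bm{\omega}^\top\odot\bm{\epsilon}$ with $\|\bm{\epsilon}\|_\infty=o(1)$, whose Frobenius norm (and hence spectral norm) is at most $\|\bm{\epsilon}\|_\infty\cdot m^\perp\|\bm{\omega}\|_2^2=o(1)$, using $\|\bm{\omega}\|_2^2\le\mu$ and $m^\perp=\Theta(1/\mu)$; the diagonal part of $\bm{D}$ equals $\diag(m^\perp\omega_i(1-\omega_i)+o(m^\perp))$, whose spectral norm is $O(m^\perp)=o(1)$. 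Together these yield $\|\bm{D}\|=o(1)$, so by the reverse triangle inequality, combined with $\|M^\star-\bm{J}\|$ being bounded below by a positive constant in non-degenerate regimes (e.g., the eigenvalue $1-\omega$ in the homogeneous case), I conclude $\rho(\bm{W})\sim_m\|M^\star-\bm{J}\|$. The main obstacle will be executing (i) cleanly, specifically, ensuring that the small-$|U|$ tail (where $1/|U|$ can be as large as $1$) is genuinely negligible and that bounded-size conditioning on $S$ does not distort the leading $1/\mu$ behavior; a naive application of Jensen's inequality is insufficient here, and exponential tail control of $|U|$ is what makes the $(1+o(1))$ factor uniform.
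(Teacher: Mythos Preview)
Your approach mirrors the paper's: both compute the entries of $\E[\bm{W}^\top\bm{W}]$ and invoke the asymptotic $\E[1/|U|\mid i,j\in U]\sim_m m^\perp$ to identify the target matrix entrywise. Your observation that on a clique $\bm{W}$ is an orthogonal projection (so $\bm{W}^\top\bm{W}=\bm{W}$ and the problem reduces to a first-moment computation) is a clean shortcut the paper does not make explicit---it instead expands $\sum_k \bm{W}[i,k]\bm{W}[k,j]$ directly---but the resulting entrywise formulas coincide. Where you go further is in supplying the two justifications the paper's proof asserts without argument: the uniform $(1+o(1))$ behavior of $\E[1/|U|\mid S\subseteq U]$ via a Chernoff tail bound on $|U|$, and the passage from entrywise $\sim_m$ to spectral-norm $\sim_m$ via an explicit Frobenius bound on the residual $\bm{D}=\E[\bm{W}]-M^\star$. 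Your caveat that the conclusion $\rho(\bm{W})\sim_m\|M^\star-\bm{J}\|$ requires $\|M^\star-\bm{J}\|$ to be bounded away from zero is well-placed; the paper's proof does not address this point either.
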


Moreover, in the special case of homogeneous cost parameters, i.e., $c_i^a \equiv c^a$, $c_i^b \equiv c^b$, 
we can explicitly express the dependency of $\rho(\bm{W})$ on the cost parameters $(c^a, c^b)$ and the budget $D$. 
Specifically, for any $D$ satisfying $c^a \le D <c^a + c^b$, we have \looseness=-1
\[
\omega_i =  \frac{D-c^a}{c^b} =: \Tilde{\omega},~~~ \forall i \in V.
\]
Given the fact that $m^\perp \sim_m 1/(\Tilde{\omega}m)$, we obtain that
\begin{align}
    \rho(\bm{W} ) 
    &\sim_{m} \left\| \frac{1}{\Tilde{\omega}m} \cdot \Tilde{\omega}^2 \bm{1}\bm{1}^\top + 
    (1 - \Tilde{\omega}) \bm{I}
    - \bm{J}
    \right\| \nonumber\\
    &=   \left\|  (1 - \Tilde{\omega}) (\bm{I} - \bm{J}) \right\| 
     =  1- \Tilde{\omega}  = 1-{D-c^a\over c^b}. \label{eq:rho(W) - clique, homogeneous}
\end{align}

\subsubsection{Discussion on General Base Topology}\label{subsubsec:Discussion on General Base Topology}
Using a similar analysis as Theorem~\ref{thm: guarantee heterogeneous}, one could derive an analogous bound on \(\rho(\bm{W})\) for the $\bm{W}$ designed by Algorithm~\ref{alg:randomized-W for broadcast}. However, the bound will involve $O(m^2)$ conditional expectations analogous to $m^\perp$, which is challenging to compute numerically. In this case, we evaluate $\rho(\bm{W})$ numerically through Monte Carlo experiments, i.e., for a given budget $D$, we generate a large number of matrices $\bm{W}_1, \dots, \bm{W}_N$ from independent runs of Algorithm~\ref{alg:randomized-W for broadcast} and use the empirical mean
$$\hat{\rho}(\bm{W})= \left\|
    \frac{1}{N} \sum_{n=1}^N \bm{W}_n^\top\bm{W}_n-\bm{J}
\right\|$$ to approximate $\rho(\bm{W})$.


\section{Lower-level optimization under unicast communication}\label{sec:Lower-level optimization under unicast communication}

For the budgeted problem in \eqref{eq:budgeted random W design} to be feasible and non-trivial under the unicast cost model \eqref{eq:cost def - unicast}, we assume
\begin{align}\label{eq:condition on D - unicast}
\max_i c_i^a \le D < \max_i \left(c_i^a + \sum_{j:(i,j)\in E} c_{ij}^b\right). 
\end{align}
This section presents a (meta-)algorithm for solving \eqref{eq:budgeted random W design} for any budget $D$ that satisfies the condition in \eqref{eq:condition on D - unicast}, on a general base topology $G=(V,E)$ with arbitrary cost vectors $(c_i^a, c_{ij}^b)_{(i,j)\in E}$.

\subsection{Meta-algorithm Design}

The algorithm starts by employing a given set of (possibly random) graph oracles to generate subgraphs of the base topology $G$ as candidate topologies, with which we then compute candidate mixing matrices and an optimal distribution over the candidates using semi-definite programming (SDP). 
See Algorithm~\ref{alg:randomized-W for unicast new} for its pseudo-code. 



\begin{algorithm}[ht]
\caption{Budgeted Mixing Matrix Design for Unicast Communication}
\label{alg:randomized-W for unicast new}

\KwIn{A base topology $G=(V,E)$ with per-computation  cost $c_i^a$ and per-transmission cost $c_{ij}^b$ ($\forall i,j\in V$) and per-node  budget $D$; 
a set of graph oracles $\{\mathcal{G}^{(k)}_G\}_{k=1}^{K_{\max}}$.}
\KwOut{A list of mixing matrices with their probabilities $\{(\bm{W}_s,p_s)\}_{s=1}^{K_{\max}}$.}

\For{$k=1,2,\ldots,K_{\max} $}{ 
Draw $G_k=(V_k,E_k)$ from $\mathcal{G}_G^{(k)}$ with incidence matrix $\bm{B}_k$\;
Solve \eqref{eq:sdp-weight-design} for $\bm{\alpha}_k$\;
Set \(\bm{W}_k=I - \bm{B}_k\operatorname{diag}(\bm{\alpha}_k)\bm{B}_k^\top\)\;
}
Given candidate matrices $(\bm{W}_1,\dots,\bm{W}_{K_{\max}})$, solve \eqref{eq:budgeted probabilities design} for the optimal probability vector \label{BMM Unicast:10}
\((p_0^\star,p_1^\star,\dots,p_{K_{\max}}^\star)\)\;

\Return $\{(\bm{W}_s,p_s^\star)\}_{s=1}^{K_{\max}}$\;
\end{algorithm}

It can be verified that the output of Algorithm~\ref{alg:randomized-W for unicast new} is a feasible solution to the problem defined in \eqref{eq:budgeted random W design}. Specifically, according to prior works \cite{Chiu23JSAC,zhang2024energyefficient}, the optimal weights for a given activated communication graph $G_k=(V_k,E_k)$ can be computed from the following SDP:\footnote{Here $\bm{B}_k$ is a $|V_k|\times|E_k|$ incidence matrix of $G_k$, defined as $\bm{B}_k[i,j]=1$ if link $e_j$ starts at node $i$ (under arbitrary orientation), $-1$ if $e_j$ ends at $i$, and $0$ otherwise.} 
\begin{equation}
\label{eq:sdp-weight-design}
\begin{aligned}
&\quad \min_{\bm{\alpha}_k}\quad  \rho_k\\
\text{s.t.}\quad
& -\rho_k I
\preceq
I - \bm{B}_k\operatorname{diag}(\bm{\alpha}_k)\bm{B}_k^\top - J
\preceq
\rho_k I.
\end{aligned}
\end{equation}
 At Line~\ref{BMM Unicast:10}, we solve the optimization problem given in \eqref{eq:budgeted probabilities design}. A straightforward calculation shows that the objective \eqref{eq: unicast K=1 obj clique} is equivalent to \eqref{eq: broadcast K=1 obj}. Moreover, the budget constraint in \eqref{eq: unicast K=1 constraint hetero} is a direct instantiation of \eqref{eq: exp edge constraint} under the unicast cost model. Finally, the random graph oracle $\mathcal{G}_G^{(k)}$ ensures that each sampled graph $G_k$ is a subgraph of $G$, thereby automatically satisfying the constraint \eqref{eq:ramdomized graph constraint}. 
\begin{subequations}\label{eq:budgeted probabilities design}
\begin{align}
    & \min_{p_1, \dots, p_{K_{\max}}} \left\| \sum_{k=1}^{K_{\max}}p_k\bm{W}_k^\top \bm{W}_k - J \right\| \label{eq: unicast K=1 obj clique} \\
    & \sum_{k=1}^{K_{\max}} p_k=1 \quad \text{and} \quad p_k\ge 0   ~~\forall ~k=0,\dots, K_{\max},\\
    &  c_i^a + 
    \sum_{k=1}^{K_{\max}} p_k \sum_{j:(i,j)\in E_k} c_{ij}^b 
    \le D, \quad \forall i\in V. \label{eq: unicast K=1 constraint hetero} 
\end{align}
\end{subequations}
Problem \eqref{eq:budgeted probabilities design} is similar to problem (19) in \cite{Chiu23JSAC}, with the key difference that constraint \eqref{eq: unicast K=1 constraint hetero} imposes a ``per-node cost constraint'', whereas the latter enforces a total cost constraint.
 After having a set of $K_{max}$ candidate mixing matrices, we solve  \eqref{eq:budgeted probabilities design}, which an SDP.  
Therefore, the total running time of Algorithm~\ref{alg:randomized-W for unicast new} is polynomial in $m$ and $K_{\max}$.

Algorithm~\ref{alg:randomized-W for unicast new} provides a unified framework that encompasses the approaches proposed in several prior works. Each of these algorithms can be implemented using Algorithm~\ref{alg:randomized-W for unicast new}, equipped with different graph oracles and suitably modified optimization objectives:
\begin{itemize}
\item \cite{MATCHA22} 
introduces Matching Decomposition Sampling, which optimizes the distribution over sets of matchings.
\item \cite{Chiu23JSAC} proposes Laplacian Matrix Sampling, which computes an optimal distribution over a set of (spectral) sparsifiers;
\item \cite{Herrera25OJCS} presents BASS, which designs a distribution over collision-free subgraphs; 
\item \cite{zhang2024energyefficient} proposes a Ramanujan topology for fully connected base topology, and a greedy heuristic for general base topology by iteratively solving \eqref{eq:sdp-weight-design} and removing the link with the minimum absolute weight. 
\end{itemize}
Below, we present a layered graph construction oracle with performance guarantee.  


Let $c^b_{i}:= \max_{j: (i,j)\in E} c^b_{ij}$. The transmission cost at node $i$ is bounded by \(c_i^a+c_i^b\sum_{j:(i,j)\in E}\mathbbm{1}\{\bm W[i,j]\neq 0\}\). We can enforce budget $D$ by bounding the degree of each node by 
\[d_i:= \left\lfloor\frac{D-c_i^a}{c_i^b}\right\rfloor.\] 
Let the distinct degree bounds across all the nodes be \( d_1<d_2<\cdots<d_R,\) and group the nodes according to their degree bounds: \(V_r:=\{i\in V: d_i=d_r\},\; r=1,\ldots,R.\)
For each degree level \(d_r\), define the higher-capacity node set as
\[S_r:=\bigcup_{\ell=r}^R V_\ell
=
\{i\in V:d_i\ge d_r\}.\]
Let \(d_0:=0\) and define the degree increment as 
\[
\Delta_r:=\min (d_r-d_{r-1}, |S_r|-1).
\]

To ensure the existence of a regular graph on $S_r$, we define
\[
\widehat{\Delta}_r :=
\begin{cases}
\Delta_r, & \text{if } |S_r|\Delta_r \text{ is even},\\
\Delta_r-1, & \text{if } |S_r|\Delta_r \text{ is odd}.
\end{cases}
\]
The oracle constructs a candidate graph by sampling, for each
\(r=1,\ldots,R\), a random \(\widehat{\Delta}_r\)-regular Ramanujan graph\footnote{A $d$-regular graph $H$ is a Ramanujan graph if all the non-zero eigenvalues of its Laplacian matrix $\bm{L}_H$ lie between $d-2\sqrt{d-1}$ and $d+2\sqrt{d-1}$ \cite{hoory06}.} \(H^{(r)}\) over the node
set \(S_r\), and constructing the candidate graph as
\[
G_k=(V,E\cap E(H_k)) \mbox{ for } H_k=\bigcup_{r=1}^R H^{(r)}.
\]



\subsection{Performance Analysis}

\subsubsection{Fully-connected Base Topology}

When the base topology $G$ is a clique, the proposed graph oracle effectively produces a union of {Ramanujan graphs}~\cite{hoory06}. The choice of degrees ensures that the constructed graph satisfies the maximum per-node budget constraint. The convergence rate of the designed mixing matrix can be guaranteed as follows. 


\begin{theorem}\label{thm:rho bound on clique - unicast}
Assume $G$ is a clique, and \eqref{eq:condition on D - unicast} holds for the given cost vectors $(c_i^a, c_{ij}^b)_{(i,j)\in E}$ and budget $D$. The mixing matrix $\bm{W}$ designed by Algorithm~\ref{alg:randomized-W for unicast new} when given the layered Ramanujan graph generator as one of the oracles satisfies
\begin{align}\label{eq:rho uppper bound - unicast} 
\rho(\bm{W}) \le \left(
1-\frac{\widehat{\Delta}_1-2\sqrt{\widehat{\Delta}_1-1}}{d_R}
\right)^2, 
\end{align}
regardless of other graph oracles (valid even if $K_{\max}=1$). 
\end{theorem}

\textbf{Remark:} Compared to the solution in \cite{zhang2024energyefficient} that designs a deterministic mixing matrix defined over a single Ramanujan graph with a uniform degree ($d_1$), our layered design allows more energy-efficient nodes to have  larger degrees, which can improve the convergence rate under the same per-node budget.

\subsubsection{General Base Topology}

In the general setting with an arbitrary cost vector and base topology $G$, Algorithm~\ref{alg:randomized-W for unicast new} is expected to perform well because it addresses heterogeneity in two ways: the layered design utilizes heterogeneous energy efficiencies across nodes, while the optimized statistical mixing over a set of candidate graphs allows the algorithm to combine candidate mixing matrices with favorable spectral properties. In practice, since analytically bounding $\rho(\bm{W})$ is difficult, we use the empirical estimate 
of $\rho(\bm{W})$ as a practical surrogate for $\rho_{\mathcal{A}}^+$ in the intermediate-level optimization.

Moreover, as observed in \cite{zhang2024energyefficient}, finding a deterministic matrix $\bm{W}$ for a general graph that satisfies a given maximum per-node budget constraint under the unicast cost model and achieves $\rho(\bm{W}) < 1$ is NP-hard.
In contrast, if the problem is relaxed to allow randomized matrices as considered here, then under the corresponding budget constraint \eqref{eq: unicast K=1 constraint hetero}, the problem becomes tractable.
Specifically, Theorem~\ref{lem:finiteness} guarantees that under mild conditions on the graph oracle, Algorithm~\ref{alg:randomized-W for unicast new} guarantees $\rho(\bm{W})<1$ with $O(m)$ candidates with high probability. 
\begin{theorem}\label{lem:finiteness}
    Suppose that the base topology $G=(V,E)$ is connected, and all candidate graphs are generated by a random graph oracle $\mathcal{G}_G$, for which there exists $c>0$ such that for each $e\in E$, $G_k \sim \mathcal{G}_G$ contains $e$ with probability at least $c$. Let $\rho_k$ denote the objective value of \eqref{eq:budgeted probabilities design} when mixing $k$ such generated candidates, and $\tau$ be the minimum $k$ such that $\rho_k<1$. 
    With probability $1-e^{-\Omega(m)}$, we have $\tau=O(m/c)$.
\end{theorem}

\section{Performance Evaluation}\label{sec:Performance Evaluation}

We evaluate the proposed solution against benchmarks on a real dataset under realistic settings. 

\subsection{Evaluation Setting}

\subsubsection{Problem Setting}
We consider the standard task of image classification based on CIFAR-10, which consists of 60,000 color images in 10 classes. We train a lightweight version of ResNet-50 with {1.5M} parameters over its training dataset with 50,000 images, and then test the trained model over the testing dataset with 10,000 images. We set the learning rate to 0.01, the batch size to 64, the momentum to 0.9, and the weight decay to 0.0005. 
We employ two base topologies: (i) a $33$-node clique, which represents a densely deployed wireless network where every node can reach every other node in one hop; (ii) the topology of Roofnet~\cite{Roofnet}, which contains 33 nodes and 187 links. The data rate is set to 1 Mbps in both topologies according to \cite{Roofnet}. 

\subsubsection{Cost Parameters}
We consider two types of devices commonly used for learning in edge networks: NVIDIA Tegra X2 (TX2) with Broadcom BCM4354 and Jetson Xavier NX with Intel 8265NGW, with a computation power of $4.7$W/$6.3$W and a transmission power of $40$mW/$100$mW, 
respectively~\cite{Carbon2021}. Based on these parameters, we set the computation energy as {$c^a_i = 0.086$mWh for TX2 and $0.086$mWh for NX}, and the communication energy as {$c^b_i =0.533$mWh for TX2 and $1.333$mWh for NX}\footnote{ Our model size is $S=6$MB at FP32, batch size is $B=64$, and processing speed is $t^c_i=1.026$ms per sample for TX2 and $0.769$ms per sample for NX. Under 1Mbps, we estimate the computation energy by $c^a_i = P^c_i*B*t^c_i/3600$ mWh, and the communication energy by $c^b_i=P^t_i*S*8/1/3600$ mWh ($P^c_i$: computation power of node $i$ in W; $P^t_i$: transmission power of node $i$ in mW).}. 
We assign odd-numbered nodes to TX2 and even-numbered nodes to NX.  

\subsubsection{Benchmarks}



We separately select benchmarks for the broadcast and unicast communication settings.

\paragraph{Broadcast benchmarks.}
For broadcast communications, we compare the proposed solution with the following benchmarks:
\begin{itemize}
    \item `Vanilla D-PSGD' \cite{Lian17NIPS}, which is a baseline with all the neighbors communicating in all the iterations
    using Metropolis--Hastings weights;
    \item `AdaPC' \cite{Tchaye-Kondi24TMC}, where all the neighbors communicate periodically, with a period adapted according to \cite[Alg.~2]{Tchaye-Kondi24TMC};
    \item `BASS' \cite{Herrera25OJCS}, a state-of-the-art mixing matrix design for broadcast communication with a different objective of minimizing the communication time (by minimizing the number of collision-free transmission slots); 
    \item `Max Success' \cite{Chen23SPAWC}, another mixing matrix design for broadcast communication with a different objective of maximizing the expected number of successful links; 
    \item `SkipTrain' \cite{DeVos24IPDPSW}, which is the opposite of `AdaPC' that periodically skips local updates\footnote{For a fully connected base topology, `SkipTrain' reduces to `Vanilla D-PSGD', as there is no need for additional communications to achieve synchronization after a training round.}; 
\end{itemize}
We note that the above benchmarks subsume the performance of several other existing solutions, e.g., \cite{MATCHA22} is subsumed by `BASS' \cite{Herrera25OJCS} and \cite{Chiu23JSAC,zhang2024energyefficient} are subsumed by `Vanilla D-PSGD'. 

\paragraph{Unicast benchmarks.}
For unicast communications, we compare the proposed solution with the baselines of `Vanilla D-PSGD', `AdaPC', and `SkipTrain' that apply to both cases, as well as the following benchmarks designed specifically for unicast: \begin{itemize}
    \item `MATCHA' \cite{MATCHA22}, which aims at balancing  convergence rate and communication time by  decomposing the base topology into matchings and optimizing their activation probabilities;
    \item `Single-Layer Ramanujan' (SLR) \cite{zhang2024energyefficient}, which is the state-of-the-art design for energy efficiency in the case of fully connected base topology, by constructing a sparse Ramanujan communication graph based on the degree of the most energy-constrained node; 
    \item `Greedy' \cite{zhang2024energyefficient}, which is the state-of-the-art design for energy efficiency under a general base topology that greedily constructs a communication graph based on a given per-node energy budget by repeatedly solving an SDP.
\end{itemize}
We note that the selected benchmarks subsume the performance of  other existing solutions, e.g., \cite{Chiu23JSAC} is subsumed by `SLR' and `Greedy' from \cite{zhang2024energyefficient} under unicast. 

\subsection{Evaluation Results under Broadcast}

\subsubsection{Results for Clique}
\begin{figure}[t!]
    \centering    \centerline{\mbox{\includegraphics[height = 2in,width=.95\linewidth]{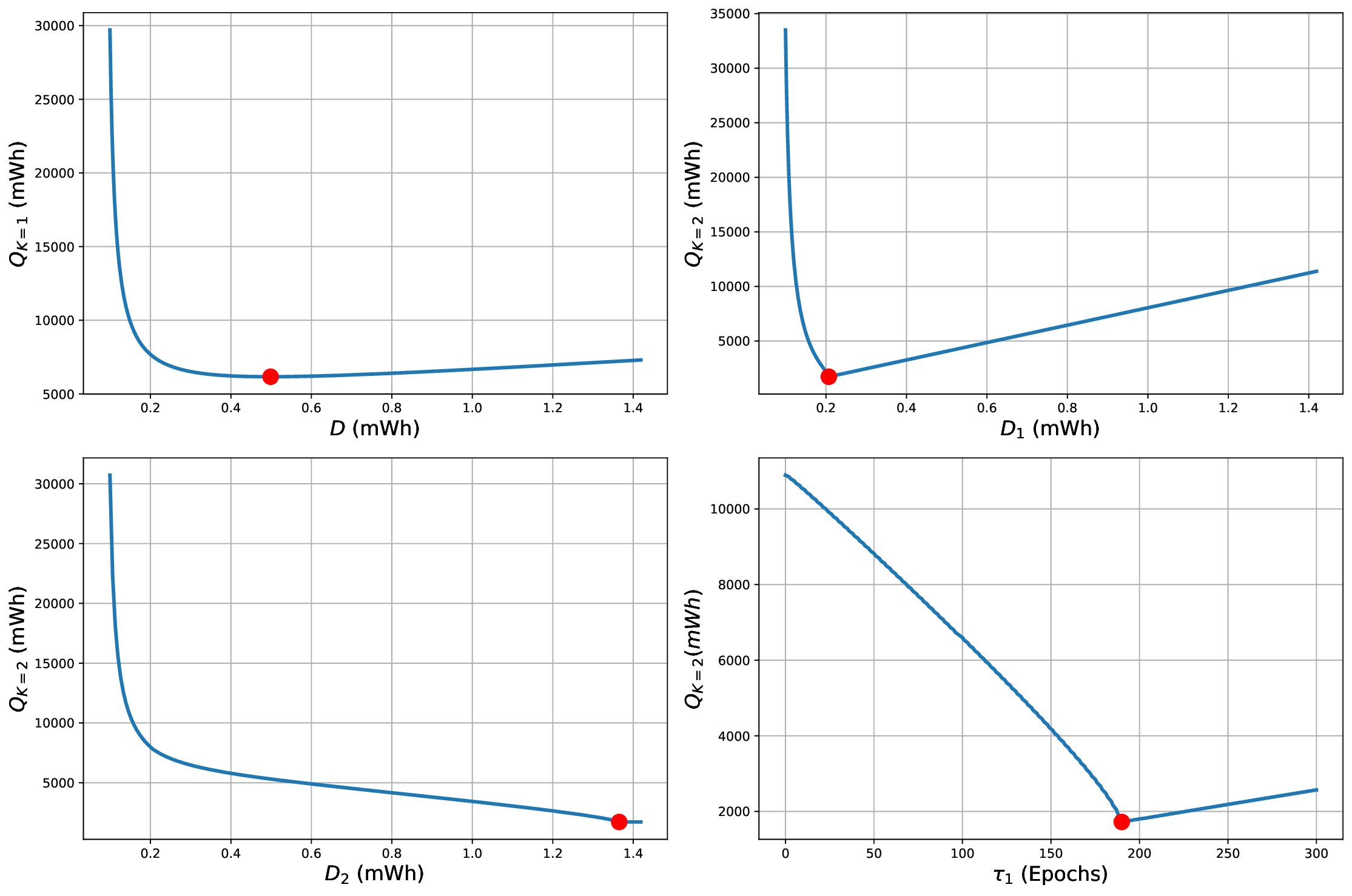}}}
    \vspace{-1em}
    \caption{(Broadcast) Design objective vs. design parameters for clique.  
    }
    \label{fig:design_clique}
    \vspace{-1em}
\end{figure}

\begin{figure}[t!]
    \centering
    \centerline{\mbox{\includegraphics[height = 2.35in,width=.95\linewidth]{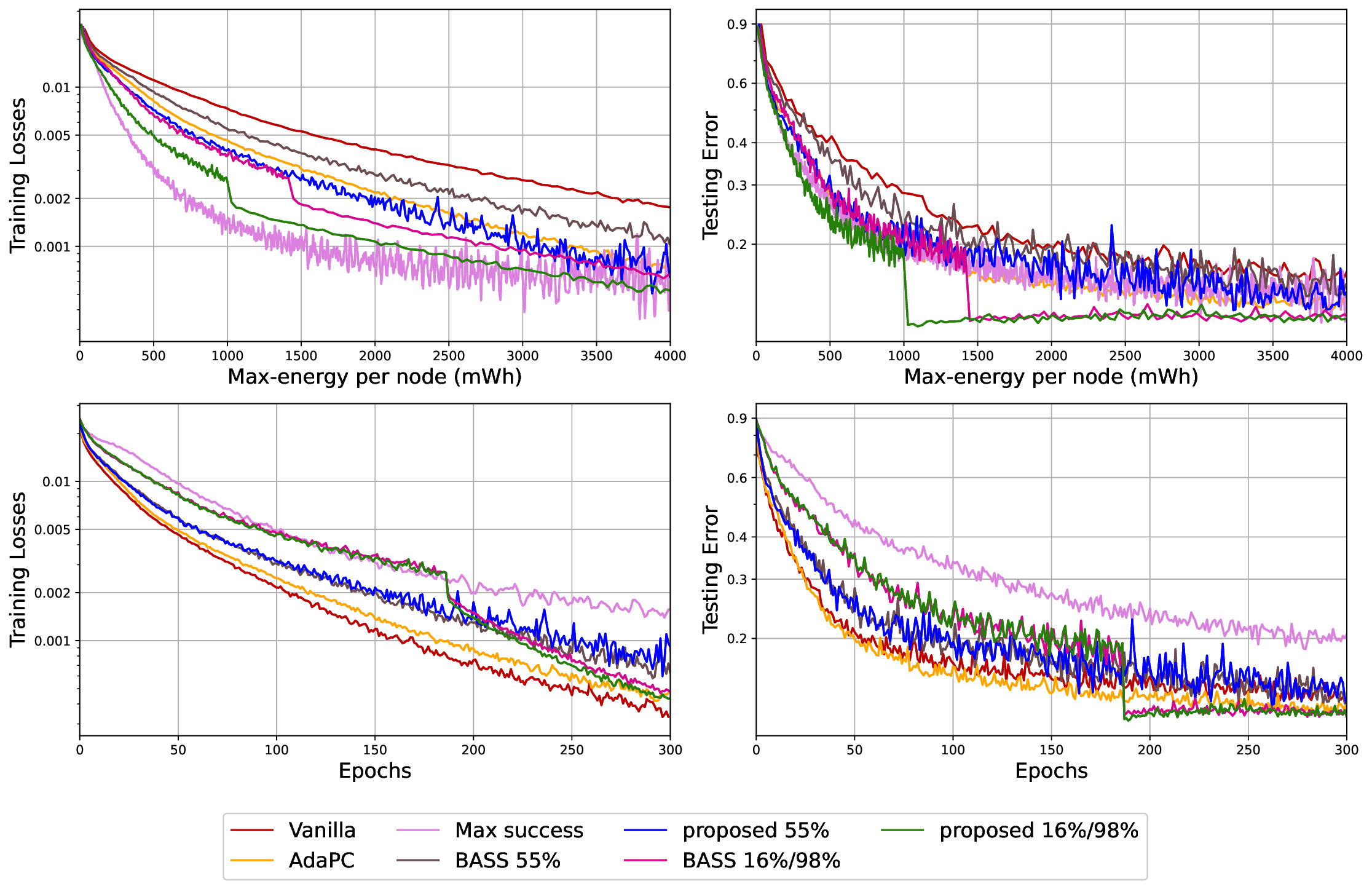}}}
    \vspace{-1em}
    \caption{(Broadcast) Training performance on clique (`proposed/BASS $x\%$': 1-phase design activating $x\%$ of nodes; `proposed/BASS $x\%$/$y\%$': 2-phase design activating $x\%$ of nodes in phase 1 and $y\%$ in phase 2). 
    }
    \label{fig:results_cifar10_clique}
    \vspace{-1em}
\end{figure}
We first evaluate the design objective $Q_K$ with respect to the budget $D_s$ and the duration $\tau_s$ for each phase. Fig.~\ref{fig:design_clique} shows the results for $K=1$  and $K=2$, while fixing the other design parameters at their optimal values. 
The optimal value for each parameter is denoted by \textcolor{red}{\textbullet}. The results not only suggest the benefit of multi-phase design as the optimal value of $Q_{K=2}$ is smaller than the optimal value of $Q_{K=1}$, but also indicate the need of switching from a lower level of activation to a higher level of activation (as $D_1<D_2$). However,  $\tau_s$ from this optimization is often larger than necessary as it is based on an upper bound on the total number of iterations. We thus normalize it by $\tau_s\cdot (T/ \overline{T})$, where $T$ is the actual number of iterations to reach convergence and $\overline{T}$ is an upper bound. 

We then evaluate the training performance in terms of  training loss and testing error, omitting `SkipTrain' as it reduces to `Vanilla' in this case. As `BASS' also has a configurable budget, we evaluate two versions of it with  the same percentage of activated nodes as our 1-phase/2-phase designs.  The results in Fig.~\ref{fig:results_cifar10_clique} show that: (i) partial activation can achieve a better energy efficiency than  activating all the nodes (`Vanilla'); (ii) the proposed solution achieves a better tradeoff between the testing error and the maximum per-node energy consumption than the benchmarks, and 2-phase design outperforms 1-phase design. 

\subsubsection{Results for Roofnet}

\begin{figure}[t!]
    \centering    \centerline{\mbox{\includegraphics[height = 2in,width=.95\linewidth]{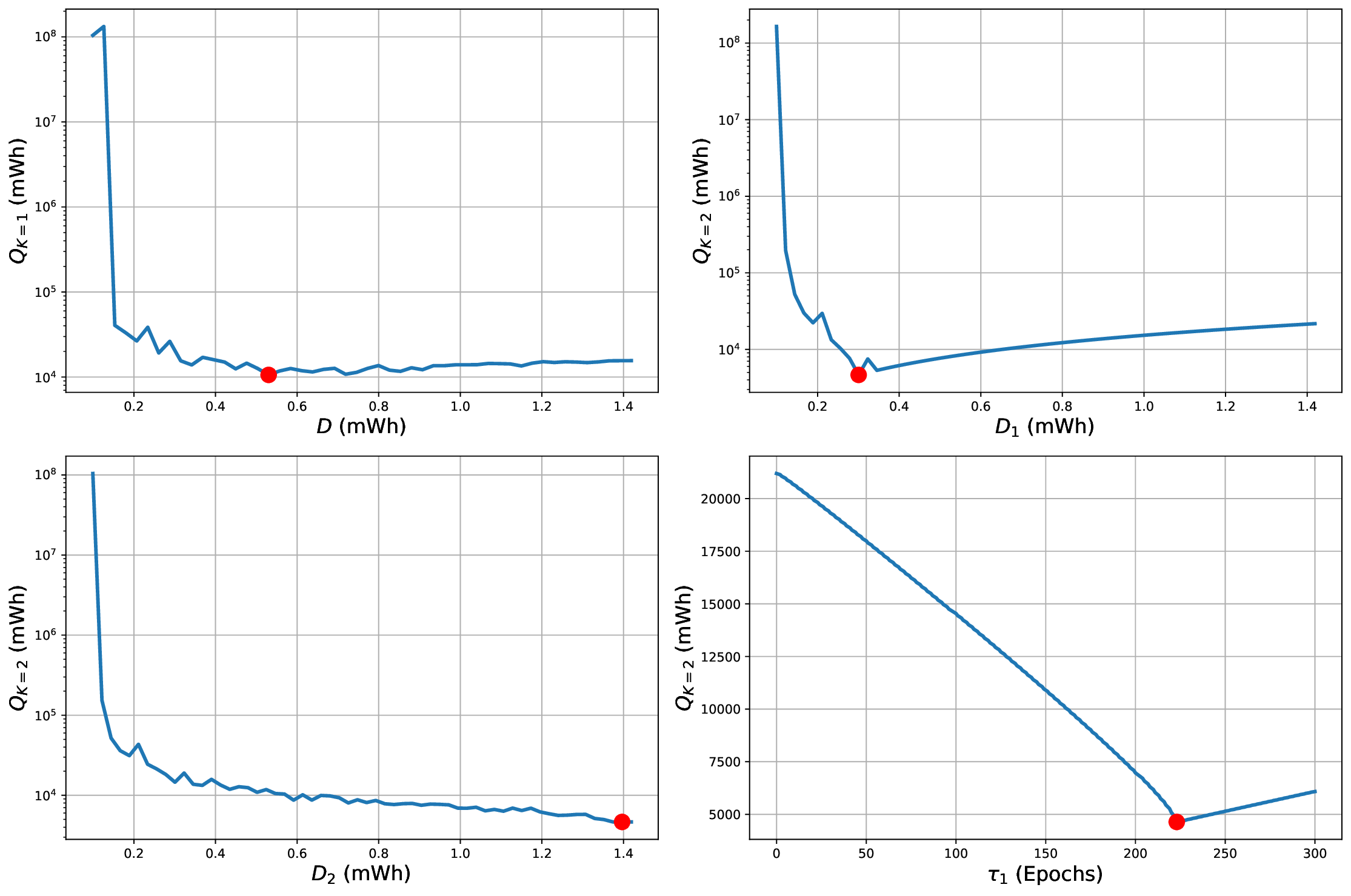}}}
    \vspace{-1em}
    \caption{(Broadcast) Design objective vs. design parameters for Roofnet. 
    }
    \label{fig:design_roofnet}
    \vspace{-1em}
\end{figure}

\begin{figure}[t!]
    \centering    \centerline{\mbox{\includegraphics[height = 2.35in,width=1\linewidth]{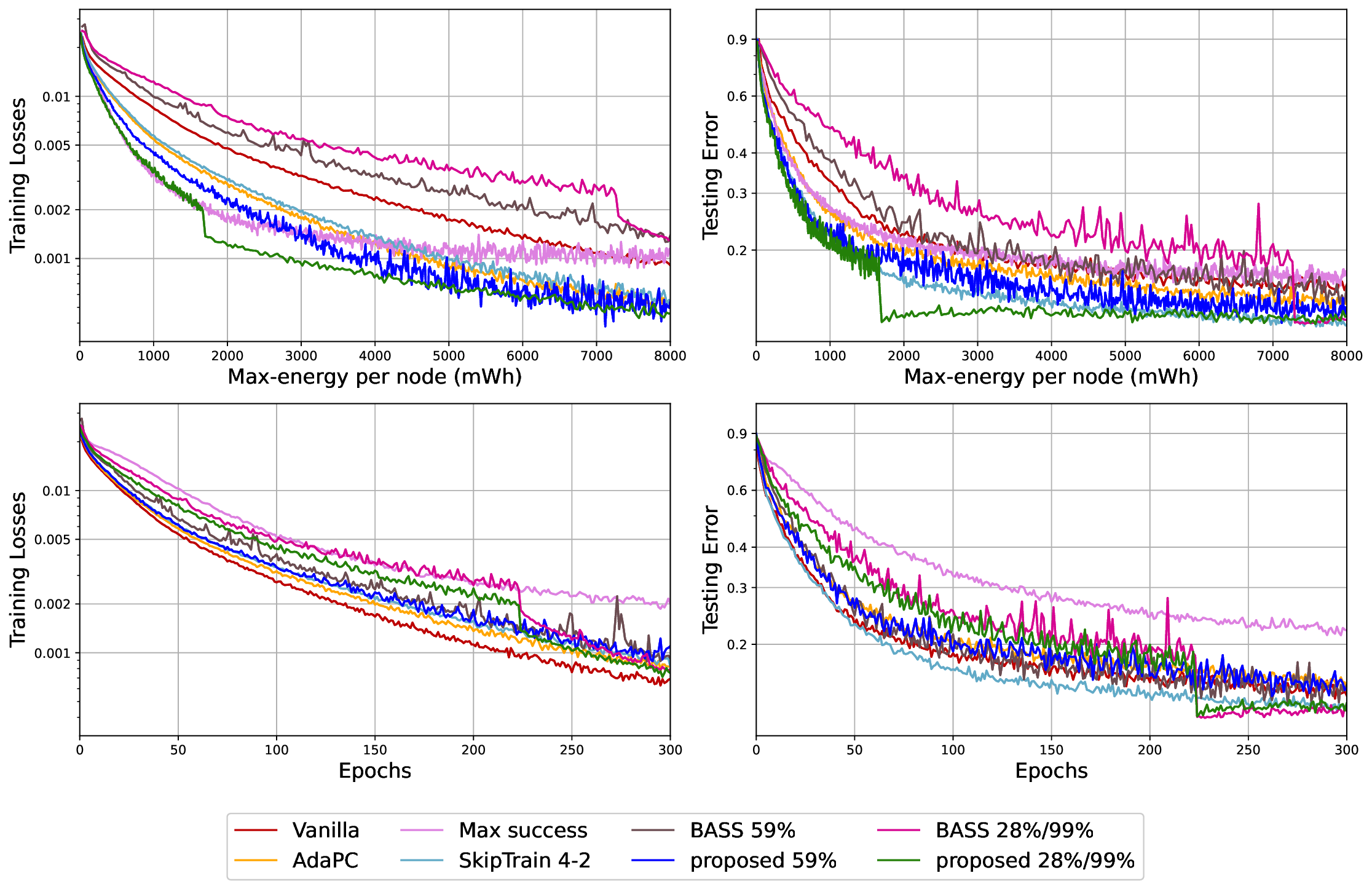}}}
    \vspace{-1em}
    \caption{(Broadcast) Training performance on Roofnet. }
    \label{fig:results_cifar10_roofnet}
    \vspace{-1em}
\end{figure}
We repeat the experiments on Roofnet, where `SkipTrain' is configured according to the recommendation by \cite{DeVos24IPDPSW} for a topology of similar average degree.  The results in Fig.~\ref{fig:design_roofnet}--\ref{fig:results_cifar10_roofnet} show similar observations as Fig.~\ref{fig:design_clique}--\ref{fig:results_cifar10_clique}, except that (i) the design objective in Fig.~\ref{fig:design_roofnet} is based on numerically estimated $\rho$ values as explained in Section~\ref{subsubsec:Discussion on General Base Topology} (from $500$ matrices per budget), (ii) `BASS' has a bigger performance gap with our solution due to its negligence of  balancing energy consumption across nodes, and (iii) it takes more epochs and energy consumption to reach convergence due to the limited connectivity between nodes. 

\subsection{Evaluation Results under Unicast}
\begin{figure}[t!]
    \centering    \centerline{\mbox{\includegraphics[height = 2in,width=.95\linewidth]{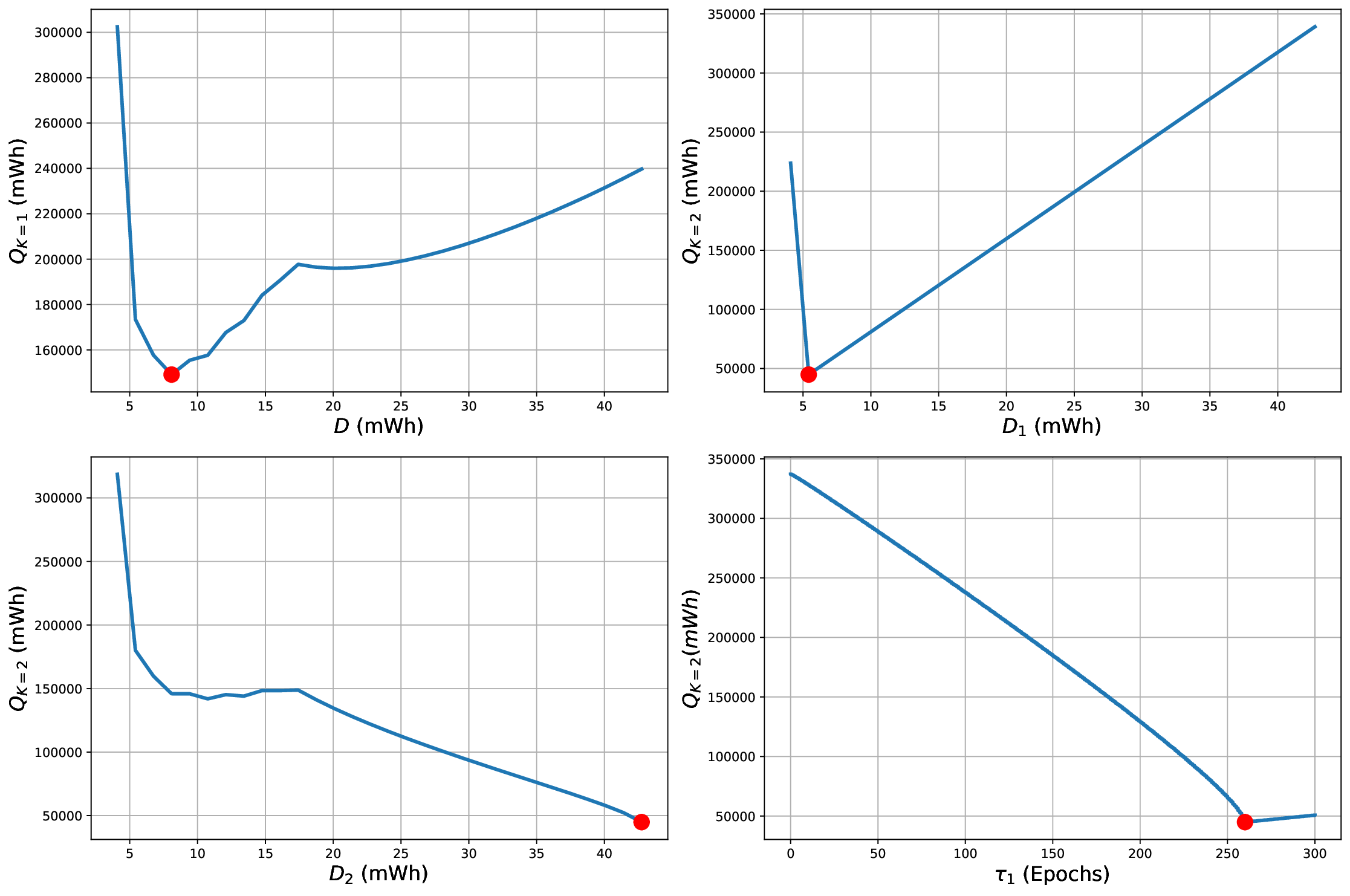}}}
    \vspace{-1em}
    \caption{(Unicast) Design objective vs. design parameters for clique.  
    }
    \label{fig:design_unicast_clique}
    \vspace{-1em}
\end{figure}

\begin{figure}[t!]
    \centering
    \centerline{\mbox{\includegraphics[height = 2.35in,width=.95\linewidth]{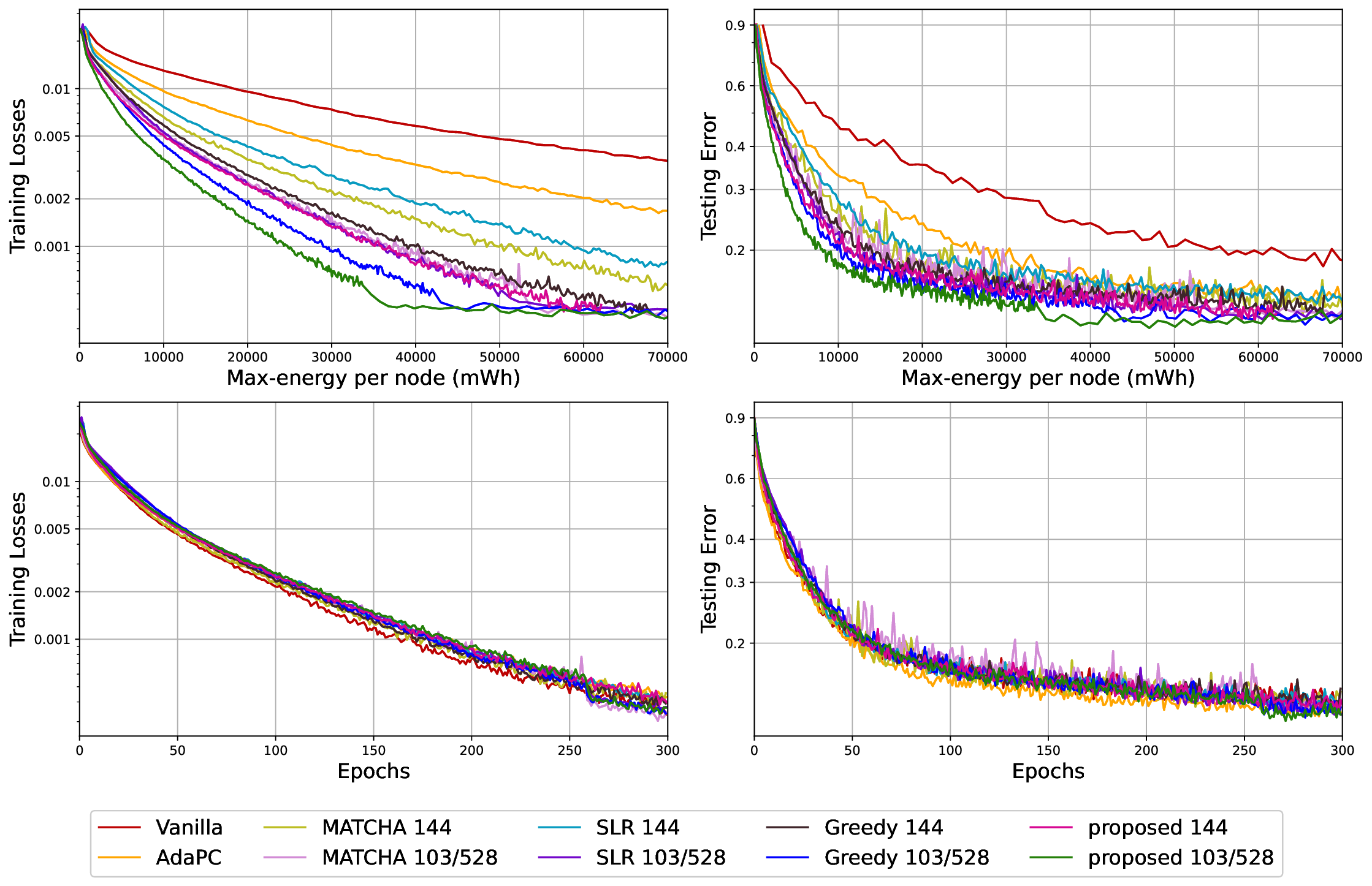}}}
    \vspace{-1em}
    \caption{(Unicast) Training performance on clique. (`proposed/MATCHA/SLR/Greedy $x$': 1-phase design activating $x$ links; `proposed/MATCHA/SLR/Greedy  $x$/$y$': 2-phase design activating $x$ links in phase 1 and $y$ links in phase 2).
    }
    \label{fig:results_unicast_cifar10_clique}
    \vspace{-1em}
\end{figure}

\begin{figure}[t!]
    \centering    \centerline{\mbox{\includegraphics[height = 2in,width=.95\linewidth]{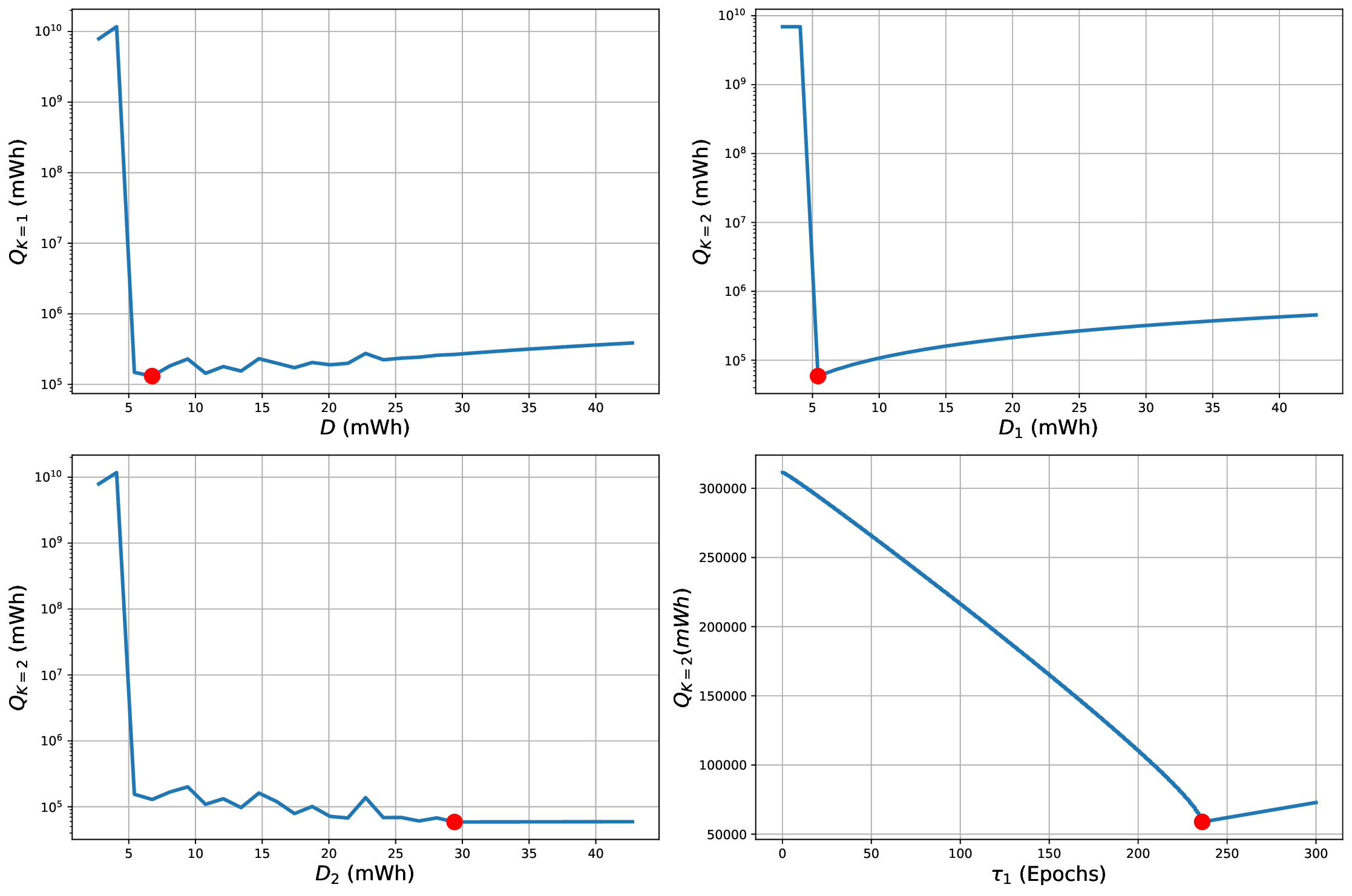}}}
    \vspace{-1em}
    \caption{(Unicast) Design objective vs. design parameters  for Roofnet.  
    }
    \label{fig:design_unicast_roofnet}
    \vspace{-1em}
\end{figure}

\begin{figure}[t!]
    \centering    \centerline{\mbox{\includegraphics[height = 2.35in,width=1\linewidth]{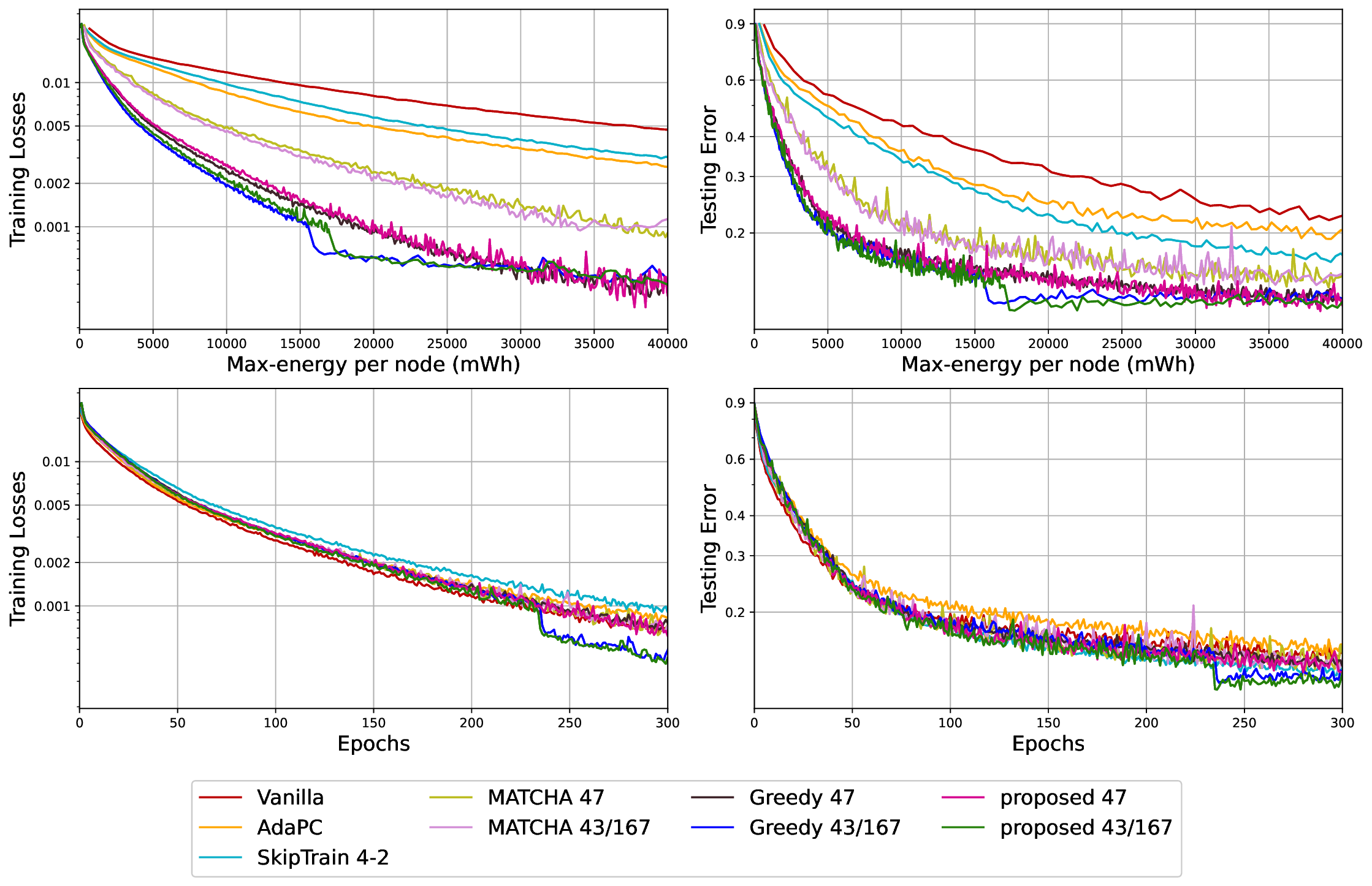}}}
    
    \vspace{-1em}
    \caption{(Unicast) Training performance on Roofnet.}

    \label{fig:results_unicast_cifar10_roofnet}
    \vspace{-1em}
\end{figure}
\subsubsection{Results for Clique}
We follow the same design objective evaluation procedure as in the broadcast setting. Fig. ~\ref{fig:design_unicast_clique} presents the objective $Q_K$ with respect to the communication budget $D_s$ and phase duration $\tau_s$. Similar to broadcast results, the optimal value of $Q_{K=2}$ is smaller than that of $Q_{K=1}$, demonstrating the benefit of multi-phase design. The optimal configuration for $Q_{K=2}$ also implies a switch from a lower communication budget in the first phase to a higher budget in the second phase ($D_1 \leq D_2$).

Since the unicast baselines `SLR', `MATCHA', and `Greedy' have configurable communication budgets, we evaluate two versions of each baseline using the same total number of activated links as our 1-phase and 2-phase designs. To demonstrate the benefit of the probabilistic mixing matrix design in \eqref{eq:budgeted probabilities design}, we use both the Greedy graph construction from \cite{zhang2024energyefficient} and our proposed layered graph construction as graph oracles for generating candidate communication graphs. We set $K_{\max}=6$ for both topologies (1 Greedy + 5 layered Ramanujan graphs).

The results in Fig.~\ref{fig:results_unicast_cifar10_clique} show trends similar to those observed in the broadcast setting: (i) the general baselines `Vanilla', `AdaPC', and `SkipTrain' (reduced to `Vanilla' in this case) perform worse than the baselines specifically designed for unicast communication; (ii) the proposed solution achieves a better tradeoff than  the baselines, with the 2-phase design outperforming the 1-phase design. 


\subsubsection{Results for Roofnet}

We repeat the experiments on Roofnet. The results in Fig. ~\ref{fig:design_unicast_roofnet}-~\ref{fig:results_unicast_cifar10_roofnet} exhibit trends similar to those observed as Fig. ~\ref{fig:design_unicast_clique}-~\ref{fig:results_unicast_cifar10_clique}, with two notable differences. 
First, due to the lack of an analytical upper bound on $\rho(\bm{W})$ under a general base topology,  the design objective in Fig.~\ref{fig:design_unicast_roofnet}  is estimated empirically: for each communication budget, we run Algorithm~\ref{alg:randomized-W for unicast new} with 
a number of repetitions (set to 14) of the proposed layered graph construction oracle, and then use the optimized objective value in \eqref{eq: unicast K=1 obj clique} as the empirical estimate of $\rho(\bm{W})$. 
Second, `Greedy' achieves performance comparable to the proposed solution and outperforms the other baselines. 
This is because for Roofnet, the probabilistic mixing design assigns substantial probabilities to the mixing matrices generated by the Greedy oracle (68\% in 1-phase design and 73\%/99\% in the first/second phase of 2-phase design), in contrast to 0\% in the case of clique. This further demonstrates the benefit of the probabilistic mixing design in \ref{eq:budgeted probabilities design}, which can combine the strengths of different graph-construction oracles to achieve a consistently good tradeoff for diverse base topologies. 

Meanwhile, we note that the Greedy oracle from \cite{zhang2024energyefficient} is computationally expensive due to its requirement of repeatedly solving semidefinite programs. In this regard, our proposed layered graph construction oracle provides not only a closed-form $\rho(\bm{W})$ upper bound in the special case (i.e., clique) but also a computationally efficient method to estimate this value in the general case, which is crucial for solving the upper- and intermediate-level optimizations. 

\section{Conclusion}\label{sec:Conclusion}

We considered the mixing matrix design to minimize the maximum per-node energy consumption in DFL under both broadcast and unicast communications.
Based on a novel convergence theorem that allows arbitrarily time-varying mixing matrices, we proposed a multi-phase design framework that designs randomized mixing matrices as well as how long to use each matrix. Our evaluations on real data demonstrated that the proposed design can achieve a superior tradeoff between maximum per-node energy consumption and accuracy even with two phases, thus improving the energy efficiency of learning in wireless  networks. \looseness=0

\section*{Acknowledgments}
The first author was supported in part by EPSRC New Investigator Award UKRI155.

\bibliographystyle{IEEEtran}
\bibliography{references}
\appendix
\section{Appendix}\label{sec: support proof}
\subsection{Proof of Theorem~\ref{thm: new convergence bound nonconvex}}

We use the following notation for the ``consensus distance'' at iteration $t$:
\[\Xi_t:=\frac{1}{m}\E \sum_{i=1}^m \|x_i^{(t)} - \overline{\bm{x}}^{(0)} \|^2.\]
We use $\E_{t+1}[\cdot]$ to denote the conditional expectation $\E[\cdot \mid \{\bm{x}^{(t)}_i\}]$.
Also, we let $\bm{x}^*$ denote the optimal parameter vector of $F(\cdot)$ and let $F_{\inf}:=F(x^*)$.

\begin{lemma}[Lemma 11, \cite{Koloskova20ICML}]
    \label{lem : 1  step descent contraction nonconvex}
    Under assumptions (1)-(3), D-PSGD with stepsize 
    $\eta < \frac{1}{4L(M_1+1)}$
    satisfies 
    \begin{align*}
        \E_{t+1}  [F(\overline{\bm{x}}^{(t+1)})] 
        \le F(\overline{\bm{x}}^{(t)}) - \frac{\eta}{4} 
        \| \nabla F(\overline{\bm{x}}^{(t)}) \|_2^2 
     +\eta L^2\Xi_t
    + \frac{\eta^2 L \hat{\sigma}^2}{m} .
    \end{align*}
\end{lemma}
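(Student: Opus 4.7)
The plan is to exploit the fact that, because each $\bm{W}^{(t)}$ has row/column sums equal to one, averaging the D-PSGD update \eqref{eq:DecenSGD} across nodes yields the clean recursion
\[
\overline{\bm{x}}^{(t+1)} = \overline{\bm{x}}^{(t)} - \eta \bar{g}^{(t)},\qquad \bar{g}^{(t)}:=\tfrac{1}{m}\sum_{i=1}^m g(\bm{x}^{(t)}_i;\xi^{(t)}_i),
\]
so the global average evolves like one SGD step with a worker-averaged stochastic gradient. I would then apply $L$-smoothness of $F$ (which follows from assumption (1)) to obtain
\[
F(\overline{\bm{x}}^{(t+1)}) \le F(\overline{\bm{x}}^{(t)}) - \eta\langle \nabla F(\overline{\bm{x}}^{(t)}), \bar{g}^{(t)}\rangle + \tfrac{L\eta^2}{2}\|\bar{g}^{(t)}\|^2,
\]
and take the conditional expectation $\E_{t+1}[\cdot]$, at which point the stochastic gradients resolve to $\E_{t+1}[\bar{g}^{(t)}]=\tfrac{1}{m}\sum_i \nabla F_i(\bm{x}^{(t)}_i)$ and, using independence of the minibatch noises across workers, $\E_{t+1}\|\bar{g}^{(t)}\|^2 = \|\tfrac{1}{m}\sum_i \nabla F_i(\bm{x}^{(t)}_i)\|^2 + \tfrac{1}{m^2}\sum_i \E_{t+1}\|g(\bm{x}^{(t)}_i;\xi^{(t)}_i)-\nabla F_i(\bm{x}^{(t)}_i)\|^2$.

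Next, I would rewrite the cross term with the polarization identity
\[
-\langle \nabla F(\overline{\bm{x}}^{(t)}), \tfrac{1}{m}\sum_i \nabla F_i(\bm{x}^{(t)}_i)\rangle = -\tfrac{1}{2}\|\nabla F(\overline{\bm{x}}^{(t)})\|^2 - \tfrac{1}{2}\|\tfrac{1}{m}\sum_i \nabla F_i(\bm{x}^{(t)}_i)\|^2 + \tfrac{1}{2}\bigl\|\tfrac{1}{m}\sum_i(\nabla F_i(\overline{\bm{x}}^{(t)})-\nabla F_i(\bm{x}^{(t)}_i))\bigr\|^2,
\]
where I used $\nabla F(\overline{\bm{x}}^{(t)})=\tfrac{1}{m}\sum_i\nabla F_i(\overline{\bm{x}}^{(t)})$. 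Jensen's inequality followed by $L$-smoothness bounds the last term by $\tfrac{L^2}{m}\sum_i\|\bm{x}^{(t)}_i-\overline{\bm{x}}^{(t)}\|^2$, which equals $L^2\Xi_t$ after taking expectation (interpreting $\Xi_t$ with $\overline{\bm{x}}^{(t)}$ in place of the apparent typo).

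For the variance contribution I would invoke assumption (2) to bound $\tfrac{1}{m^2}\sum_i \E_{t+1}\|g-\nabla F_i\|^2 \le \tfrac{\hat\sigma^2}{m}+\tfrac{M_1}{m^2}\sum_i\|\nabla F(\bm{x}^{(t)}_i)\|^2$, and use $\|\nabla F(\bm{x}^{(t)}_i)\|^2 \le 2\|\nabla F(\overline{\bm{x}}^{(t)})\|^2 + 2L^2\|\bm{x}^{(t)}_i-\overline{\bm{x}}^{(t)}\|^2$ so that the $M_1$-part becomes $\tfrac{2M_1}{m}\|\nabla F(\overline{\bm{x}}^{(t)})\|^2 + 2M_1 L^2 \Xi_t/m$. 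Similarly, $\|\tfrac{1}{m}\sum_i\nabla F_i(\bm{x}^{(t)}_i)\|^2 \le 2\|\nabla F(\overline{\bm{x}}^{(t)})\|^2 + 2L^2\Xi_t$ lets me relate the $\tfrac{L\eta^2}{2}\|\cdot\|^2$ term to the dominant gradient-norm term.

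Finally, I would collect the coefficients of $\|\nabla F(\overline{\bm{x}}^{(t)})\|^2$, $\Xi_t$, and $\hat\sigma^2/m$. The stepsize bound $\eta<\tfrac{1}{4L(M_1+1)}$ is calibrated precisely so that all positive contributions to the gradient-norm coefficient are absorbed, leaving the net coefficient at $-\eta/4$ and making the $\Xi_t$ coefficient no larger than $\eta L^2$ (possibly up to a constant I would verify by direct bookkeeping). \textbf{The main obstacle} is this final bookkeeping step: the absorptions only work because $\eta L(M_1+1)$ is small, and the proof is sensitive to the exact constants. Everything else is routine smoothness/variance manipulation.
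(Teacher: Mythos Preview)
The paper does not supply its own proof of this lemma; it is quoted verbatim from \cite{Koloskova20ICML} (their Lemma~11) and used as a black box in the proof of Theorem~\ref{thm: new convergence bound nonconvex}. Your proposed argument is precisely the standard one and matches the proof in the cited reference: average the update across nodes (using that the mixing matrix has unit row/column sums) to get an SGD-like step on $\overline{\bm{x}}^{(t)}$, apply $L$-smoothness of $F$, use the polarization identity to split the cross term, bound the discrepancy via Jensen and smoothness to produce the $L^2\Xi_t$ term, invoke assumption~(2) for the variance, and absorb the residual positive contributions to the $\|\nabla F(\overline{\bm{x}}^{(t)})\|^2$ coefficient using $\eta<\tfrac{1}{4L(M_1+1)}$. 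Your remark about the apparent typo in the definition of $\Xi_t$ (it should center at $\overline{\bm{x}}^{(t)}$, not $\overline{\bm{x}}^{(0)}$) is also correct. The final bookkeeping you flag is routine and the constants close with room to spare.
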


\begin{lemma}
    \label{lem: 2 step consensus bound nonconvex}
    Under assumptions (1)-(3), D-PSGD with the stepsize $\eta \le \frac{p^{(t-1)}}{8L\sqrt{12+2p^{(t-1)}M_1}}$ satisfies
    \begin{align*}
        \Xi_t &\le \big(1-\frac{p^{(t-1)}}{2}\big)\Xi_{t-1} + 
       \\ 
       &2\eta^2 \left[  \big(\frac{6}{p^{(t-1)}} + M_1\big) \big( M_2 \|\nabla F(\overline{\bm{x}}^{(t-1)}) \|^2 
       +\hat{\zeta}^2
       \big)+ \hat{\sigma}^2 
                \right].
    \end{align*}
\end{lemma}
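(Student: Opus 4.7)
The plan is to derive a one-step recursion that contracts $\Xi_t$ by $1 - p^{(t-1)}/2$ and adds a driving term involving $\hat{\sigma}^2$, $\hat{\zeta}^2$, and $\|\nabla F(\overline{\bm{x}}^{(t-1)})\|^2$. The strategy has four moves: cast D-PSGD in matrix form, extract a contraction from the definition of $\rho$, separate stochastic noise from the true gradient via unbiasedness, and invoke assumptions (1)--(3) to bound the gradient piece.

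First, I would stack the local iterates into $\bm{X}^{(t)} \in \mathds{R}^{d\times m}$ with $i$-th column $\bm{x}_i^{(t)}$, and similarly stack the stochastic and true local gradients into $\bm{G}^{(t-1)}$ and $\bm{\nabla F}^{(t-1)}$, so that \eqref{eq:DecenSGD} reads $\bm{X}^{(t)} = (\bm{X}^{(t-1)} - \eta \bm{G}^{(t-1)}) \bm{W}^{(t-1)}$ and $m\,\Xi_t = \E \|\bm{X}^{(t)}(\bm{I} - \bm{J})\|_F^2$. Because $\bm{W}^{(t-1)}$ is symmetric with $\bm{W}^{(t-1)} \bm{1} = \bm{1}$, it satisfies $\bm{W}^{(t-1)} \bm{J} = \bm{J} = \bm{J} \bm{W}^{(t-1)}$; hence $\bm{X}^{(t)}(\bm{I}-\bm{J}) = (\bm{X}^{(t-1)} - \eta\bm{G}^{(t-1)})(\bm{W}^{(t-1)} - \bm{J})$ and $(\bm{I}-\bm{J})(\bm{W}^{(t-1)}-\bm{J}) = \bm{W}^{(t-1)} - \bm{J}$. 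Taking expectation over $\bm{W}^{(t-1)}$ (independent of the iterate and the fresh gradient) and recalling $\rho(\bm{W}^{(t-1)}) = \|\E[\bm{W}^{(t-1)\top}\bm{W}^{(t-1)}] - \bm{J}\|$ would then yield the contraction
\[
\E_W \|\bm{X}^{(t)}(\bm{I}-\bm{J})\|_F^2 \le (1 - p^{(t-1)}) \,\|(\bm{X}^{(t-1)} - \eta \bm{G}^{(t-1)})(\bm{I}-\bm{J})\|_F^2.
\]

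Next, I would condition on $\bm{X}^{(t-1)}$ and use $\E[\bm{G}^{(t-1)} \mid \bm{X}^{(t-1)}] = \bm{\nabla F}^{(t-1)}$ to split the right-hand side into a deterministic part $\|(\bm{X}^{(t-1)} - \eta\bm{\nabla F}^{(t-1)})(\bm{I}-\bm{J})\|_F^2$ and a fresh-noise part $\eta^2 \,\E\|(\bm{G}^{(t-1)} - \bm{\nabla F}^{(t-1)})(\bm{I}-\bm{J})\|_F^2$. On the deterministic part I would apply Young's inequality with $\alpha = (p^{(t-1)}/2)/(1-p^{(t-1)})$, chosen so that $(1-p^{(t-1)})(1+\alpha) = 1 - p^{(t-1)}/2$ and $(1-p^{(t-1)})(1+1/\alpha) \le 2/p^{(t-1)}$. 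This produces the target contraction $(1-p^{(t-1)}/2)\,\Xi_{t-1}$ plus two residuals, which I would bound by dropping the contractive projection $\bm{I}-\bm{J}$: assumption (2) controls the noise residual by $m\hat{\sigma}^2 + M_1 \sum_i \|\nabla F(\bm{x}_i)\|^2$; $L$-smoothness shifts both $\sum_i\|\nabla F(\bm{x}_i)\|^2$ and $\sum_i\|\nabla F_i(\bm{x}_i)\|^2$ to $\overline{\bm{x}}^{(t-1)}$ at the price of an $O(L^2 m)\,\Xi_{t-1}$ remainder; and assumption (3) at $\bm{x} = \overline{\bm{x}}^{(t-1)}$ bounds $\sum_i \|\nabla F_i(\overline{\bm{x}}^{(t-1)})\|^2 \le m\hat{\zeta}^2 + m M_2 \|\nabla F(\overline{\bm{x}}^{(t-1)})\|^2$.

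The hardest step will be the final bookkeeping. Each use of $L$-smoothness leaves an extra residual of order $\eta^2 L^2 (1+M_1)/p^{(t-1)} \cdot \Xi_{t-1}$, which would otherwise perturb the contraction factor away from the stated $1-p^{(t-1)}/2$. The step-size requirement $\eta \le p^{(t-1)}/(8L\sqrt{12 + 2 p^{(t-1)} M_1})$ is precisely what is needed to absorb those residuals back into the contraction, and carefully tracking how the constants from Young's inequality interact with the multiplicative factors produced by assumptions (2) and (3) is what ultimately yields the driving coefficient $(6/p^{(t-1)} + M_1)$. Pinning down those exact constants, rather than settling for a looser $O(1/p^{(t-1)})$ bound, is the step I expect to require the most care.
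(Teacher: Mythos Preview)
Your proposal is correct and follows essentially the same argument as the paper's source: the paper does not reprove this lemma in detail but simply invokes Lemma~12 of \cite{Koloskova20ICML} (specialized to $\tau=1$, with the identification of $p^{(t-1)}$ justified via Lemma~3.1 of \cite{zhang2024energyefficient}), and your matrix contraction plus Young-inequality splitting with $\alpha$ chosen so that $(1-p)(1+\alpha)=1-p/2$ is exactly the skeleton of that argument. The only thing to add is that the paper's ``proof'' is a citation, so your write-up would in fact be more self-contained than what appears here.
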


Noting that the parameter $p^{(t-1)}$ in \cite{Koloskova20ICML} is originally defined in a different form, but due to Lemma~3.1 in \cite{zhang2024energyefficient}, it can be treated the same, in the case of $\tau=1$.
Then Lemma~\ref{lem: 2 step consensus bound nonconvex} follows from Lemma 12 in \cite{Koloskova20ICML} by setting $\tau=1$ and improving some constants in this special case. 

\begin{proof}[Proof of Theorem~\ref{thm: new convergence bound nonconvex}]
    We define the following notations for ease of the proof:
    \begin{itemize}
        \item $f_t:=\E[F(\overline{\bm{x}}^{(t)})] -F_{\inf}$.
        \item $e_t:= \E \|\nabla F(\overline{\bm{x}}^{(t)}) \|^2$.
        \item $\nu_j^t:=\prod_{i=j}^{t-1} (1-\frac{p_i}{2})$.
        \item $\kappa := \frac{\sqrt{(1+M_1) (1+M_2)}}{p_{\min}}$.
        \item $\Upsilon := M_2\max_j   \left( 
                \frac{6\pi_j}{p^{(j)}} +M_1\pi_j
            \right)$.
    \end{itemize}
    In these notations, Lemma~\ref{lem : 1  step descent contraction nonconvex} implies 
    \begin{equation}
        \label{eq: ft recursion}
        f_{t+1} \le f_t -\frac{\eta}{4}e_t + \frac{L\hat{\sigma}^2\eta^2}{m} + 
        \eta L^2 \Xi_t,
    \end{equation}
    and Lemma~\ref{lem: 2 step consensus bound nonconvex} upper bounds $\Xi_t$ by
    \begin{align}
        \label{eq: nonconvex consensus recursion}
         \Xi_t &\le \big(1-\frac{p^{(t-1)}}{2}\big)\Xi_{t-1} + 
       2\eta^2 \cdot  \\
       &\left[  \big(\frac{6}{p^{(t-1)}} + M_1\big) \big( M_2 e_{t-1} 
       +\hat{\zeta}^2
       \big)+ \hat{\sigma}^2 
                \right]. \nonumber
    \end{align}
    We un-roll the recursion in \eqref{eq: nonconvex consensus recursion} to obtain
    \begin{align*}
        \Xi_t\le \nu_0^t\Xi_0 +2\eta^2\sum_{j=0}^{t-1}\nu_{j+1}^t 
        \left[ 
         \big(\frac{6}{p^{(j)}} + M_1\big) \big( M_2 e_{j} 
       +\hat{\zeta}^2
       \big)+ \hat{\sigma}^2 
                \right].
    \end{align*}
    Noting that $\pi_j = \sum_{t>j} \nu_{j+1}^{t}$,
    we sum over $t$ up to $T$ and get
    \begin{align*}
        \sum_{t=1}^T \Xi_t &\le \sum_{t=1}^T \nu_0^t \Xi_0 +
        2\eta^2 \sum_{t=1}^T \sum_{j=0}^{t-1} \nu_{j+1}^t\cdot \\
        &\left[ 
         \big(\frac{6}{p^{(j)}} + M_1\big) \big( M_2 e_{j} 
       +\hat{\zeta}^2
       \big)+ \hat{\sigma}^2 
                \right]\\
        & \le \sum_{t=1}^{T} \nu_0^t \Xi_0 +  
        12\eta^2\hat{\zeta}^2 \sum_{j=0}^{T-1}\sum_{t=j+1}^T 
        \frac{\nu_{j+1}^t}{p^{(j)}}  \\
        & +  2\eta^2(\hat{\sigma}^2 + M_1 \hat{\zeta}^2)
        \sum_{j=0}^{T-1} \sum_{t=j+1}^T \nu_{j+1}^t\\
        & + 2\eta^2 
        \sum_{j=0}^{T-1} M_2 \left( 
                \frac{6}{p^{(j)}} +M_1
            \right) e_j \sum_{t=j+1}^T \nu_{j+1}^t\\
        &\le (1+\pi_0) \Xi_0  + 12\eta^2 \hat{\zeta}^2 
        \sum_{j=0}^{T-1} \frac{\pi_j}{p_j} \\
        &+ 2\eta^2 \sum_{j=0}^{T-1} (\hat{\sigma}^2 + M_1 \hat{\zeta}^2)\pi_j \\ 
        &+ 2\eta^2 \sum_{j=0}^{T-1} M_2 \left( 
                \frac{6}{p^{(j)}} +M_1
            \right)e_j \pi_j \\
        &\le (1+\pi_0) \Xi_0 + 2\eta^2 \Upsilon \sum_{j=0}^{T-1} e_j\\
        &+   2\eta^2 T
        [(\hat{\sigma}^2 + M_1 \hat{\zeta}^2) \Pi_1(T) + 6\hat{\zeta}^2 \Pi_2(T)].
    \end{align*}
    Next we move $f_t$ in \eqref{eq: ft recursion} to the right-hand side and average it over time to get
    \begin{equation}
        \label{eq:timeavg}
        \frac{1}{8T} \sum_{t=0}^{T-1} e_t
        \le \frac{1}{T} \sum_{t=0}^{T-1} \frac{f_t - f_{t+1}}{\eta} +
        \frac{\eta L \hat{\sigma}^2}{m} +
        \frac{L^2}{T} \sum_{t=0}^{T-1} \Xi_t. 
    \end{equation}
    Now we plug in the bound for $ \sum_{t=1}^T \Xi_t $  to \eqref{eq:timeavg} and obtain
    \begin{align}
          \sum_{t=0}^{T-1} \frac{e_t}{8T} &\le 
         \frac{f_0 }{T\eta } +    \frac{\eta L \hat{\sigma}^2}{m}
         + \frac{L^2(2+\pi_0)\Xi_0}{T} 
         +  \frac{2L^2 \eta^2 \Upsilon}{T} \sum_{j=0}^{T-1}e_j \label{eq: step sum bound}\\
        &+2\eta^2 L^2
        [(\hat{\sigma}^2 + M_1 \hat{\zeta}^2) \Pi_1(T) + 6\hat{\zeta}^2 \Pi_2(T)].
    \end{align}
    We let $\alpha(T):=2L^2[(\hat{\sigma}^2 + M_1 \hat{\zeta}^2) \Pi_1(T) + 6\hat{\zeta}^2 \Pi_2(T)]$.
    We need the learning rate to satisfy the conditions in Lemmas~\ref{lem : 1  step descent contraction nonconvex}-\ref{lem: 2 step consensus bound nonconvex}.
    By choosing a sufficiently small learning rate\footnote{While the choice of $\eta$ in \cite{Koloskova20ICML} does not explicitly include the term $(4L(M_1+1))^{-1}$, we believe that, without this additional term, the condition required by Lemma \ref{lem : 1 step descent contraction nonconvex} may not hold in general.} 
    \[
    \eta=\min\left\{ 
        (\frac{m f_0}{\hat{\sigma}^2 L T})^{\frac{1}{2}},
        (\frac{f_0}{T \alpha(T)})^{\frac{1}{3}}, 
        \frac{1}{20L\kappa},
        \frac{1}{4L(M_1+1)}
    \right\}, 
    \]
    the learning rate satisfies the conditions in Lemma~\ref{lem : 1  step descent contraction nonconvex} and \ref{lem: 2 step consensus bound nonconvex}. 
    Moreover, we claim that $\eta ^2 \le \frac{1}{32L^2\Upsilon}$. 
    Indeed, 
    \begin{align*}
        \eta^2 &\le \frac{1}{400L^2\kappa^2 }
        \le \min_j \frac{(p^{(j)})^2}{32L^2(M_2+1)(12+2M_1 p^{(j)})} \\
        &\le \frac{1}{32L^2\max_j\{
            \frac{2M_1 M_2}{p^{(j)}}
            + \frac{12 M_2}{(p^{(j)})^2}
        \}}
    \end{align*}
     Let $j^*$ denote the index of $j$ that maximizes $ \max_j   \left( \frac{6\pi_j}{p^{(j)}} +M_1\pi_j\right)$, that is, the index of $j$ which yields $\Upsilon$.
     By Lemma~\ref{fact: pi}, $\pi_{j^*} \le \max_j \frac{2}{ p^{(j)}}$, so
     \begin{align*}
         \max_j\left\{
            \frac{2M_1 M_2}{p^{(j)}}
            + \frac{12 M_2}{(p^{(j)})^2}
        \right\}
        &\ge M_1 M_2 \pi_{j^*} + \max_j \frac{6M_2 \pi_{j^*}}{p^{(j)}}\\
        &\ge M_1 M_2  \pi_{j^*} + \frac{6M_2 \pi_{j^*}}{p^{(j^*)}} = \Upsilon.
     \end{align*}
     Then it follows that $\eta ^2 \le \frac{1}{32L^2\Upsilon}$ and 
     \begin{equation}
        \label{eq: step a}
         \frac{2L^2 \eta^2 \Upsilon}{T} \sum_{j=0}^{T-1}e_j \le \sum_{j=0}^{T-1}\frac{e_j}{16}.
     \end{equation}
     In addition, since $\eta \le \min\{  (\frac{m f_0}{\hat{\sigma}^2 L T})^{\frac{1}{2}},
        (\frac{f_0}{T \alpha(T)})^{\frac{1}{3}}\}$, we obtain
    \begin{equation}
        \label{eq: step b}
        \frac{\eta L \hat{\sigma}^2}{m} 
        \le (\frac{\hat{\sigma}^2 L f_0}{mT})^{\frac{1}{2}}
    \end{equation}
    and \begin{equation}
        \label{eq: step c}
        \eta^2 \alpha(T) \le 
        \left( 
            \frac{f_0^2 \alpha(T)}{T^2} 
        \right)^{\frac{1}{3}}
    \end{equation}
    respectively.
    Finally, 
    \begin{align}
        &\frac{f_0 }{T\eta} 
        =\frac{f_0}{T}\cdot\max\left\{
        (\frac{\hat{\sigma}^2 L T}{m f_0})^{\frac{1}{2}},
        (\frac{T\alpha(T)}{f_0})^{\frac{1}{3}}, 
        20L\kappa,
        4L(M_1+1)
        \right\} \\
        &= \max \left\{ 
            (\frac{\hat{\sigma}^2 L f_0}{mT})^{\frac{1}{2}},
            (\frac{f_0^2 \alpha(T) }{T^2})^{\frac{1}{3}},
            \frac{20L\kappa f_0}{T},
            \frac{4f_0 L (M_1+1)}{T}
        \right\}. \label{eq: step d}
    \end{align}
    Combining inequalities \eqref{eq: step sum bound}–\eqref{eq: step d} implies that
    \begin{align*}
        \sum_{t=0}^{T-1} \frac{e_t}{16T} &\le 
        2(\frac{\hat{\sigma}^2 L f_0}{mT})^{\frac{1}{2}}
        + 2(\frac{f_0^2 \alpha(T) }{T^2})^{\frac{1}{3}}
        + \frac{L^2(2+\pi_0)\Xi_0}{T} \\
        &+  \frac{20L\kappa f_0}{T}  + \frac{4f_0 L (M_1+1)}{T}.
    \end{align*}
    The theorem follows by setting the right-hand side to be at most $\epsilon/16$.
\end{proof}

\subsection{Convergence Theorem for Convex Objectives} \label{sec: convex convergence}
The convergence of D-PSGD for convex objectives is established under the following assumptions:
\begin{enumerate}[(1)]
\setcounter{enumi}{3}   
    \item Each local objective function $F_i(\boldsymbol{\bm{x}})$ is $L$-Lipschitz smooth and convex,  i.e., for $x, x' \in \mathds{R}^d$, 
    \[
        F_i(x)-F_i(x')\le 
        \langle \nabla F_i(x), x-x' \rangle.
    \]
\item There exist a constant $\hat{\sigma}>0$ such that 
    $$ {1\over m}\sum_{i\in V}\E[\|g(\bm{x^*};\xi_i)-\nabla F_i(\bm{x^*}) \|^2] \leq \hat{\sigma}^2.$$
    \item There exist a constant $\hat{\zeta}>0$ such that 
    $${1\over m}\sum_{i\in V}\|\nabla F_i(\bm{x^*})\|^2\leq \hat{\zeta}^2.$$ 
\end{enumerate}
While condition (4) is stronger than condition (1), the additional convexity assumption allows conditions (5)--(6) to be significantly weaker than conditions (2)--(3): not only can we set $M_1 = M_2 = 0$, 
but it also suffices to impose the two noise bounds only at $x=x^*$. 
    \begin{theorem}\label{thm: new convergence bound}
    The D-PSGD under assumptions (4)--(6) can achieve $\epsilon$-convergence (i.e., $\frac{1}{T} \sum_{t=0}^{T-1} (\E [F(\overline{\bm{x}}^{(t)})] - F_{\inf} )\le \epsilon$) when the number of iterations $T$ satisfies 
    $T\geq T_4(\Pi_1(T), \Pi_2(T),  \pi_0, p_{\min},  \epsilon,  \overline{\bm{x}}^{(0)})$ 
    for 
    \begin{align}
        &T_4(\Pi_1, \Pi_2, \pi_0, p_{\min},  \epsilon,   \overline{\bm{x}}^{(0)}):= r_0 \cdot \\
    &O\left({\hat{\sigma}^2\over m\epsilon^2} +{\sqrt{(\hat{\sigma}^2 \Pi_1 +\hat{\zeta}^2 \Pi_2) L} \over  \epsilon^{3/2}}+{(1+\pi_0) L \Xi_0 \over \epsilon}
    +{L \over \epsilon p_{\min}}
    \right),
    \end{align}
    where $r_0:=\|\overline{\bm{x}}^{(0)} -\bm{x}^*\|^2$ ($\bm{x}^*$ denotes the optimal parameter vector) and
    $\Xi_0:=\frac{1}{m}\sum_{i=1}^m \|x_i^{(0)} - \overline{\bm{x}}^{(0)} \|^2$.
\end{theorem}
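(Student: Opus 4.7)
The plan is to mirror the two-lemma structure used for Theorem~\ref{thm: new convergence bound nonconvex}, but replace the descent potential $f_t = \E[F(\overline{\bm{x}}^{(t)})]-F_{\inf}$ by the squared iterate distance $r_t := \E\|\overline{\bm{x}}^{(t)}-\bm{x}^*\|^2$ and treat the \emph{objective gap} $\E[F(\overline{\bm{x}}^{(t)})]-F_{\inf}$ as the quantity we extract from the cross term via convexity. Since the column-stochasticity of $\bm{W}^{(t)}$ still preserves the average, $\overline{\bm{x}}^{(t+1)} = \overline{\bm{x}}^{(t)} - \tfrac{\eta}{m}\sum_i g(\bm{x}_i^{(t)};\xi_i^{(t)})$, and expanding $\|\overline{\bm{x}}^{(t+1)}-\bm{x}^*\|^2$ gives the familiar ``descent + variance'' decomposition whose cross term I will bound using convexity, and whose variance term I will bound using co-coercivity.

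For the descent step, I would use the standard $L$-smooth + convex inequality
\[
\langle \nabla F_i(\bm{x}_i^{(t)}),\,\overline{\bm{x}}^{(t)}-\bm{x}^*\rangle \ge F_i(\overline{\bm{x}}^{(t)})-F_i(\bm{x}^*) - \tfrac{L}{2}\|\overline{\bm{x}}^{(t)}-\bm{x}_i^{(t)}\|^2,
\]
which after averaging over $i$ yields the desired $-2\eta(F(\overline{\bm{x}}^{(t)})-F_{\inf}) + \eta L\Xi_t$ contribution (this is where $L$, not $L^2$, appears in the final bound, unlike the nonconvex case). The main subtlety is the variance term $\eta^2 \E\|\tfrac{1}{m}\sum_i g(\bm{x}_i^{(t)};\xi_i^{(t)})\|^2$: assumptions (5)--(6) only bound noise and heterogeneity at $\bm{x}^*$, so I need to use co-coercivity of $L$-smooth convex functions, namely $\|\nabla F_i(\bm{x})-\nabla F_i(\bm{x}^*)\|^2 \le 2L\bigl(F_i(\bm{x}) - F_i(\bm{x}^*) - \langle \nabla F_i(\bm{x}^*), \bm{x}-\bm{x}^*\rangle\bigr)$, to lift these bounds to arbitrary iterates. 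The resulting extra slack can be absorbed into an $O(\eta L\cdot(F(\overline{\bm{x}}^{(t)})-F_{\inf})) + O(\eta L\Xi_t)$ term, producing a one-step recursion of the form $r_{t+1} \le r_t - c\eta(F(\overline{\bm{x}}^{(t)})-F_{\inf}) + O(\eta L\Xi_t) + O(\eta^2\hat\sigma^2/m + \eta^2\hat\zeta^2)$ once $\eta$ is small enough.

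Next I would derive the consensus analog of Lemma~\ref{lem: 2 step consensus bound nonconvex} with $M_1=M_2=0$, again using co-coercivity so that the right-hand side depends only on $\hat\sigma^2$, $\hat\zeta^2$, and a residual $L$-times-objective-gap term. Unrolling the recursion and summing over $t\in [0,T-1]$ produces, through exactly the same $\pi_j = \sum_{t>j}\prod_{s=j+1}^{t-1}(1-p^{(s)}/2)$ bookkeeping as in the nonconvex proof, the bound
\begin{align*}
\tfrac{1}{T}\sum_{t=0}^{T-1}\Xi_t
&\le (1+\pi_0)\Xi_0 + O\!\left(\eta^2[\hat\sigma^2 \Pi_1(T)+\hat\zeta^2\Pi_2(T)]\right) \\
&\quad + O\!\left(\eta^2 L\cdot\tfrac{1}{p_{\min}^2}\right)\cdot\tfrac{1}{T}\sum_{t=0}^{T-1}(F(\overline{\bm{x}}^{(t)})-F_{\inf}).
\end{align*}

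The finish is mechanical: telescope the descent recursion, divide by $\eta T$, substitute the consensus bound, and absorb the $\tfrac{1}{T}\sum (F(\overline{\bm{x}}^{(t)})-F_{\inf})$ term on the right-hand side into the left, which enforces $\eta = O(p_{\min}/L)$ and thus yields the $L/(\epsilon p_{\min})$ term of $T_4$. Choosing $\eta$ as the minimum of $(mr_0/(\hat\sigma^2 L T))^{1/2}$, $(r_0/(T\cdot L(\hat\sigma^2\Pi_1+\hat\zeta^2\Pi_2)))^{1/3}$, and $cp_{\min}/L$ in the style of Theorem~\ref{thm: new convergence bound nonconvex} then balances the four terms in $T_4$. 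The principal obstacle is not the algebra but the delicate application of co-coercivity: it must be strong enough to lift the point-wise noise/heterogeneity bounds from $\bm{x}^*$ to arbitrary iterates, yet the residual must remain proportional to the $F$-gap so it can be reabsorbed --- exactly the place where assumption~(4)'s convexity (as opposed to the weaker smoothness of assumption~(1)) is indispensable, which is why we can afford $M_1=M_2=0$ here but not in the nonconvex case.
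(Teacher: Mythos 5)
Your proposal is correct and follows essentially the same route as the paper: a one-step descent recursion on $r_t=\E\|\overline{\bm{x}}^{(t)}-\bm{x}^*\|^2$ that extracts the objective gap via convexity (the paper imports this as Lemma~\ref{lem : 1 step descent contraction convex} from Koloskova et al., whose proof uses exactly the co-coercivity lifting you describe), a consensus recursion with $M_1=M_2=0$ and a residual $Lf_{t-1}/p^{(t-1)}$ term (Lemma~\ref{lem: 2 step consensus bound convex}), the same $\pi_j$/$\Pi_1$/$\Pi_2$ unrolling, reabsorption of the gap sum under $\eta=O(p_{\min}/L)$, and a three-way stepsize minimum. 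The only nit is that your first stepsize candidate should be $(mr_0/(\hat\sigma^2 T))^{1/2}$ without the extra $L$ (the variance term here is $\eta^2\hat\sigma^2/m$, unlike the nonconvex case), so that the leading term matches the stated $r_0\hat\sigma^2/(m\epsilon^2)$.
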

Convergence analyses of gossip-based algorithms on static communication graphs, such as Decentralized Gradient Descent (DGD) \cite{YuanLingYin16}, are well established.
For time-varying topologies, D-PSGD \cite{Koloskova20ICML} establishes convergence under  periodic communication patterns, while gradient-tracking variants have been analyzed in \cite{NedicOlshevsky15} and \cite{NedicOlshevskyShi17} under stronger assumptions.
Our approach is most closely related to \cite{Koloskova20ICML}, but it removes the periodicity assumption and applies to arbitrary time-varying graphs.

\begin{lemma}[Lemma 8, \cite{Koloskova20ICML}]
    \label{lem : 1 step descent contraction convex}
    Under assumptions (4)-(6), D-PSGD with the stepsize $\eta \le \frac{1}{12L}$ satisfies
    \begin{align*}
         \E_{t+1}& [\| \overline{\bm{x}}^{(t+1)} - x^* \|^2]  
        \le 
        \| \overline{\bm{x}}^{(t)} - x^* \|^2 \\
        & -\eta( F(\overline{\bm{x}}^{(t)}) - F^*) + \frac{\hat{\sigma}^2\eta^2}{m} + 
        3\eta L \Xi_t,
    \end{align*}
    when local functions $F_i's$ are convex.
\end{lemma}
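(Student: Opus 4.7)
The plan is to exploit the symmetry and row/column-stochasticity of $\bm{W}^{(t)}$, so that averaging the D-PSGD update \eqref{eq:DecenSGD} over $i$ yields the clean virtual-iterate recursion $\overline{\bm{x}}^{(t+1)} = \overline{\bm{x}}^{(t)} - \eta \bar{g}^{(t)}$, where $\bar{g}^{(t)}:=\tfrac{1}{m}\sum_{j=1}^m g(\bm{x}_j^{(t)};\xi_j^{(t)})$. Expanding $\|\overline{\bm{x}}^{(t+1)}-x^*\|^2$ and taking the conditional expectation $\E_{t+1}$ then produces three pieces: the baseline $\|\overline{\bm{x}}^{(t)}-x^*\|^2$, a cross term $-2\eta\langle \overline{\bm{x}}^{(t)}-x^*,\bar{\nabla F}^{(t)}\rangle$ with $\bar{\nabla F}^{(t)}:=\tfrac{1}{m}\sum_{i} \nabla F_i(\bm{x}_i^{(t)})$, and a second-moment term $\eta^2\,\E_{t+1}\|\bar{g}^{(t)}\|^2$. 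The remainder of the proof is to bound the latter two in terms of $\Xi_t$, $F(\overline{\bm{x}}^{(t)})-F^*$, and $\hat\sigma^2/m$.

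For the cross term I would split each summand as $\langle \overline{\bm{x}}^{(t)}-x^*, \nabla F_i(\bm{x}_i^{(t)})\rangle = \langle \overline{\bm{x}}^{(t)}-\bm{x}_i^{(t)}, \nabla F_i(\bm{x}_i^{(t)})\rangle + \langle \bm{x}_i^{(t)}-x^*, \nabla F_i(\bm{x}_i^{(t)})\rangle$. The second piece is lower bounded by $F_i(\bm{x}_i^{(t)})-F_i(x^*)$ using convexity (assumption~(4)), and then tied to $F(\overline{\bm{x}}^{(t)})-F^*$ modulo an $O(L\Xi_t)$ discrepancy via $L$-smoothness. The first piece, a consensus-error inner product, is split through Young's inequality with an $L$-weighted constant so that one half contributes $O(L\Xi_t)$ while the other half appears as a multiple of $\|\nabla F_i(\bm{x}_i^{(t)})\|^2$, to be absorbed later via co-coercivity. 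For the second moment, I decompose $\E_{t+1}\|\bar{g}^{(t)}\|^2 = \|\bar{\nabla F}^{(t)}\|^2 + \E_{t+1}\|\bar{g}^{(t)}-\bar{\nabla F}^{(t)}\|^2$; cross-worker independence of $\{\xi_i^{(t)}\}$ converts the variance into $\tfrac{1}{m^2}\sum_i \E_{t+1}\|g(\bm{x}_i^{(t)};\xi_i^{(t)})-\nabla F_i(\bm{x}_i^{(t)})\|^2$.

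The main obstacle I anticipate is transferring the variance assumption (5) from $x^*$ to the local iterates $\bm{x}_i^{(t)}$, and similarly controlling $\|\bar{\nabla F}^{(t)}\|^2$ without invoking any non-convex bounded-gradient assumption. Both are handled via co-coercivity of convex $L$-smooth functions, namely $\|\nabla F_i(x)-\nabla F_i(x^*)\|^2 \le 2L\bigl(F_i(x)-F_i(x^*)-\langle \nabla F_i(x^*), x-x^*\rangle\bigr)$, together with the analogous per-realization inequality for $g(\cdot;\xi_i)$ which is the natural convex counterpart of (5). These convert $\|\nabla F_i(\bm{x}_i^{(t)})\|^2$-type and variance-at-$\bm{x}_i^{(t)}$ terms into quantities of the form $O(L)\cdot(F_i(\bm{x}_i^{(t)})-F_i(x^*))$ plus $\hat\sigma^2/m$ and $\hat\zeta^2$-type residuals that are absorbed via assumption~(6). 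All such $F_i(\bm{x}_i^{(t)})-F_i(x^*)$ contributions, both from the cross term and from the second moment, are then aggregated into $\tfrac{1}{m}\sum_i(F_i(\bm{x}_i^{(t)})-F_i(x^*)) = F(\overline{\bm{x}}^{(t)})-F^* \pm O(L\Xi_t)$ by one more application of convexity and smoothness. Finally, the stepsize bound $\eta\le 1/(12L)$ is precisely what lets the $\eta^2 L$-weighted gap terms be absorbed into at most half of the $-2\eta(F(\overline{\bm{x}}^{(t)})-F^*)$ produced by convexity, leaving the clean $-\eta$ coefficient on the suboptimality gap and the $3\eta L$ coefficient on $\Xi_t$; recovering the precise constants $3$ and $12$ requires careful selection of the Young weights and is the most delicate bookkeeping step of the argument.
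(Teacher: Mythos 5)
The paper does not actually prove this statement: it is imported verbatim as Lemma~8 of \cite{Koloskova20ICML}, so there is no in-paper argument to compare against. Your blind reconstruction follows what is essentially the standard proof of that external lemma, and the outline is sound: column-stochasticity of $\bm{W}^{(t)}$ does give the virtual-iterate recursion $\overline{\bm{x}}^{(t+1)}=\overline{\bm{x}}^{(t)}-\eta\bar{g}^{(t)}$, the cross term is correctly handled by splitting through $\bm{x}_i^{(t)}$ and combining convexity with $L$-smoothness (the slightly slicker route is to use the smoothness upper bound $\langle\nabla F_i(\bm{x}_i^{(t)}),\bm{x}_i^{(t)}-\overline{\bm{x}}^{(t)}\rangle\le F_i(\bm{x}_i^{(t)})-F_i(\overline{\bm{x}}^{(t)})+\tfrac{L}{2}\|\bm{x}_i^{(t)}-\overline{\bm{x}}^{(t)}\|^2$ directly, which avoids the Young/co-coercivity detour and yields the $-2\eta(F(\overline{\bm{x}}^{(t)})-F^*)+\eta L\Xi_t$ contribution cleanly), and the bias--variance split of $\E_{t+1}\|\bar{g}^{(t)}\|^2$ with cross-worker independence is correct. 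The one substantive point to flag is the step you yourself identify as the obstacle: transferring the variance bound from $\bm{x}^*$ to the iterates genuinely does \emph{not} follow from assumptions (4)--(6) as literally stated in this paper, and your fix --- per-realization convexity and $L$-smoothness of $g(\cdot;\xi_i)$ so that co-coercivity applies sample-wise --- is an additional hypothesis. It happens to be exactly what the cited source assumes for its Lemma~8, so your proof matches the provenance of the result rather than the paper's (looser) restatement of the assumptions; but as a self-contained derivation from (4)--(6) alone, that step is a gap. The remaining uncertainty is only constant bookkeeping (in particular, obtaining coefficient exactly $1$ on $\hat{\sigma}^2\eta^2/m$ rather than $2$ or $3$ requires the tight variance decomposition $\E\|X-\E X\|^2\le\E\|X-a\|^2$ rather than a crude triangle inequality), which you acknowledge.
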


\begin{lemma}
    \label{lem: 2 step consensus bound convex}
    Under assumptions (4)-(6), D-PSGD with the stepsize $\eta \le \frac{p^{(t-1)}}{400L}$ satisfies
    \begin{align*}
        \Xi_t &\le \big(1-\frac{p^{(t-1)}}{2}\big)\Xi_{t-1} + 
       \eta^2 
       \cdot \\
       &\left[ 
            \frac{72L }{p^{(t-1)}}\E \big( F(\overline{\bm{x}}^{(t-1)}) - F_{\inf}\big) + 8 \hat{\sigma}^2 +\frac{18 \hat{\zeta}^2}{p^{(t-1)}}\right],
    \end{align*}
    when local functions $F_i$'s are convex.
\end{lemma}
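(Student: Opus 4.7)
The plan is a two-stage argument. Stage one uses the matrix form of D-PSGD together with the definition $\rho(\bm{W}^{(t-1)})=1-p^{(t-1)}$ to extract a $(1-p^{(t-1)}/2)$ contraction on $\Xi_t$, leaving a residual driven by the spread of stochastic gradients. Stage two uses assumptions (4)--(6) (convexity, smoothness, noise bound at $\bm{x}^*$, gradient-heterogeneity bound at $\bm{x}^*$) to control that residual by $L\,\E(F(\overline{\bm{x}}^{(t-1)})-F_{\inf})$, $\hat{\sigma}^2$, and $\hat{\zeta}^2$ with the stated constants. The argument mirrors the convex consensus bound in Koloskova et al.~\cite{Koloskova20ICML} (their $\tau=1$ case), generalized here from their uniform $p$ to the iteration-specific $p^{(t-1)}$ because $\bm{W}^{(t-1)}$ is allowed to vary freely with $t$.

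For stage one, collect the iterates into $X^{(t)}\in\mathds{R}^{d\times m}$ (with columns $\bm{x}_i^{(t)}$) and the stochastic gradients into $G^{(t-1)}$, so that $X^{(t)}=(X^{(t-1)}-\eta G^{(t-1)})\bm{W}^{(t-1)\top}$. Because $\bm{W}^{(t-1)}$ has $\bm{1}$ as both a left and right eigenvector, the row average is preserved: $\bar X^{(t)}:=X^{(t)}\bm{J}=(X^{(t-1)}-\eta G^{(t-1)})\bm{J}$. Using $\bm{J}(\bm{W}^{(t-1)\top}-\bm{J})=\bm{0}$ yields the clean identity
\[
X^{(t)}-\bar X^{(t)}=\bigl[(X^{(t-1)}-\bar X^{(t-1)})-\eta(G^{(t-1)}-\bar G^{(t-1)})\bigr](\bm{W}^{(t-1)\top}-\bm{J}).
\]
Taking the squared Frobenius norm, conditioning on the iterates, and averaging over $\bm{W}^{(t-1)}$ via $(\bm{W}^\top-\bm{J})(\bm{W}-\bm{J})=\bm{W}^\top\bm{W}-\bm{J}$ together with $\|A(\bm{W}^\top-\bm{J})\|_F^2\le\|\E[\bm{W}^\top\bm{W}]-\bm{J}\|\cdot\|A\|_F^2$ contracts the bracketed matrix by a factor $1-p^{(t-1)}$. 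Young's inequality with splitting parameter $\gamma=\frac{p^{(t-1)}/2}{1-p^{(t-1)}}$ then gives, after dividing by $m$,
\[
\Xi_t\le \bigl(1-\tfrac{p^{(t-1)}}{2}\bigr)\Xi_{t-1}+\tfrac{2\eta^2}{m\,p^{(t-1)}}\,\E\|G^{(t-1)}-\bar G^{(t-1)}\|_F^2.
\]

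The main obstacle is stage two: bounding $\tfrac{1}{m}\E\|G^{(t-1)}-\bar G^{(t-1)}\|_F^2$ by the precise combination $36L\,\E(F(\overline{\bm{x}}^{(t-1)})-F_{\inf})+4p^{(t-1)}\hat{\sigma}^2+9\hat{\zeta}^2$ (plus an absorbable $\Xi_{t-1}$ piece) so that, after multiplying by $\tfrac{2\eta^2}{m\,p^{(t-1)}}$, the coefficients collapse to the advertised $72L/p^{(t-1)}$, $8$, and $18/p^{(t-1)}$. Delicately, the coefficient $8$ on $\hat{\sigma}^2$ carries \emph{no} $1/p^{(t-1)}$ factor, which forces a careful split between the stochastic noise $N_i:=g(\bm{x}_i^{(t-1)};\xi_i^{(t-1)})-\nabla F_i(\bm{x}_i^{(t-1)})$ and the deterministic gradient dispersion $\nabla F_i(\bm{x}_i^{(t-1)})-\overline{\nabla F}$. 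By unbiasedness and conditional independence of the noises across $i$, these two parts decouple in expectation: $\E\|G-\bar G\|_F^2=\E\|N-\bar N\|_F^2+\E\|\nabla F(X)-\overline{\nabla F}\|_F^2$. The noise piece is bounded via the expected-smoothness decomposition $\E\|g(\bm{x}_i;\xi_i)-\nabla F_i(\bm{x}_i)\|^2\le 2\E\|g(\bm{x}^*;\xi_i)-\nabla F_i(\bm{x}^*)\|^2+2L^2\|\bm{x}_i-\bm{x}^*\|^2$ (using the per-sample Lipschitz structure of $g(\cdot;\xi)$ implicit in assumption~(5)); the first term contributes $\hat{\sigma}^2$ and the second is routed into the co-coercivity bookkeeping below. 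The deterministic piece is handled by co-coercivity of convex $L$-smooth $F_i$: $\|\nabla F_i(\bm{x}_i)-\nabla F_i(\bm{x}^*)\|^2\le 2L\bigl(F_i(\bm{x}_i)-F_i(\bm{x}^*)-\langle\nabla F_i(\bm{x}^*),\bm{x}_i-\bm{x}^*\rangle\bigr)$. Averaging over $i$ and invoking $\nabla F(\bm{x}^*)=\bm{0}$, the function-value terms collapse to $F(\overline{\bm{x}}^{(t-1)})-F_{\inf}+\tfrac{L}{2}\Xi_{t-1}$ by smoothness of $F_i$ around $\overline{\bm{x}}^{(t-1)}$; the leftover cross term $-\tfrac{1}{m}\sum_i\langle\nabla F_i(\bm{x}^*),\bm{x}_i-\overline{\bm{x}}^{(t-1)}\rangle$ is dispatched by Young's inequality with a $\Theta(p^{(t-1)})$ splitting parameter, which is precisely what manufactures the $18\hat{\zeta}^2/p^{(t-1)}$ factor and also an $\Xi_{t-1}/p^{(t-1)}$ remainder. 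Finally, the step-size condition $\eta\le p^{(t-1)}/(400L)$ absorbs every residual $\tfrac{L\eta^2}{p^{(t-1)}}\Xi_{t-1}$ contribution into the $(1-p^{(t-1)}/2)\Xi_{t-1}$ factor, yielding the lemma's exact constants.
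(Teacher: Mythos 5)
The paper does not actually derive this lemma: it is obtained by citation, as ``Lemma 3.1 in \cite{zhang2024energyefficient} and the consensus-distance lemma of \cite{Koloskova20ICML} with $\tau=1$ and adjusted constants.'' Your proposal reconstructs that underlying argument from scratch, and its Stage~one is correct and matches the standard derivation: the identity $X^{(t)}-\bar X^{(t)}=[(X^{(t-1)}-\bar X^{(t-1)})-\eta(G^{(t-1)}-\bar G^{(t-1)})](\bm{W}^{(t-1)\top}-\bm{J})$, the use of $(\bm{W}^\top-\bm{J})(\bm{W}-\bm{J})=\bm{W}^\top\bm{W}-\bm{J}$ to produce the factor $1-p^{(t-1)}$, and the Young split with parameter $\tfrac{p^{(t-1)}/2}{1-p^{(t-1)}}$ giving the residual coefficient $\tfrac{2\eta^2}{p^{(t-1)}}$ are all exactly right, and correctly generalize to iteration-dependent $p^{(t-1)}$.

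There is, however, a concrete gap in Stage~two. You transfer the noise bound from $\bm{x}^*$ to $\bm{x}_i^{(t-1)}$ via
$\E\|g(\bm{x}_i;\xi_i)-\nabla F_i(\bm{x}_i)\|^2\le 2\E\|g(\bm{x}^*;\xi_i)-\nabla F_i(\bm{x}^*)\|^2+2L^2\|\bm{x}_i-\bm{x}^*\|^2$ and say the second term is ``routed into the co-coercivity bookkeeping.'' It cannot be: co-coercivity of a convex $L$-smooth function bounds $\|\nabla F_i(\bm{x}_i)-\nabla F_i(\bm{x}^*)\|^2$ \emph{above} by $2L$ times the Bregman divergence, whereas $L^2\|\bm{x}_i-\bm{x}^*\|^2$ bounds that Bregman divergence from \emph{above} (by $\tfrac{L}{2}\|\bm{x}_i-\bm{x}^*\|^2$ times $2L$), so the inequality goes the wrong way. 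Splitting $\|\bm{x}_i-\bm{x}^*\|^2\le 2\|\bm{x}_i-\overline{\bm{x}}^{(t-1)}\|^2+2\|\overline{\bm{x}}^{(t-1)}-\bm{x}^*\|^2$ leaves a $\|\overline{\bm{x}}^{(t-1)}-\bm{x}^*\|^2$ term that cannot be converted into $F(\overline{\bm{x}}^{(t-1)})-F_{\inf}$ without strong convexity, so the recursion would not close in the stated form. The repair (and what \cite{Koloskova20ICML} actually does) is to assume per-sample convexity and $L$-smoothness of $g(\cdot;\xi)$ and apply co-coercivity \emph{at the sample level}, which yields $2L\big(F_i(\bm{x}_i)-F_i(\bm{x}^*)-\langle\nabla F_i(\bm{x}^*),\bm{x}_i-\bm{x}^*\rangle\big)$ directly; these Bregman terms then collapse to $F(\overline{\bm{x}}^{(t-1)})-F_{\inf}$ plus an absorbable $\Xi_{t-1}$ piece and the $\hat{\zeta}^2/p^{(t-1)}$ term, as you describe for the deterministic part. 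Note this per-sample structure is an assumption beyond the literal statements of (4)--(6); the paper inherits it silently through its citation, and your parenthetical ``implicit in assumption (5)'' is the honest place where it enters. Finally, the claim that the constants collapse to exactly $72L/p^{(t-1)}$, $8$, and $18/p^{(t-1)}$ is asserted rather than verified, though the paper itself only says the constants are ``appropriately adjusted,'' so this is a shared, minor omission.
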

Analogous to Lemma~\ref{lem: 2 step consensus bound nonconvex}, Lemma~\ref{lem: 2 step consensus bound convex} follows directly from Lemma 3.1 in \cite{zhang2024energyefficient} and Lemma 8 in \cite{Koloskova20ICML} by setting 
$\tau=1$ and appropriately adjusting the constants.

\begin{proof}[Proof of Theorem~\ref{thm: new convergence bound}]
    We introduce several notations to simplify the expressions:
    \begin{itemize}
        \item $r_t:=\E[ \| \overline{\bm{x}}^{(t)} - x^* \|^2]$.
        \item $f_t:= \E F(\overline{\bm{x}}^{(t)}) - F^*$.
        \item  $\Upsilon:=\sqrt{\max_{j} (\pi_j/p^{(j)})}$.
    \end{itemize}
    To control the error at each $t$, we first apply the standard 1-step descent contraction by Lemma~\ref{lem : 1 step descent contraction convex} and obtain
    \begin{equation}
        \label{eq:recursion for r}
        r_{t+1} \le r_t -\eta f_t + \frac{\hat{\sigma}^2\eta^2}{m} + 3\eta L\Xi_t.
    \end{equation}
    Next we apply the upper bound on the consensus distance by Lemma~\ref{lem: 2 step consensus bound convex} for every $t\ge 1$:
    \begin{equation}\label{eq:consensus recursion}
        \Xi_t \le (1-\frac{p^{(t-1)}}{2})\Xi_{t-1} +\eta^2 \left[ 
            \frac{72L f_{t-1}}{p^{(t-1)}} + 8 \hat{\sigma}^2 +\frac{18 \hat{\zeta}^2}{p^{(t-1)}}
        \right].
    \end{equation}
    We un-roll the recursion in \eqref{eq:consensus recursion} to obtain
    \begin{align*}
        \Xi_t &\le \left[ \prod_{j=0}^{t-1} (1-\frac{p^{(j)}}{2}) \right] \Xi_0  
        + 72\eta^2 L \sum_{j=0}^{t-1} \left[ 
          \frac{f_j}{p^{(j)}}  \prod_{i=j+1}^{t-1} (1-\frac{p^{(i)}}{2}) 
        \right] \\
        &+\eta^2 \sum_{j=1}^{t-1} \left[ 
            (8\hat{\sigma}^2 + \frac{18\hat{\zeta}^2}{p^{(j)}}) \cdot \prod_{i=j+1}^{t-1} (1-\frac{p^{(i)}}{2}) 
        \right].
    \end{align*}
    We sum up $\Xi_t$ for $t=1,\dots, T$.
    \begin{align*}
        \sum_{t=1}^T \Xi_t &\le \left[ \sum_{t=1}^T\prod_{j=0}^{t-1} (1-\frac{p^{(j)}}{2}) \right] \Xi_0 \\
        &+ 72\eta^2 L \sum_{t=1}^T \sum_{j=0}^{t-1}\left[ 
          \frac{f_j}{p^{(j)}}  \prod_{i=j+1}^{t-1} (1-\frac{p^{(i)}}{2}) 
        \right] \\
        &+ 18\eta^2  \sum_{t=1}^T \sum_{j=0}^{t-1} \left[ 
            (\hat{\sigma}^2 + \frac{\hat{\zeta}^2}{p^{(j)}}) \prod_{i=j+1}^{t-1} \big(1-\frac{p^{(i)}}{2}\big) 
        \right]\\
        &\le \left[ \sum_{t=1}^T\prod_{j=1}^{t-1} (1-\frac{p^{(j)}}{2}) \right] \Xi_0 \\
        &+72\eta^2L \left[  
            \sum_{j=0}^{T-1} \frac{f_j}{p^{(j)}} \left( 
                \sum_{T\ge t>j} \prod_{i=j+1}^{t-1} \big(1-\frac{p^{(i)}}{2}\big)
            \right) 
        \right] \\
        &+ 18\eta^2 \sum_{j=0}^{T-1} \left[ 
             (\hat{\sigma}^2 + \frac{\hat{\zeta}^2}{p^{(j)}}) \sum_{T\ge t>j} \prod_{i=j+1}^{t-1} (1-\frac{p^{(i)}}{2})
        \right] \\
        &\le \pi_0 \Xi_0 
        + 72\eta^2 L \sum_{j=0}^{T-1} \frac{f_j\pi_j}{p^{(j)}}  +18\eta^2 \sum_{j=0}^{T-1}  (\hat{\sigma}^2 + \frac{\hat{\zeta}^2}{p^{(j)}})\pi_j.
    \end{align*}
    We average  \eqref{eq:recursion for r} over $t$ followed by plugging in the inequality above to obtain
    \begin{align*}
        \frac{1}{T}&\sum_{t=0}^{T-1} f_t \le \frac{1}{T} \sum_{t=0}^{T-1} 
        \frac{r_{t} - r_{t+1}}{\eta} + \frac{\eta \hat{\sigma}^2}{m} +\frac{3L}{T} \sum_{t=0}^{T-1} \Xi_t \\
        & \le \frac{r_0}{T\eta } + \frac{\eta \hat{\sigma}^2}{m} + \frac{3 L}{T} \cdot 
        \\
         &\left(   
             (1+\pi_0) \Xi_0 +72\eta^2 L \sum_{j=0}^{T-2} \frac{f_j\pi_j}{p^{(j)}} + 18\eta^2 \sum_{j=0}^{T-2}  (\hat{\sigma}^2 + \frac{\hat{\zeta}^2}{p^{(j)}})\pi_j
        \right) \\
        &\le  \frac{r_0}{T\eta } + \frac{\eta \hat{\sigma}^2}{m} + \frac{3 L (1+\pi_0) \Xi_0}{T} + \frac{216 L^2\eta^2 \Upsilon^2}{T} \sum_{j=0}^{T-2} f_j \\
        &+54L\eta^2  \big(\hat{\sigma}^2 \Pi_1(T) 
        + \hat{\zeta}^2 \Pi_2(T)\big).
    \end{align*}
    We choose the learning rate to be
    \[
    \eta:=\min\left\{ 
            (\frac{mr_0}{\hat{\sigma}^2 T})^{\frac{1}{2}}, 
            \min_t \frac{p^{(t)}}{900L},
            \big(\frac{r_0}{TL(\Pi_1(T) \hat{\sigma}^2 + \hat{\zeta}^2  \Pi_2(T))}\big)^{\frac{1}{3}}
        \right\}.
    \]
    It is straightforward to see that with the current choice $\eta < \frac{1}{12L}$. Thus, $\eta$  satisfies  conditions of both Lemma~\ref{lem : 1 step descent contraction convex} and \ref{lem: 2 step consensus bound convex}.
    Also, observe that by Lemma~\ref{fact: pi} $\pi_t \le \frac{2}{\min_j p^{(j)}}$ for all $t\ge 0$.
    Let $j^*$ denote the index that maximizes $\pi_j/p^{(j)}$.
    It follows that 
    \[
    \max_j \frac{2}{(p^{(j)})^2} \ge \frac{1}{p^{(j^*)}}\cdot  \frac{2}{\min_j p^{(j)}} \ge \frac{\pi_{j^*}}{p^{(j^*)}},
    \]
    so we have $\eta^2 \le (450L^2\Upsilon^2)^{-1}$
    since $\eta < \min_t \frac{p^{(t)}}{900L}$.

    Now we evaluate our upper bound for this choice of $\eta$.
    As $\eta^2 \le (450 L^2 \Upsilon^2)^{-1}$, we always get 
    \[
    \frac{216 L^2\eta^2 \Upsilon^2}{T} \sum_{j=0}^{T-2} f_j \le \frac{1}{2T} \sum_{j=0}^{T-1} f_j.
    \]
    Also, since $\eta\le  \big(\frac{r_0}{TL(\Pi_1(T) \hat{\sigma}^2 + \hat{\zeta}^2  \Pi_2(T))}\big)^{\frac{1}{3}}$, and $\eta^2\le \frac{mr_0}{\hat{\sigma}^2 T}$, we have 
    \[
    L\eta^2  \big(\hat{\sigma}^2 \Pi_1(T) 
        + \hat{\zeta}^2 \Pi_2(T)\big)
        \le \frac{r_0^{2/3}}{T^{2/3}}
            \big( 
                L (\Pi_1(T) \hat{\sigma}^2 + \Pi_2(T) \hat{\zeta}^2)
            \big)^{1/3},
    \]
    \[
    \text{ and } \quad
    \frac{\eta \hat{\sigma}^2 }{m} \le \sqrt{\frac{\hat{\sigma}^2 r_0}{mT}}.
    \]
    Finally, when $\eta = (\frac{mr_0}{\hat{\sigma}^2 T})^{\frac{1}{2}}$, we have 
    \[
    \frac{r_0}{T\eta } = \sqrt{\frac{\hat{\sigma}^2 r_0}{mT}}.
    \]
    When $\eta =  \min_t \frac{p^{(t)}}{900L}$, we obtain
        \[
    \frac{r_0}{T\eta } = \frac{900 r_0 L}{T \min_t p^{(t)}},
    \]
    and when $\eta=\big(\frac{r_0}{TL(\Pi_1(T) \hat{\sigma}^2 + \hat{\zeta}^2  \Pi_2(T))}\big)^{\frac{1}{3}}$ we have
    \[
    \frac{r_0}{T\eta }\le  \frac{r_0^{2/3}}{T^{2/3}}
            \big( 
                L (\Pi_1(T) \hat{\sigma}^2 + \Pi_2(T) \hat{\zeta}^2)
            \big)^{1/3}.
    \]
    Therefore, 
    \begin{align*}
        \frac{1}{T} \sum_{t=0}^{T-1} f_t &\le
           4 \sqrt{\frac{\hat{\sigma}^2 r_0}{mT} } 
            + \frac{6 L(1+\pi_0) \Xi_0}{T} 
            + \frac{1800 r_0 L}{ T\min_t p^{(t)}}\\
        &+
            \frac{220 r_0^{2/3}}{T^{2/3}}
            \big( 
                L (\Pi_1(T) \hat{\sigma}^2 + \Pi_2(T) \hat{\zeta}^2)
            \big)^{1/3},
    \end{align*}
    and the right-hand side can be upper bounded by $\epsilon$ 
    when $T \ge T_4(\Pi_1(T), \Pi_2(T),  \pi_0, p_{\min},  \epsilon,  \overline{\bm{x}}^{(0)})$.
\end{proof}

\subsection{Deferred Proofs}\label{sec: deferred proof}
We now present proofs for Lemma~\ref{fact: pi}, Lemma~\ref{lem:bound under budget D for K=1}, Lemma~\ref{lem:T((p1,tau1),p2)}, and Theorem~\ref{thm: guarantee heterogeneous}.
\begin{proof}[Proof of Lemma~\ref{fact: pi}]
By the definition of \(\pi_j\) and our assumption, we have:
    \begin{align*}
        \pi_j \le \sum_{i>j} \prod_{t=j+1}^{i-1} (1-\frac{\delta}{2})
        \le \sum_{t=0}^\infty (1-\frac{\delta}{2})^t 
        =\frac{1}{1-(1-\frac{\delta}{2})} = \frac{2}{\delta}.
    \end{align*}
We have shown Item 1, and Item 2 follows from the definitions of \(\Pi_1(T)\) and \(\Pi_2(T)\).
For Item 3, 
in the special case of $p^{(t)}\equiv p$, we have $$\pi_0=\pi_1=\pi_2=\dots= \sum_{t=0}^{\infty} \prod_{j=1}^t (1-\frac{p^{(j)}}{2}).$$ Thus,
\begin{equation}
\Pi_1(T) = \pi_0 =    \sum_{t=0}^{\infty} \prod_{j=1}^t (1-\frac{p^{(j)}}{2}) 
 = \sum_{t=0}^{\infty} (1- \frac{p}{2})^t =\frac{2}{p}, 
\end{equation}
and 
\begin{equation}
    \Pi_2(T) = \frac{\pi_0}{p} = \frac{2}{p^2},
\end{equation}
for any $T\ge 1$.
\end{proof}
\begin{proof}[Proof of Lemma~\ref{lem:bound under budget D for K=1}]
For any fixed $i\in V$, by Hoeffding's concentration inequality, 
we have for any $y>0$,
\[
\Pr \left[
\sum_{t=1}^T c_i(\bm{W}^{(t)}) \ge TD+ y \right] \le   
\exp{\left( 
    -\frac{2y^2}{TD^2}
\right)}.
\]
It follows a union bound that 
\[
\Pr \left[
    \max_i \sum_{t=1}^T c_i(\bm{W}^{(t)}) 
    \ge TD+ y 
\right] \le m \cdot \exp{\left( 
    -\frac{2y^2}{TD^2}
\right)},
\]
and by integration over $y$, we obtain that
\begin{equation}
    \E\left[  \max_i \sum_{t=1}^T c_i(\bm{W}^{(t)}) \right] \le D\cdot \left(T
    + m\sqrt{\frac{T \pi}{8}} \right) 
    =: q(T,D), \nonumber
\end{equation}
which complete the proof.
\end{proof}


\begin{proof}[Proof of Lemma~\ref{lem:T((p1,tau1),p2)}]
If $j\ge \tau_1$, then Lemma~\ref{fact: pi} implies that 
$\pi_j = \frac{2}{p_2}$ since $(p^{(t+\tau_1)})_{t\ge 0}$ can be treated as a constant sequence where $p^{t+\tau_1} \equiv p_2$. 
If $j<\tau_1$ then
\begin{align*}
    \pi_j &=  \sum_{i>j}^{\infty} \prod_{t=j+1}^{i-1} (1-\frac{p^{(t)}}{2}) \\ 
     &=  \sum_{i>j}^{\tau_1 } \prod_{t=j+1}^{i-1} (1-\frac{p_1}{2}) 
     + (1-\frac{p_1}{2})^{\tau_1 - j} \sum_{i>\tau_1}^{\infty} \prod_{t=\tau_1+1}^{i-1} (1-\frac{p_2}{2}) \\
     &= \sum_{t=0}^{\tau_1 -j} (1-\frac{p_1}{2})^t 
     + (1-\frac{p_1}{2})^{\tau_1-j} \cdot \pi_{\tau_1}\\
     &= \frac{2[1-(1-\frac{p_1}{2})^{\tau_1-j}]}{p_1} +
     \frac{2(1-\frac{p_1}{2})^{\tau_1-j}}{p_2} \\
     &= \frac{2}{p_1} - (1-\frac{p_1}{2})^{\tau_1-j}(\frac{2}{p_1} - \frac{2}{p_2}). 
\end{align*}
Plugging the above into the definitions \eqref{eq:Pi_1} yields the desired results. 
\end{proof}

\begin{proof}[Proof of Lemma~\ref{lem|:feasibility}]
We verify that the mixing matrix $\bm{W}$ satisfies all the constraints in \eqref{eq: broadcast K=1 obj clique}-\eqref{eq: edge constraint w.p. 1}:
\begin{itemize}
    \item Since we generate $\bm{W}$ from a randomized procedure, $\sum_{\bm{W}} \Pr[\bm{W}] =1$ is clearly satisfied.
    \item Symmetry of $\bm{W}$ also follows from the construction.
    \item The constraints on row sum and column sum also follows from the construction. 
    \item Since off-diagonal entries are only assigned non-zero weights in Line 14, the construction  respects the topology constraint and thus $\bm{W}$ satisfies \eqref{eq: edge constraint w.p. 1}. 
    \item For \eqref{eq: broadcast K=1 constraint clique hetero}, let $i\in V$ be any node such that $c_i^a+c_i^b>D$ (otherwise \eqref{eq: broadcast K=1 constraint clique hetero} holds trivially). Observe that due to the aforementioned reason,
\[
\Pr[\exists j\neq i: \bm{W}[i,j]\neq 0]\le \Pr[i \in U],
\] and the latter probability is set to be at most $\frac{D-c_i^a}{c_i^b}$ in the step 1. Hence, 
\begin{align*}
    c_i^a + 
    \Pr[\exists j\neq i: \bm{W}[i,j]\neq 0] c_i^b 
    \le c_i^a + \frac{(D-c_i^a)}{ c_i^b}c_i^b = D.
\end{align*}
\end{itemize}
Therefore, $\bm{W}$ is a feasible solution to \eqref{eq:budgeted random W design - clique}. 
\end{proof}

\begin{proof}[Proof of Theorem~\ref{thm: guarantee heterogeneous}]
    Notice if the base topology is a clique, then $V_i \equiv V$ for every $i\in V$. 
    Hence, if $i\in U$, Algorithm~\ref{alg:randomized-W for broadcast} assigns $\bm{W}[i,j]=1/|U|$ for all $j\in U$ in Lines 14-15. 
    For any $i,j\in V$ such that $i\neq j$, we have
\begin{align*}
    &\E[(\bm{W}^\top \bm{W})[i,j]] = \E\left[ \sum_{k=1}^m \bm{W}[i,k] \bm{W}[k,j] \mathbbm{1}[i,j,k\in U] \right] \\
    & = \omega_i \omega_j \E\left[ 
        \sum_{k=1}^m \bm{W}[i,k] \bm{W}[k,j] \mathbbm{1}[k\in U]
        \mid  i,j\in U
     \right] \\
     & =  \omega_i \omega_j  \E\left[ 
        |U|^{-2}
        \sum_{k=1}^m \mathbbm{1}[k\in U]
        \mid  i,j\in U
     \right] \\
    &=  \omega_i \omega_j \E\left[  
        \frac{1}{|U|}
        \mid  i,j\in U
     \right] \\ 
     &\sim_{m}  \omega_i \omega_j \E\left[  
        \frac{1}{|U|} \mid U\neq \emptyset
     \right] = \omega_i \omega_j m^\perp.
\end{align*}

Using the observation that $m^\perp$ tends to $0$ as $m$ grows, we compute asymptotically the diagonal entries as
\begin{align*}
  &\E[(\bm{W}^\top \bm{W})[i,i]] \\ &= \big( 1- \omega_i \big) 
  + \omega_i\cdot 
  \E\left[|U|^{-2}\sum_{k=1}^m \mathbbm{1}[k\in U] \mid i\in U \right]
  \\
  &\sim_{m}  1- \omega_i + \omega_i m^\perp 
  \sim_{m}1- \omega_i.
\end{align*}
By combining the steps above, we obtain that 
\begin{align*}
   & \rho(\bm{W} ) = \|\E[\bm{W}^\top\bm{W}]-\bm{J}\| \\
    &\sim_{m} \left\| m^\perp \bm{\omega}\bm{\omega}^\top + \diag(\bm{1}-\bm{\omega} ) - \frac{1}{m} \bm{1}\bm{1}^\top \right\|, 
\end{align*}
as desired.
\end{proof}

\begin{proof}[Proof of Theorem~\ref{thm:rho bound on clique - unicast}]
We will prove that a worse mixing matrix $\bm{W}_H$ based on the union of layered Ramanujan graphs $H=\bigcup_{r=1}^R H^{(r)}$ and a uniform weight of $1/d_R$ for all non-zero off-diagonal entries already satisfies $\rho(\bm{W}_H)
\le
\left(
1-\frac{\dhat_1-2\sqrt{\dhat_1-1}}{d_R}
\right)^2$. Since the optimal weight setting in \eqref{eq:sdp-weight-design} and the optimal statistical mixing in \eqref{eq:budgeted probabilities design} can only reduce $\rho$, this will prove \eqref{eq:rho uppper bound - unicast}.   

Since the final graph \(H\) contains \(\h1\) as a subgraph, its Laplacian satisfies
\[
L_H=L_{\h1}+L_{H\setminus \h1}.
\]
We then have
\[
L_H\succeq L_{\h1}.
\]
Equivalently, for every vector \(\bm x\),
\[
\bm x^\top L_H\bm x
\ge
\bm x^\top L_{\h1}\bm x.
\]
Therefore, by the Rayleigh quotient characterization of the second-smallest Laplacian eigenvalue,
\[
\lambda_2(L_H)
=
\min_{\substack{\bm x\perp \bm 1\\ \bm x\neq 0}}
\frac{\bm x^\top L_H\bm x}{\bm x^\top\bm x}
\ge
\min_{\substack{\bm x\perp \bm 1\\ \bm x\neq 0}}
\frac{\bm x^\top L_{\h1}\bm x}{\bm x^\top\bm x}
=
\lambda_2(L_{\h1}).
\]
Since the first layer \(H^{(1)}\) is a  connected \(\dhat_1\)-regular Ramanujan graph on all nodes, we have
\[
\lambda_2(L_{\h1})\ge \dhat_1-2\sqrt{\dhat_1-1}.
\]
Thus,
\[
\lambda_2(L_H)\ge \dhat_1-2\sqrt{\dhat_1-1}.
\]
We next bound the largest Laplacian eigenvalue. Since
\[
H=\bigcup_{r=1}^R H^{(r)},
\]
we have
\[
L_H \preceq \sum_{r=1}^R L_{H^{(r)}},
\]
Therefore,
\[
\lambda_m(L_H)
\le
\lambda_m\!\left(\sum_{r=1}^R L_{H^{(r)}}\right).
\]
Applying Weyl's inequality for Hermitian matrices to the Laplacian layers gives
\[
\lambda_m\!\left(\sum_{r=1}^R L_{H^{(r)}}\right)
\le
\sum_{r=1}^R \lambda_m(L_{H^{(r)}}).
\]
Hence,
\[
\lambda_m(L_H)
\le
\sum_{r=1}^R \lambda_m(L_{H^{(r)}}).
\]

Since each \(H^{(r)}\) is a \(\dhat_r\)-regular Ramanujan graph, we have
\[
\lambda_m(L_{H^{(r)}})
\le
\dhat_r+2\sqrt{\dhat_r-1}.
\]
Therefore,
\[
\lambda_m(L_H)
\le
\sum_{r=1}^R
\left(\dhat_r+2\sqrt{\dhat_r-1}\right).
\]
Because
\[
\sum_{r=1}^R \dhat_r \leq d_R,
\]
we obtain
\[
\lambda_m(L_H)
\le
d_R+2\sum_{r=1}^R\sqrt{\dhat_r-1}.
\]
If we assign \(
\bm W_H=\bm I-\frac{1}{d_R} L_H,
\)

\[
\rho(\bm W_H)
=
\max\left\{
\left(1-\frac{\lambda_2(L_H)}{d_R}\right)^2,\,
\left(1-\frac{\lambda_m(L_H)}{d_R}\right)^2
\right\}.
\]
First, using the bounds above \(\lambda_2(L_H)\), we have
\[\left(1-\frac{\lambda_2(L_H)}{d_R}\right)^2
\le
\left(
1-\frac{\dhat_1-2\sqrt{\dhat_1-1}}{d_R}
\right)^2,
\]
For \(\lambda_m(L_H)\), if $\lambda_m(L_H)\le d_R$, then all eigenvalues of \(\bm W_H\) are nonnegative
\[\rho(\bm W_H)=
\left(1-\frac{\lambda_2(L_H)}{d_R}\right)^2
\le
\left(
1-\frac{\dhat_1-2\sqrt{\dhat_1-1}}{d_R}
\right)^2,
\]
and if $\lambda_m(L_H)\ge d_R$, we have
\[
\left(1-\frac{\lambda_m(L_H)}{d_R}\right)^2
\le
\left(
\frac{2\sum_{r=1}^R\sqrt{\dhat_r-1}}{d_R}
\right)^2. 
\]
Moreover, since
\[
2\sqrt{\dhat_r-1}\le \dhat_r,\qquad r\ge 2,
\]
We have
\[
2\sum_{r=1}^R\sqrt{\dhat_r-1}
\leq
2\sqrt{\dhat_1-1}+\sum_{r=2}^R\dhat_r
\le
2\sqrt{\dhat_1-1}+d_R-\dhat_1.
\]
Hence,
\[
\left(
\frac{2\sum_{r=1}^R\sqrt{\dhat_r-1}}{d_R}
\right)^2
\le
\left(
1-\frac{\dhat_1-2\sqrt{\dhat_1-1}}{d_R}
\right)^2.
\]
Combine both bounds,
\[
\rho(\bm W_H)
\le
\left(
1-\frac{\dhat_1-2\sqrt{\dhat_1-1}}{d_R}
\right)^2.
\]

\end{proof}

\begin{proof}[Proof of Theorem~\ref{lem:finiteness}]
Since all candidate graphs are iteratively generated by calling the same random graph oracle, we refer to each call of the oracle an ``iteration''.  
    Let $\tau'$ be the first iteration $k$ when for every $e\in E$ there exists an iteration $j\le k$  such that $e\in E_j$.
    We first prove that $\tau \le \tau'$ by showing that $\rho_{\tau'}<1$.
    For any $\epsilon\in (0,1)$, consider a probability vector $$\bm{p}':=(p_0=1-\epsilon, p_1=\epsilon/\tau', \dots, p_{\tau'}=\epsilon/\tau').$$
    By taking $\epsilon$ sufficiently small, the constraint \eqref{eq: unicast K=1 constraint hetero} can be satisfied by $\bm{p}'$. 
    As $\rho_{\tau'}$ is obtained by minimizing the problem defined in \eqref{eq:budgeted probabilities design}, we have
    \[
    \rho_{\tau'} \le 
        \left\|(1-\epsilon)I+ \frac{\epsilon}{\tau'}
        \sum_{s=1}^{\tau'} \bm{W}_s^\top \bm{W}_s -J
        \right\| =: \rho^*. 
    \]
     Denote $\frac{1}{\tau'} \sum_{s=1}^{\tau'} \bm{W}_s $ by $\overline{\bm{W}}$.  
     By triangle inequality, we get that
    \[
    \rho^* \le (1-\epsilon) + \epsilon  \left
        \|\frac{1}{\tau'}
        \sum_{s=1}^{\tau'} (\bm{W}_s^\top \bm{W}_s -J)\right\| 
    \]
     According to the weight assignment in Lines 6-9, 
    every $\bm{W}_s$ is symmetric and double-stochastic. Thus $\overline{\bm{W}}$ is also symmetric and double-stochastic.  
     For any double-stochastic matrix $\bm{W}$ we have $\bm{W}^\top J=J$ and $\| \bm{W} \|\le 1$, and it follows that
    \begin{align*}
        &\left
        \|\frac{1}{\tau'}
        \sum_{s=1}^{\tau'} (\bm{W}_s^\top \bm{W}_s -J)\right\| 
        =  \left \|\frac{1}{\tau'}
        \sum_{s=1}^{\tau'} (\bm{W}_s^\top \bm{W}_s -\bm{W}_s^\top J)\right\| \\
        &= \left \|\frac{1}{\tau'}
        \sum_{s=1}^{\tau'} \bm{W}_s^\top (\bm{W}_s - J)\right\| 
        \le \max_s \left \| \bm{W}_s^\top (\overline{\bm{W}} -J) \right\| \\
        &\le  \max_s \left \| \bm{W}_s^\top  \right\|
        \cdot \|\overline{\bm{W}} -J \|
        \le  \|\overline{\bm{W}} -J \|.
    \end{align*}  
    Moreover, by the choice of $\tau'$, $\overline{\bm{W}}$ is  an irreducible matrix, and thus by Perron-Frobenius theorem, 
    $\overline{\bm{W}}$ has the following eigenvalues:
    \[
    1 = \lambda_1(\overline{\bm{W}})>\lambda_2(\overline{\bm{W}})>\dots > \lambda_m(\overline{\bm{W}}) > -1,
    \]
    and the eigenvector corresponding to the leading eigenvalue is the all-one vector $\bm{1}$.
    On the other hand, $J$ has only 1 non-zero eigenvalue whose eigenvector is also  the all-one vector $\bm{1}$.
    Hence, 
    \[
    \|\overline{\bm{W}} -J \| = \max(|\lambda_2(\overline{\bm{W}})|, |\lambda_m(\overline{\bm{W}})|)<1,
    \]
    and this implies that 
    \[
    \rho_{\tau'} \le \rho^* \le (1-\epsilon) + \epsilon \|\overline{\bm{W}} -J \| < 1.
    \]
    Next we prove that there exists $C>0$ such that \[\Pr[\tau' > Cm]\le e^{-\Omega(m)},\]
    from which the theorem follows.
    By assumption, there is a constant $c>0$ such that for each $e\in E$ and every iteration $k$, $e\notin E_k$ occurs with probability at most $1-c$.
    Then $e\notin E_j$ for any of $j=1,\dots, t$ iterations occurs with probability at most $(1-c)^{t}$.
    By a union bound over $e\in E$, 
    \[
    \Pr[\tau' > t] \le m^2 \cdot (1-c)^t \le m^2 \cdot \exp{(-ct)}.
    \]
    By letting $t=Cm/c$ with $C$ sufficiently large, 
    the right-hand side is at most $e^{-\Omega(m)}$.
\end{proof}




\if\thisismainpaper0

\end{document}